\def\<{\langle}
\def\>{\rangle}
\def\Lout{L_{\text{out}}}
\newcommand{\normal}{\mathcal{N}}
\newcommand{\reals}{\mathbb{R}}
\newcommand{\iid}{{\rm i.i.d.}}
\newcommand{\bzero}{\boldsymbol{0}}
\newcommand{\beq}{\begin{equation}}
\newcommand{\eeq}{\end{equation}}
\newcommand{\sign}{\textrm{\sign}}
\newcommand{\E}{\mathbb{E}}
\newcommand{\tB}{\widetilde{B}}
\newcommand{\tZ}{\widetilde{Z}}
\newcommand{\Var}{\mathrm{Var}}
\newcommand{\Cov}{\mathrm{Cov}}
\newcommand{\mb}{\mathbb}
\newcommand\Tau{\mathrm{T}}
\newcommand\Mu{\mathrm{M}}
\newcommand\Tr{\mathrm{Tr}}
\DeclareMathOperator*{\argmax}{argmax}
\DeclareMathOperator*{\argmin}{argmin}
\newcommand{\hB}{\widehat{B}}
\newcommand{\hbeta}{\hat{\beta}}
\newcommand{\hR}{\widehat{R}}
\begin{document}

\title{Mixed Regression via Approximate Message Passing}

\author{\name Nelvin Tan \email tcnt2@cam.ac.uk \\
       \addr Department of Engineering, 
\       University of Cambridge
       \AND
       \name Ramji Venkataramanan \email rv285@cam.ac.uk \\
       \addr Department of Engineering, \ 
       University of Cambridge
       }

\editor{TOFILL}

\maketitle

\begin{abstract}
    We study the problem  of regression in a generalized linear model (GLM) with multiple signals  and latent variables. This model, which we call a matrix GLM, covers many widely studied problems in statistical learning, including mixed linear regression, max-affine regression, and mixture-of-experts. In mixed linear regression, each observation comes from one of $L$ signal vectors (regressors), but we do not know which one; in max-affine regression, each observation comes from the maximum of $L$ affine functions, each defined via a different signal vector.  The goal in all these problems is to estimate the signals, and possibly some of the latent variables, from the observations. We propose a novel approximate message passing (AMP) algorithm for estimation in a matrix GLM and rigorously characterize its performance in the high-dimensional limit. This characterization is in terms of a state evolution recursion, which allows us to precisely compute performance measures such as the  asymptotic mean-squared error. The state evolution characterization can be used to tailor the AMP algorithm to take advantage of any structural information known about the signals. Using state evolution, we derive an optimal choice of AMP `denoising' functions that minimizes the estimation error in each iteration.
    
    The theoretical results are validated by numerical simulations  for mixed linear regression, max-affine regression, and mixture-of-experts.      For max-affine regression, we propose an algorithm that combines AMP with expectation-maximization to estimate intercepts of the model along with the signals.   The numerical results  show that AMP significantly outperforms other estimators for mixed linear regression and  max-affine regression in most parameter regimes.
\end{abstract}

\begin{keywords}
    Approximate Message Passing, Mixed Linear Regression, Max-Affine Regression, Mixture-of-Experts, Expectation-Maximization
\end{keywords}

\section{Introduction}

We study the problem  of regression in a generalized linear model with multiple signals (regressors)  and latent variables.  Specifically, consider $L$ signal vectors 
$\beta^{(1)},\dots,$ $\beta^{(L)} \in \reals^p$, and define the signal matrix $B := (\beta^{(1)},\dots,\beta^{(L)}) \in \reals^{p \times L}$. Then, the goal is to estimate $B$ from an observed matrix   $Y := [Y_1,\dots,Y_n]^\top\in\mb{R}^{n\times \Lout}$, whose $i$th row $Y_i \in\mb{R}^{\Lout}$ is generated as:
\begin{align}
    Y_i&=q(B^\top X_i \, , \, \Psi_i),
    \quad i\in[n]. \label{eq:matrix-GLM}
\end{align}
 Here $X_i \in  \reals^p$ is the $i$th feature vector, $\Psi_i \in \reals^{L_\Psi}$ is a vector of unobserved auxiliary variables, and $q:\mb{R}^L \times\mb{R}^{L_\Psi}\rightarrow\mb{R}^{\Lout}$ is a known function.
We refer to the model \eqref{eq:matrix-GLM} as the matrix generalized linear model, or \emph{matrix GLM}.   As we show below, the matrix  GLM covers many widely studied regression models including mixed linear regression, max-affine regression, mixed GLMs, and mixture-of-experts. 

\subsection{Mixed Linear Regression}

In this model, we wish to estimate  $L$ signal vectors from \emph{unlabeled} observations of each. Specifically, the components of the  observed vector $
Y:= (Y_1, \ldots, Y_n)^{\top}$ are generated as:
\begin{align}
    Y_i
    &=\langle X_i,\beta^{(1)}\rangle c_{i1}+\dots+\langle X_i,\beta^{(L)}\rangle c_{iL}\, + \, \epsilon_i,  \ \ i \in [n]. \label{eq:MLR_model}
\end{align}
Here  $\epsilon_i$ is a noise variable, and $c_{i1}, \ldots, c_{iL} \in \{ 0,1 \}$ are binary-valued latent variables such that 
$\sum_{l=1}^L c_{il}=1$, for $i \in [n]$. The notation $\< \cdot , \cdot \>$ denotes the Euclidean inner product and $[n]:=\{1,\dots,n\}$. In words, each observation comes from exactly one of the $L$ signal vectors, but we do not know which one. 
The mixed linear regression (MLR) model in \eqref{eq:MLR_model} is a special case of the matrix GLM in \eqref{eq:matrix-GLM}, with the rows of the auxiliary matrix given by $\Psi_i=(c_{i,1},\dots,c_{i,L},\epsilon_i)$,  for $i \in [n]$.

The case of $L=1$ is standard linear regression, which implicitly assumes a homogeneous population, i.e., a single regression vector captures the population characteristics of the entire sample. However,  this assumption may not be realistic in some situations as the sample may contain several sub-populations. Standard linear regression may provide biased estimates in such situations when the population heterogeneity is unobserved. The MLR model is more flexible as it allows for differences in regressors across unobserved sub-populations. MLR has been used for analyzing heterogeneous data in a variety of fields including biology, physics, and economics \citep{Mcl04,Gru07,Li19,Dev20}. 

In the MLR model \eqref{eq:MLR_model}, a natural approach for estimating $\beta^{(1)},\dots,\beta^{(L)}$ from $\{X_i,Y_i\}_{i=1}^n$ is via the global least-squares estimator given by:
\begin{align}
    & \widehat{\beta}^{(1)},\dots,\widehat{\beta}^{(L)}
    =\argmin_{\substack{\beta^{(1)},\dots,\beta^{(L)} \in\mb{R}^p \\ c_1,\dots,c_L\in\{0,1\}^n \\ \sum_{l=1}^L c_{il}=1, \, i \in [n]}}
    \sum_{i=1}^n\bigg(Y_i-\sum_{l=1}^L\langle X_i,\beta^{(l)}\rangle \, c_{il}\bigg)^2.
    \label{eq:globalLS}
\end{align}
However, this optimization problem is non-convex, and  computing the global minimum is known to be NP-hard \citep{Yi14}. A range of alternative approaches has been proposed including estimators based on: Bayesian methods \citep{Vie02}, spectral methods \citep{Cha13, Yi14}; expectation-maximization \citep{Far10, Sta10, Zha20}; alternating minimization \citep{Yi14, She19, Gho20};   convex relaxation \citep{Che14};  moment descent methods \citep{Li18,Che20}; and tractable non-convex objective functions \citep{Zho16,Bar22}. Most of these techniques are generic, and while some can incorporate certain constraints like sparsity, they are not well-equipped to exploit specific structural information about $\beta^{(1)},\dots,\beta^{(L)}$, such as   a known prior on the signals. Moreover, these methods are sub-optimal with respect to sample complexity: for accurate recovery they require the number of observations $n$ to be at least of order $p \log p$ \citep{Yi14,Li18,Che20}. In contrast, here we consider the high-dimensional regime where $n$ is proportional to $p$ and provide \emph{exact} asymptotics for the performance of the proposed estimator.

\subsection{Max-Affine Regression}

In the max-affine regression (MAR) model, we have
\begin{align}
    Y_i
    &=\max\big\{\langle X_i,\beta^{(1)}\rangle+b_{1},\dots,\langle X_i,\beta^{(L)}\rangle+b_{L}\big\}+\epsilon_i, \ \ i \in [n]. \label{eq:max-affine_model}
\end{align}
Here $b_1,\dots,b_L\in\mb{R}$ are the intercepts (typically unknown), and $\epsilon_i$ is a zero-mean noise variable that is independent of $X_i$. In words, each  observation comes from the maximum of $L$ affine functions, each defined via a different signal vector.

When $L=1$ and $b_1=0$, the model \eqref{eq:max-affine_model} corresponds to standard linear regression. When $L=2$ and $\beta^{(1)}=-\beta^{(2)}=\beta$ along with $b_1=b_2=0$, then \eqref{eq:max-affine_model} reduces to $Y_i=|\langle X_i,\beta\rangle|+\epsilon$. This  is the widely studied phase retrieval problem \citep{Net13,Can15}, which arises in applications such as scientific imaging  \citep{Fog16}. For general $L$, the function $x\mapsto\max_{l\in\{1,\dots,L\}}\{\langle x,\beta_l\rangle+b_l\}$ is always convex  and thus, estimation under model \eqref{eq:max-affine_model} can be used to fit convex functions to the observed data. Indeed, the MAR model serves as a parametric approximation to the non-parametric convex regression model
\begin{align}
    Y_i=\varphi(X_i)+\epsilon_i,   \ \ i \in [n] \label{eq:convex_reg}
\end{align}
where $\varphi:\mb{R}^p\rightarrow\mb{R}$ is an unknown convex function \citep{Bal15,Gho22}. Unfortunately, convex regression suffers from the curse of dimensionality unless $p$ is small \citep{Gun13}.  Since convex functions can be approximated to arbitrary accuracy by maxima of affine functions, it is reasonable to simplify the problem by considering only those convex functions that can be written as a maximum of a fixed number of affine functions. 
This assumption directly leads to the MAR model \eqref{eq:max-affine_model}, which has been studied  as a tractable alternative to  the non-parametric convex regression model \eqref{eq:convex_reg} in applications where $p$ is large, such as data in economics, finance and operations research \citep{Bal16}. MAR  can also be used as a tractable model for the problem of estimating convex sets from support function measurements \citep{Soh21}, which arises in  tomography applications \citep{Pri90,Gre02}.

To write the MAR model  as an instance of the matrix GLM \eqref{eq:matrix-GLM}, let us concisely denote the unknown parameters by $\beta^{(l)}_{\text{ma}}= \begin{bmatrix} \beta^{(l)} \\ b_{l} \end{bmatrix}\in\mb{R}^{p+1}$ for $l\in[L]$, and the observations by $\big(X_i^{({\text{ma}})},y_i\big)$ for $i\in[n]$, where $X_i^{({\text{ma}})}= \begin{bmatrix} X_i \\  1 \end{bmatrix}\in\mb{R}^{p+1}$ are the augmented features. Under the augmented features and signals, the model \eqref{eq:max-affine_model} becomes
\begin{align}
    Y_i=\max_{l\in[L]} \, \left\{\langle X_i^{({\text{ma}})} ,\beta^{(l)}_{{\text{ma}}}\rangle \right\} + \epsilon_i,   \ \ i \in [n],
    \label{eq:MAR_model_concise}
\end{align}
which is of the form in \eqref{eq:matrix-GLM}.
A natural approach for estimating $\beta^{(1)}_{\text{ma}},\dots,\beta^{(L)}_{\text{ma}}$ is the least squares estimator, defined as
\begin{align}
\widehat{\beta}^{(1)}_{{\text{ma}}},\dots,\widehat{\beta}^{(L)}_{{\text{ma}}} 
= \argmin_{\beta^{(1)}_{{\text{ma}}},\dots,\beta^{(L)}_{{\text{ma}}}\in\mb{R}^{p+1}}\sum_{i=1}^n\Big(Y_i-\max_{l\in[L]} \,  \left\{\langle X_i^{(\text{ma})},\beta^{(l)}_{{\text{ma}}}  \rangle\right\} \Big)^2. \label{eq:max_affine_LS}
\end{align}
\citet[Lemma 1]{Gho22} showed that a global minimizer of the least-squares criterion above always exists but will not in general be unique, since any relabelling of the indices (of the signal vectors) of a minimizer will also be a minimizer. Furthermore, the optimization problem in  \eqref{eq:max_affine_LS}  is non-convex, and for a worst-case choice of the design matrix $X=(X_1,\dots,X_n)^\top$, the problem is NP-hard \citep{Gho22}. 

\subsection{ Mixed Generalized Linear Models and Mixture-of-Experts}

A mixed GLM is a generalization of the MLR model \eqref{eq:MLR_model}, where the output function is not necessarily linear. Specifically, for some known function $\breve{q}: \reals^2 \to \reals$, we have 
\begin{align}
    Y_i
    &=\breve{q}( \langle X_i,\beta^{(1)}\rangle c_{i1}+\dots+\langle X_i,\beta^{(L)}\rangle c_{iL}\,  , \, \epsilon_i),  \ \ i \in [n]. \label{eq:MGLM_model}
\end{align}
As before, $\epsilon_i$ is a noise variable, and $c_{i1}, \ldots, c_{iL} \in \{ 0,1 \}$ are binary-valued latent variables such that 
$\sum_{l=1}^L c_{il}=1$, for $i \in [n]$.  The case of $L=1$ is the standard GLM which, with suitable choices for $\breve{q}$ and $\epsilon$, covers  a range of  statistical learning problems including logistic regression, phase retrieval, and one-bit compressed sensing. In all these settings, the mixed GLM model \eqref{eq:MGLM_model} allows the flexibility to account for unlabeled data coming from multiple sub-populations \citep{Kha07,Sed16}.

The mixture-of-experts model, introduced by \cite{Jac91,Jor94}, is a generalization of the mixed GLM, where the probability of selecting each regressor can depend on the feature vector.  In addition to the $L$ regressors $\beta^{(1)},\dots,\beta^{(L)}\in\mb{R}^p$, here we have $L$ gating parameters $w^{(1)},\dots,w^{(L)}\in\mb{R}^p$, using which the observations are generated as follows. For each $i \in [n]$: 
\begin{align}
 Y_i =   \tilde{q}(\langle X_i,\beta^{(l)}\rangle \, , \,  \epsilon_i) \     \ \text{ with probability }  \frac{\exp(\langle X_i,w^{(l)}\rangle)}{\sum_{l'=1}^L\exp(\langle X_i,w^{(l')}\rangle)} \ 
  \text{ for } l \in [L].
  \label{eq:MOE_def}
\end{align}
Here $\tilde{q}:\reals \to \reals$ is a known activation function and $\epsilon_i$ is a noise variable. Mixture-of-experts models and its variants have been widely studied in machine learning \citep{Yuk12,Hua12,Mak19,Mak20} and  applications such as  computer vision \citep{Gro17}, natural language processing \citep{Sha17}, and econometrics \citep{Hua13,Com16}.  
  
To see that the mixture-of-experts model is a special case of the matrix GLM in \eqref{eq:matrix-GLM}, we take the signal matrix to be $B = [\beta^{(1)}, \ldots, \beta^{(L)}, w^{(1)}, \ldots, w^{(L)}]$ and the  auxiliary matrix $\Psi \in \reals^{n \times 2}$ with rows $\Psi_i=(\psi_i,\epsilon_i)$, where $\psi_i\sim_{\iid}\text{Uniform}[0,1]$ for $i \in [n]$ and independent of $\{ \epsilon_i \}_{i \in [n]}$. Then the model   \eqref{eq:MOE_def} can be written as:
\begin{align}
    & Y_i
    =q(B^\top X_i,\Psi_i) \nonumber \\
    &=\sum_{l=1}^L\tilde{q}(\langle X_i,\beta^{(l)}\rangle,  \, \epsilon_i )\mathds{1}
    \bigg\{
    \sum_{l'=1}^{l-1}\frac{\exp(\langle X_i,w^{(l')}\rangle)}{\sum_{l^*=1}^L\exp(\langle X_i,w^{(l^*)}\rangle)}<\psi_i\leq\sum_{l'=1}^{l}\frac{\exp(\langle X_i,w^{(l')}\rangle)}{\sum_{l^*=1}^L\exp(\langle X_i,w^{(l^*)}\rangle)}
    \bigg\}, \label{eq:MOE_in_GLM_form}
\end{align}
where  $\mathds{1}\{ \cdot\}$ is the indicator function. 

\subsection{Approximate Message Passing}

The main contribution of this work is to design and analyze an  Approximate message passing (AMP)  algorithm  for estimation in the matrix GLM model \eqref{eq:matrix-GLM}. We then apply the algorithm to mixed linear regression, max-affine regression, and mixture-of-experts.

Approximate message passing (AMP) is a family of iterative algorithms which can be tailored to take advantage of structural information about the signals and the model, e.g., a known prior on the signal vector or on the proportion of observations that come from each signal. AMP algorithms were first proposed for the standard linear model \citep{Kab03,Don09,Bay11,Krz12}, but have since been applied to a range of statistical problems, including estimation in generalized linear models \citep{Ran11,Sch14,Bar19,Ma19,Sur19,Mai20,Mon21} and low-rank matrix estimation \citep{Des14,Fle18,Kab16,Les17,Mont21,Li23}. In all these settings,  under suitable model assumptions the performance of AMP in the high-dimensional limit is characterized by a succinct deterministic recursion called \emph{state evolution}. The state evolution characterization has been used to show that AMP achieves Bayes-optimal performance for some models \citep{Des14,Don13,Mont21,Bar19}, and a conjecture from statistical physics states that AMP is optimal among polynomial-time algorithms for  a wide range of statistical estimation problems. 

\subsection{Main Contributions} \label{subsec:main_contrib}

We propose an AMP algorithm for the matrix GLM \eqref{eq:matrix-GLM}, under the assumption that the features $\{ X_i \}_{i \in [n]}$ are i.i.d.~Gaussian. Our first technical contribution is a state evolution result for the AMP algorithm (Theorem \ref{thm:GAMP}), which gives a rigorous characterization of its performance in the high-dimensional limit as $n,p \to \infty$ with a fixed ratio $\delta = n/p$, for a constant $\delta >0$. This allows us to compute exact asymptotic formulas for performance measures such as the mean-squared error (MSE) and the normalized correlation between the signals and their estimates. The AMP algorithm uses a pair of `denoising' functions to produce updated signal estimates in each iteration.  The accuracy of these estimates can be tracked using a signal-to-noise ratio defined in terms of the state evolution parameters. Our second contribution (Proposition \ref{prop:optimal_fk}) is to derive an optimal choice of denoising functions that maximizes this signal-to-noise ratio. The optimal choice for one of the these functions depends on the prior on the signals, while the other depends only on the output function $q(\cdot,\cdot)$ in \eqref{eq:matrix-GLM}. 

In Section \ref{sec:sims}, we present numerical simulation results for mixed linear regression, max-affine regression, and mixture-of-experts. The case of max-affine regression requires special attention as the AMP derived for the matrix GLM cannot be directly applied. This is because the matrix GLM AMP and its state evolution analysis is derived assuming that the features are all i.i.d.~Gaussian. However, to write MAR as an instance of the matrix GLM, recall from \eqref{eq:MAR_model_concise} that we use the augmented features $X_i^{({\text{ma}})}= \begin{bmatrix} X_i \\ 1 \end{bmatrix}\in\mb{R}^{p+1}$,  $i \in [n]$, which are not i.i.d. Gaussian due to the last component being 1. We address this by using the original formulation of MAR in \eqref{eq:max-affine_model}, with the intercepts $b_1, \ldots, b_L$ treated as unknown model parameters. We estimate these intercepts via an expectation-maximization (EM) algorithm that uses the AMP iterates to approximate certain intractable quantities. 
This leads to a combined EM-AMP algorithm which is described in Section \ref{sec:max-affine_reg}.  For both mixed linear regression and  max-affine regression, the numerical results show that AMP  significantly outperforms other popular estimators (such as alternating minimization) in most parameter regimes. 

Though the algorithms and results in this paper focus on  estimating the signals $\beta^{(1)}, \ldots$
$\beta^{(L)}$,  they can be often be translated to estimating the latent variables as well. For example, in mixed linear regression, given signal estimates
$\widehat{\beta}^{(1)},\dots,\widehat{\beta}^{(L)}$, the labels can be estimated as $\hat{c}_i = \argmin_{l \in[L]} \,  (Y_i - \langle X_i, \widehat{\beta}^{(l)} \rangle)^2$, for $i \in [n]$.

A preliminary version of this paper was published in the proceedings of the 26th International Conference on Artificial Intelligence and Statistics (AISTATS) 2023 \citep{Tan23}. The focus of the preliminary version was largely on mixed linear regression. In the current paper, in addition to MLR, we provide results for max-affine regression and mixture-of experts, including the novel EM-AMP algorithm for MAR.

\paragraph{Technical Ideas.} The state evolution performance characterization in Theorem \ref{thm:GAMP} is proved using a change of variables that maps the proposed  algorithm to an abstract AMP recursion with matrix-valued iterates. A state evolution characterization for this abstract AMP was established by \cite{Jav13}; this result is translated via the change of variables to obtain the state evolution characterization for the proposed AMP. 

Our combined  EM-AMP algorithm for max-affine regression is inspired by the work of \cite{Vil13}, who used a similar approach for the problem of sparse linear regression with unknown parameters in the signal prior.

Though our AMP algorithm and its analysis  assume i.i.d.~Gaussian  features, we expect that they can be extended to a much broader class of i.i.d.~designs using the recent universality results of \cite{Wan22}. Another exciting direction for future work is to generalize the  AMP algorithm and its state evolution to mixed regression models with rotationally invariant design matrices. This can be done via a  reduction to an abstract AMP  recursion for rotationally invariant matrices, similar to the ones studied in \cite{Fan22} and \cite{Zho21}.

\subsection{Other related work}

\paragraph{Mixtures of linear and generalized linear models.}  
The special case of symmetric mixed linear regression where $\beta^{(1)} = - \beta^{(2)}$ has been studied in many recent works. We note that symmetric MLR is a version of the phase retrieval problem \citep{Net13,Can15,Fog16}.  \cite{Bal17} and \cite{Klu19} obtained statistical guarantees on the performance of the EM algorithm for a class of problems, including symmetric MLR. Variants of the EM algorithm for symmetric MLR in the high-dimensional setting (with sparse signals) were analyzed by \cite{Wan15},\cite{Yi15}, and \cite{Zhu17}. \cite{Fan18} obtained minimax lower bounds for a class of computationally feasible algorithms for symmetric MLR. 

\cite{Kon20} studied MLR as a canonical example of meta-learning. They consider the setting where the number of signals ($L$) is large,  and derive conditions under which a large number of signals with a few observations can compensate for the lack of signals with abundantly many observations. The case of MLR with sparse signals was studied by \cite{Kri19} and \cite{Pal21}, and  the gap between statistical and computational performance limits for sparse MLR was recently characterized by \cite{Arp23}. \cite{Pal22} studied the prediction error of MLR in the non-realizable setting, where no generative model is assumed for the data. 

The convergence rate of maximum-likelihood estimation for the parameters of a  mixed GLM was derived by \cite{Ho22}.  \cite{Cha21} analyzed the performance of a class of iterative algorithms (not including AMP) for mixed GLMs, providing  a sharp characterization of the  per-iteration error with sample-splitting in the regime $n \sim p\, \text{polylog}(p)$, assuming  a Gaussian design and a random initialization. Spectral estimators for mixed GLMs were studied in the recent work of \cite{Zha22}, which characterizes their asymptotic performance for Gaussian designs and independent signals.

\paragraph{Max-affine regression.}
 For the non-parametric convex regression model in \eqref{eq:convex_reg}, the least squares estimator is
 $   \hat{\varphi}^{(\text{ls})}\in
    \argmin_{\varphi}\sum_{i=1}^n(Y_i-\varphi(X_i))^2$,
where the minimization is over all convex functions $\varphi$. This least-squares estimator can be computed by solving a quadratic program. Theoretical properties of this estimator and algorithms to compute it were studied by \cite{Sei11}, \cite{Lim12} and  \cite{Maz19}.
For the MAR model \eqref{eq:max-affine_model}, several approaches for signal estimation have been proposed,  including alternating minimization \citep{Mag09, Gho22}, convex adaptive partitioning \citep{Han13}, and adaptive max-affine partitioning \citep{Bal16}. Among them, theoretical guarantees have been established only  for alternating minimization; these guarantees are in the regime where $n$ is at least of order $p\log(n/p)$ \citep{Gho22}. In contrast, in this paper we consider the high-dimensional regime where $n$ is proportional to $p$ as $n \to \infty$.

\section{Preliminaries} \label{sec:prelim}

\paragraph{Notation.} All vectors (even rows of matrices) are treated as column vectors unless otherwise stated. Matrices are denoted by upper case letters, and given a matrix $A$, we write $A_i$ for its $i$th row. The notation  $M \succeq 0$ denotes that the square matrix $M$ is positive semidefinite. We write $I_p $  for the  $p \times p$ identity matrix.
For $r\in[1,\infty)$, we write $\|x\|_r$ for the $\ell_r$-norm of $x=(x_1,\dots,x_n)\in\mb{R}^n$, so that $\|x\|_r=\big(\sum_{i=1}^n|x_i|^r\big)^{1/r}$.  Given random variables $U,V$, we write $U \stackrel{d}{=} V$ to denote equality in distribution. 

\paragraph{Complete convergence.} The asymptotic results in this paper are stated in terms of \textit{complete convergence} \citep{Hsu47}, \citep[Sec. 1.1]{Fen21}. This is a stronger mode of stochastic convergence than almost sure convergence, and is denoted  using the symbol $\stackrel{c}{\rightarrow}$.  Let $\{X_n\}$ be a sequence of random elements taking values in a Euclidean space $E$. We say that $X_n$ converges completely to a deterministic limit $x\in E$, and write $X_n\stackrel{c}{\rightarrow}x$, if $Y_n\rightarrow x$ almost surely for any sequence of $E$-valued random elements $\{Y_n\}$ with $Y_n\stackrel{d}{=}X_n$  for all $n$.
 
\paragraph{Wasserstein distances.} For $D \in \mathbb{N}$, let $\mathcal{P}_D(r)$ be the set of all Borel probability measures on $\reals^D$ with finite $r$th-moment. That is, any $P \in \mathcal{P}_D(r)$ satisfies $\int_{\mb{R}^D}\|x\|_2^rdP(x)<\infty$. For $P, Q \in\mathcal{P}_D(r)$, the \textit{r-Wasserstein distance} between $P$ and $Q$ is defined by $d_r(P,Q)=\inf_{(X,Y)}\E[\|X-Y\|_2^r]^{1/r}$, where the infimum is taken over all pairs of random vectors $(X,Y)$ defined on a common probability space with $X\sim P$ and $Y\sim Q$.

\subsection{Model assumptions} \label{subsec:prelim_model} In  the model \eqref{eq:matrix-GLM}, each feature vector $X_i \in \reals^p$ is assumed to have independent Gaussian entries with zero mean and variance $1/n$, i.e., $X_i \sim_{\iid} \normal(0, I_p/n)$. The $n \times p$ design matrix $X$ is formed by stacking the sensing vectors $X_1, \ldots, X_n$, i.e., $X = [X_1, \ldots, X_n]^{\top}$. Similarly, the auxiliary variable matrix $\Psi \in \reals^{n \times L_{\Psi}}$ is defined as $\Psi =[\Psi_1, \ldots, \Psi_n]^\top$.
The  design matrix $X$ is independent of both the signal matrix $B = (\beta^{(1)},\dots,\beta^{(L)}) \in \reals^{p \times L}$ and the auxiliary variable matrix $\Psi \in \reals^{n \times L_{\Psi}}$. 

As $p\to\infty$, we assume that $n/p = \delta$, for some constant $\delta > 0$. As $p \to \infty$, the empirical distributions of the  rows of the signal matrix and the auxiliary variable matrix are assumed to converge in Wasserstein distance to well-defined limits. More precisely, for some $r \in [2, \infty)$,  there exist random variables $\bar{B}\sim P_{\bar{B}}$ (where $\bar{B}\in\mb{R}^{L}$) and $\bar{\Psi}\sim P_{\bar{\Psi}}$ (where $\bar{\Psi}\in\mb{R}^{L_\Psi}$) with $\E[\bar{B}^\top\bar{B}]>0$ and $\E\big[\sum_{l=1}^{L}|\bar{B}_l|^r\big],\E\big[\sum_{l=1}^{L_{\Psi}}|\bar{\Psi}_l|^r\big]<\infty$, such that writing $\nu_p(B)$ and $\nu_n(\Psi)$ for the empirical distributions of the rows of $B$ and $\Psi$ respectively, we have $d_r(\nu_p(B),P_{\bar{B}})\stackrel{c}{\rightarrow}0$ and $d_r(\nu_n(\Psi),P_{\bar{\Psi}})\stackrel{c}{\rightarrow}0$.

\section{AMP for the Matrix GLM} \label{sec:AMP_main}

Consider the task of estimating the signal matrix $B$ given $\{ X_i,Y_i \}_{i \in [n]}$, generated according to \eqref{eq:matrix-GLM}.

\paragraph{Algorithm.} In each iteration $k \ge 1$, the AMP algorithm iteratively produces estimates $\hB^k$ and $\Theta^k$ of $B\in\mb{R}^{p\times L}$ and $\Theta:=XB\in\mb{R}^{n\times L}$, respectively. Starting with an initializer $\hB^0\in\mb{R}^{p\times L}$ and defining $\hR^{-1} := 0\in\mb{R}^{n\times L}$, for $k \ge 0$ we compute:
\begin{align}
\begin{split}
        & \Theta^k=X\hB^k-\hR^{k-1}(F^k)^\top, \quad \hR^k=g_k(\Theta^k,Y), \\ 
        & B^{k+1}=X^\top\hR^k-\hB^k (C^k)^\top, \quad \hB^{k+1}=f_{k+1}(B^{k+1}).
\end{split}
\label{eq:GAMP}
\end{align}
Here  the functions $g_k:\mb{R}^{L}\times\mb{R}^{\Lout}\rightarrow\mb{R}^{L}$ and  $f_{k+1}:\mb{R}^{L}\rightarrow\mb{R}^{L}$ act row-wise on their matrix inputs, and  the matrices  $C^k, F^{k+1} \in \reals^{L \times L}$ are defined as
\begin{align}
    C^k= \frac{1}{n}\sum_{i=1}^ng_k'(\Theta_i^k,Y_i), \   \ 
    F^{k+1}=\frac{1}{n}\sum_{j=1}^pf_{k+1}'(B_j^{k+1}), 
    \label{eq:CkF_k1_def}
\end{align}
where $g_k', f_{k+1}' \in \reals^{L \times L}$ denote the Jacobians of $g_k, f_{k+1}$, respectively, with respect to their first arguments.   We note that the time complexity of each iteration of \eqref{eq:GAMP} is $\mathcal{O}(npL)$.

\paragraph{State evolution.} The ``memory'' terms $-\hR^{k-1}(F^k)^\top$ and $-\hB^k(C^k)^\top$ in 
\eqref{eq:GAMP} play a crucial role in debiasing the iterates 
$\Theta^k$ and $B^{k+1}$, ensuring that their  joint empirical distributions are accurately captured by state evolution in the high-dimensional limit.  Theorem \ref{thm:GAMP} below shows that for each $k \ge 1$,  the empirical distribution of the rows of $B^k$ converges to the distribution of $\Mu^{k}_B\bar{B}+G^k_B \in \reals^L$, where
$G^k_B\sim \normal(0,\Tau^k_B)$ is independent of $\bar{B}$,  the random variable representing the limiting distribution of the rows of the signal matrix $B$. The deterministic matrices $\Mu^{k}_B, \Tau^k_B \in \reals^{L\times L}$ are recursively defined below. The result implies that the empirical distribution of the rows of $\hB^k$ converges to the distribution of $f_k(\Mu^{k}_B\bar{B}+G^k_B)$. Thus $f_k$ can be viewed as a denoising function that can be tailored to take advantage of  the prior on $\bar{B}$. Theorem \ref{thm:GAMP} also shows that the empirical distribution of the rows of $\Theta^k$ converges to the distribution of $\Mu_\Theta^k Z+G^k_\Theta \in \reals^L$, where $Z \sim \normal(0,  \frac{1}{\delta}\E[\bar{B}\bar{B}^\top])$ and $G^k_\Theta\sim \normal(0,\Tau^k_\Theta)$ are independent. 

We now describe the state evolution recursion defining the matrices $\Mu^{k}_B, \Tau^k_B, \Mu_\Theta^k, \Tau^k_\Theta \in \reals^{L\times L}$. Recalling  that the observation $Y$  is generated via the function $q$ according to \eqref{eq:matrix-GLM}, it is convenient to rewrite $g_k$ in \eqref{eq:GAMP} in terms of another function $h_k: \reals^L \times \reals^L \times \reals^{L_{\Psi}} \to \reals^L$ defined as:
\begin{align}
    h_k(z, u, v) := g_k(u, \, q(z, v)).
    \label{eq:hk_def}
\end{align}
Then,  for $k \ge 0$,  given $\Sigma^k \in \reals^{2L \times 2L}$, take $\begin{bmatrix} Z \\ Z^k \end{bmatrix} \sim \normal(0,\Sigma^k)$ to be independent of $\bar{\Psi}\sim P_{\bar{\Psi}}$ and compute:
\begin{align}
    &\Mu^{k+1}_{B} =\E[\partial_Z h_k(Z,Z^k,\bar{\Psi})], 
    \label{eq:SE_Mk1B} \\ 
    & \Tau^{k+1}_{B} =\E[h_k(Z,Z^k,\bar{\Psi})h_k(Z,Z^k,\bar{\Psi})^\top], \label{eq:SE_Tk1B} \\
    &\Sigma^{k+1} =
    \begin{bmatrix}
\Sigma_{(11)}^{k+1} & \Sigma_{(12)}^{k+1} \\ \Sigma_{(21)}^{k+1} & \Sigma_{(22)}^{k+1}
    \end{bmatrix},
    \label{eq:SE_Sigk1}
\end{align}
where the four $L \times L$ matrices constituting $\Sigma^{k+1} \in \reals^{2L \times 2L}$ are given by:
\begin{equation}
    \begin{split}
          &  \Sigma_{(11)}^{k+1} = \frac{1}{\delta} \E\big[\bar{B}\bar{B}^\top\big],  
      \\
        & \Sigma_{(12)}^{k+1} = \left( \Sigma_{(21)}^{k+1} \right)^{\top} = \frac{1}{\delta} \E\big[\bar{B}f_{k+1}(\Mu^{k+1}_{B}\bar{B}+G^{k+1}_B)^\top\big], \\
        & \Sigma_{(22)}^{k+1} =
        \frac{1}{\delta}  \E\big[f_{k+1}(\Mu^{k+1}_{B}\bar{B}+G^{k+1}_B)f_{k+1}(\Mu^{k+1}_{B}\bar{B}+G^{k+1}_B)^\top\big].
         \label{eq:Sig_comps}
    \end{split}
\end{equation}
Here we take  $G^{k+1}_B\sim \normal(0,\Tau^{k+1}_{B})$ to be independent of $\bar{B}\sim P_{\bar{B}}$. Note that $\partial_Z h_k$ denotes the partial derivative (Jacobian) of $h_k$ with respect to its first argument $Z\in\mb{R}^{L}$, so it is an $L\times L$ matrix.  The state evolution recursion \eqref{eq:SE_Mk1B}-\eqref{eq:SE_Sigk1} is initialized with $\Sigma^0 \in \reals^{2L \times 2L}$ defined below in \eqref{eq:Sig0_def}.

For $\begin{bmatrix} Z \\ Z^k \end{bmatrix} \sim \normal(0,\Sigma^k)$, using standard properties of Gaussian random vectors, we have 
\begin{equation}
    (Z,Z^k,\bar{\Psi})\stackrel{d}{=}(Z,\Mu^{k}_{\Theta}Z+G^k_\Theta,\bar{\Psi}),
    \label{eq:ZZk_joint}
\end{equation}
where $G_\Theta^k\sim \normal(0,\Tau_\Theta^k)$,
\begin{align}
    & \Mu^{k}_{\Theta}  =\Sigma_{(21)}^k \big(\Sigma_{(11)}^k \big)^{-1}, \label{eq:SE_Mk1Th} \\
    & \Tau^{k}_{\Theta} = \Sigma_{(22)}^k \, - \, \Sigma_{(21)}^k \big(\Sigma_{(11)}^k \big)^{-1} \,  \Sigma_{(12)}^k. \label{eq:SE_TkTh}
\end{align}

\paragraph{Main result.} 
We begin with two assumptions required for the main result.  The first is on the AMP initializer $\hB^0 \in \reals^{p \times L}$, and the second is on the functions $g_k, f_{k+1}$ used to define the AMP in \eqref{eq:GAMP}.

\textbf{(A1)}
There exists $\Sigma^0 \in \reals^{2L \times 2L}$ and $c_0 \in \reals$ such that as $n, p \to \infty$ (with $n/p \to \delta$), we have
\begin{align}
         &  \frac{1}{n}
        \begin{bmatrix}
            B^\top B & B^\top\widehat{B}^0 \\
            (\widehat{B}^0)^\top B & (\widehat{B}^0)^\top\widehat{B}^0
        \end{bmatrix}
        \stackrel{c}{\rightarrow}\Sigma^0,  \label{eq:Sig0_def} \\
        &  \frac{1}{p}\sum_{j=1}^p\sum_{l=1}^{L}|\widehat{B}_{jl}^0|^r
        \stackrel{c}{\rightarrow} c_0. 
\end{align}
Here $r \in [2, \infty)$ is the same as that used for the assumptions on the signal matrix at the end of Section \ref{subsec:prelim_model}. Furthermore, there exists a Lipschitz $F_0:\mb{R}^{L}\rightarrow\mb{R}^{L}$ such that $\frac{1}{p}(\widehat{B}^0)^\top\phi(B)\stackrel{c}{\rightarrow}\E[F_0(\bar{B})\phi(\bar{B})^\top]$ and $\Sigma_{(22)}^0 - \E[F_0(\bar{B})F_0(\bar{B})^\top]$ is positive semi-definite for all Lipschitz $\phi:\mb{R}^{L}\rightarrow\mb{R}^{L}$.

\textbf{(A2)} For $k \ge 0$, the function $f_{k+1}$ is non-constant and Lipschitz on $\mb{R}^{L}$, and $h_k$ defined in \eqref{eq:hk_def} is Lipschitz on $\mb{R}^{2L + L_{\Psi}}$ with $P_{\bar{\Psi}}(\{v:(z,u)\rightarrow h_k(z,u,v)\text{ is a non-constant}\})>0$.  Furthermore, $f_{k+1}'$ is continuous Lebesgue almost everywhere, and writing $\mathcal{D}_k\subseteq\mb{R}^{L +L_{\Psi}}$ for the set of discontinuities of $g_k'$, we have $\mb{P}[(Z^k,\bar{Y})\in \mathcal{D}_k]=0$.

Assumptions \textbf{(A1)} and \textbf{(A2)} are similar to those required for AMP initialization in (non-matrix) generalized linear models \citep[Section 4]{Fen21}. Moreover, \textbf{(A1)} is implied by the assumptions on the signal matrix if an initialization $\widehat{B}^0$ is chosen to be a scaled version of the all ones matrix.

The result is stated in terms of \emph{pseudo-Lipschitz} test functions. Let $\text{PL}_m(r, C)$ be the set of functions $\phi:\mb{R}^m\rightarrow\mb{R}$ such that $|\phi(x)-\phi(y)|\leq C (1+\|x\|_2^{r-1}+\|y\|_2^{r-1})\|x-y\|_2$ for all $x,y\in\mb{R}^m$. A function $\phi \in \text{PL}_m(r, C)$ is called pseudo-Lipschitz of order $r$. 

\begin{theorem} \label{thm:GAMP}
Consider the AMP in \eqref{eq:GAMP} for the matrix GLM model in \eqref{eq:matrix-GLM}. Suppose that the model assumptions in Section \ref{subsec:prelim_model} as well as  \textbf{(A1)} and \textbf{(A2)} are satisfied, and that $\Tau^{1}_{B}$ is positive definite. Then for each $k \ge 0$, we have
\begin{align}
    &\sup_{\phi\in\textup{PL}_{2L}(r,1)}\Big|\frac{1}{p}\sum_{j=1}^{p}\phi(B_j^{k+1},B_j)-\E[\phi(\Mu^{k+1}_{B}\bar{B}+G^{k+1}_B,\bar{B})]\Big|\stackrel{c}{\rightarrow}0 ,\label{eq:matrix-GAMP_SE_result1} \\
    &\sup_{\phi\in\textup{PL}_{2L + L_{\Psi}}(r,1)}\Big|\frac{1}{n}\sum_{i=1}^{n}\phi(\Theta_i^{k},\Theta_i,\Psi_i)-\E[\phi(\Mu^{k}_{\Theta} \, Z+G^{k}_\Theta,Z,\bar{\Psi})]\Big|\stackrel{c}{\rightarrow}0, \label{eq:matrix-GAMP_SE_result2}
\end{align}
as $n,p\rightarrow\infty$ with $n/p\rightarrow\delta$, where $\Theta_i=B^\top X_i$ for $1\leq i\leq n$. In the above, $G^{k+1}_B\sim \normal(0,\Tau^{k+1}_{B})$ is independent of $\bar{B}$, and $G^{k}_\Theta\sim \normal(0,\Tau^{k}_{\Theta})$ is independent of $(Z,\bar{\Psi})$.
\end{theorem}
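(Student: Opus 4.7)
The plan is to follow the reduction sketched by the authors: rewrite \eqref{eq:GAMP} as an abstract AMP recursion with matrix-valued iterates on a Gaussian matrix, invoke the state evolution theorem of \cite{Jav13}, and decode the conclusion back to the original iterates. The key observation is that the two-step GAMP recursion can be rewritten as a single AMP recursion by symmetrizing the design, forming the $(n+p)\times(n+p)$ matrix $\tilde{X}=\begin{bmatrix}0 & X^\top \\ X & 0\end{bmatrix}$, and stacking the $B$-space and $\Theta$-space iterates into a single $(n+p)\times L$ iterate; the denoising is then applied row-partitioned, using $f_{k+1}$ on the top $p$ rows and $g_k$ (equivalently $h_k$ after absorbing $Y$ into the row-dependent randomness via \eqref{eq:hk_def}) on the bottom $n$ rows, on alternating steps. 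The memory correction in this abstract AMP becomes a $2L\times 2L$ block-diagonal matrix built from $C^k$ and $F^{k+1}$, which exactly reproduces the Onsager terms in \eqref{eq:GAMP}.

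First, I would verify the hypotheses required by \citet{Jav13}. Assumption \textbf{(A2)} supplies the Lipschitz regularity and almost-everywhere differentiability of the stacked denoiser, and assumption \textbf{(A1)} together with the Wasserstein convergence of rows of $B$ to $P_{\bar{B}}$ (from Section~\ref{subsec:prelim_model}) supplies the initial convergence of the empirical Gram matrix of the stacked iterate and a Lipschitz relation $F_0$ capturing the correlation of the initializer with the truth. Positive-definiteness of $\Tau^1_B$ ensures the first-step state-evolution covariance is non-degenerate, which is what allows the Gaussian-conditioning argument underlying Jav13 to be iterated without singular conditioning.

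Next, I would apply Jav13's state-evolution theorem, which produces a pseudo-Lipschitz limit on the empirical joint distribution of rows of the stacked iterate paired with the corresponding rows of the ground truth $B$ and $\Theta=XB$. The top $p$ rows yield \eqref{eq:matrix-GAMP_SE_result1} directly upon identifying the abstract covariance parameters with $\Mu^{k+1}_B$ and $\Tau^{k+1}_B$ defined in \eqref{eq:SE_Mk1B}--\eqref{eq:SE_Tk1B}. For the bottom $n$ rows, the abstract recursion produces a joint Gaussian limit for $(\Theta^k_i,\Theta_i,\Psi_i)$ whose covariance matches $\Sigma^k$ in \eqref{eq:SE_Sigk1}--\eqref{eq:Sig_comps}; applying the orthogonal decomposition \eqref{eq:ZZk_joint} then yields \eqref{eq:matrix-GAMP_SE_result2} with $G^k_\Theta$ independent of $(Z,\bar{\Psi})$.

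The main obstacle I expect is not the symmetrization itself, which is largely bookkeeping, but matching the compact state-evolution form \eqref{eq:SE_Mk1B}--\eqref{eq:Sig_comps} to the parameters naturally produced by the stacked recursion. In particular, showing that the $\Theta$-component of the limit factors cleanly as $\Mu^k_\Theta Z+G^k_\Theta$ with $Z\sim\normal(0,\frac{1}{\delta}\E[\bar{B}\bar{B}^\top])$ representing the rows of $XB$ requires careful tracking of covariance propagation through the Gaussian design matrix and exploits the independence of $X$ from both $B$ and $\Psi$. A secondary technical point is upgrading the mode of convergence typically supplied by \cite{Jav13} to complete convergence as stated in the theorem: this follows by applying the cited result along an arbitrary sequence distributionally equivalent to $(X,B,\Psi,\hB^0)$, as in the definition of $\stackrel{c}{\rightarrow}$, and checking that the pseudo-Lipschitz test-function class is preserved under such substitution.
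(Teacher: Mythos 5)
Your overall strategy---reduce \eqref{eq:GAMP} to the abstract matrix-valued AMP of \cite{Jav13} (whether via explicit symmetrization of $X$ or, as the paper does, via the rectangular form described in \cite[Sec.~6.7]{Fen21}), read off its state evolution, and translate back through the identification of $\Mu^{k+1}_B,\Tau^{k+1}_B,\Sigma^k$ and the decomposition \eqref{eq:ZZk_joint}---is the same as the paper's, and your remarks about verifying \textbf{(A1)}--\textbf{(A2)}, the role of $\Tau^1_B\succ 0$, and upgrading to complete convergence are all consistent with what the paper does.

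However, your reduction as stated glosses over the one step that actually carries the proof, and in a way that would make the abstract theorem inapplicable. You propose a stacked iterate of size $(n+p)\times L$ and say that $Y$ is ``absorbed into the row-dependent randomness'' via \eqref{eq:hk_def}. But $Y_i=q(B^\top X_i,\Psi_i)$ depends on the design through $\Theta_i=B^\top X_i$, so it cannot be treated as exogenous per-row side information; only $\Psi_i$ can. The abstract AMP requires every denoiser to be a function of the current iterate together with randomness independent of $X$. To make $h_k(\Theta_i,\Theta_i^k,\Psi_i)$ such a function, the paper augments the output-side iterate to $\check{\Theta}^k=(\Theta,\Theta^k)\in\reals^{n\times 2L}$ and correspondingly defines $\check f_k(\check B^{k},B)=(B,\ f_{k}(\check B^{k}+B(\Mu^{k}_B)^\top))$, so that multiplying by $X$ regenerates $\Theta=XB$ as the first $L$ columns of the iterate at every step. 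The iterate width must therefore be $2L$, not $L$, and the Onsager matrices are the block forms in \eqref{eq:check_CkFk1_def}---note the zero block in $\check F^{k+1}$ and the fact that the first block of $\check C^k$ is $\E[\partial_Z h_k(Z,Z^k,\bar\Psi)]=\Mu^{k+1}_B$---rather than block-diagonal copies of $C^k$ and $F^{k+1}$. A second, related omission is recentering: the abstract theorem yields mean-zero Gaussian limits for the iterates, whereas $B^{k+1}$ converges to $\Mu^{k+1}_B\bar B+G^{k+1}_B$, which is correlated with $\bar B$; one must pass to $\check B^{k+1}:=B^{k+1}-B(\Mu^{k+1}_B)^\top$ and check that the recentered recursion still has the required AMP form, with the subtracted term accounted for by the first block of $\check C^k$. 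Neither step is hard once identified, but without them the reduction you describe does not type-check, so I would count this as a genuine gap rather than bookkeeping.
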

The proof of the theorem is given in Section \ref{subsec:proof_GAMP}. The result \eqref{eq:matrix-GAMP_SE_result1} is equivalent to the statement that the joint empirical distributions of the rows of $(B^{k+1}, \, B)$ converges completely in $r$-Wasserstein distance to the joint distribution of $(\Mu^{k+1}_{B}\bar{B}+G^{k+1}_B, \, \bar{B})$; see \cite[Corollary 7.21]{Fen21}. An analogous statement holds for \eqref{eq:matrix-GAMP_SE_result2}.

\paragraph{Performance measures.} Theorem \ref{thm:GAMP} allows us to compute the limiting values of performance measures such as the mean-squared error (MSE), and the normalized correlation between each signal and its AMP estimate.  For $k \ge 1$, writing $\hbeta^{(\ell), k}$ for the  $\ell$th column of the AMP iterate $\hB^{k}$, we have  $\hB^{k}  = \begin{bmatrix}
    \hbeta^{(1), k}, \ldots, \hbeta^{(L), k}
\end{bmatrix}$. Note that  $\hbeta^{(\ell), k}$ is the estimate of the signal $\beta^{(\ell)}$ after $k$ iterations,  and define the shorthand $\bar{B}^{k} := \Mu^{k}_{B}\bar{B}+G_{k}^B$.
Then Theorem \ref{thm:GAMP} implies that the  normalized squared correlation between each signal and its AMP estimate after $k$ iterations converges as:
\begin{align}
     \frac{\langle\hat{\beta}^{(\ell),k},\beta^{(\ell)}\rangle^2}{\|\hat{\beta}^{(\ell),k}\|_2^2\|\beta^{(\ell)}\|_2^2}
    \stackrel{c}{\rightarrow}
        \frac{(\E[f_{k, \ell}(\bar{B}^{k}) \bar{B}_\ell ])^2}{\E[ f_{k, \ell}(\bar{B}^{k})^2]\E[\bar{B}_\ell^2]}, \ \ \ell \in [L].
        \label{eq:norm_sq_corr}
\end{align}
Here $f_{k, \ell}$ is the $\ell$th component of the function $f_k: \reals^L \to  \reals^L$, and $\bar{B}_\ell$ is the $\ell$th component of $\bar{B} \in \reals^L$. Similarly, the MSE of the AMP estimate after $k$ iterations converges as:
\begin{align}
    \frac{\|  \beta^{(\ell)} - \hat{\beta}^{(\ell),k}  \|_2^2}{p} \stackrel{c}{\rightarrow} \E\Big[ \big(\bar{B}_\ell - f_{k, \ell}(\bar{B}^{k}) \big)^2 \Big], \ \ \ell \in [L].
\end{align}

\subsection{Choosing the Functions of AMP}

Recalling that the empirical distributions of the rows of $\Theta^k$ and $B^{k+1}$  converge to the laws of $\Mu^{k}_{\Theta} \, Z+G^{k}_\Theta$ and $\Mu^{k+1}_{B}\bar{B}+G^{k+1}_B$, respectively, we  define the random vectors:  
\begin{equation}
\begin{split}
        & \tZ^k := Z \, + \, \left( \Mu^{k}_{\Theta}\right)^{-1} G^{k}_\Theta, \\
        & \tB^{k+1} := \bar{B} \, + \, \left( \Mu^{k+1}_{B}\right)^{-1} G^{k+1}_B.
\end{split}
\label{eq:eff_noise_cov}
\end{equation}
(If the inverse doesn't exist we premultiply by the pseudoinverse.)
Since $G^{k+1}_B\sim \normal(0,\Tau^{k+1}_{B})$ and $G^{k}_\Theta\sim \normal(0,\Tau^{k}_{\Theta})$, the effective noise covariance matrices are:
\begin{align}
\begin{split}
    & \Cov(\tZ^k - Z)  = \left(\Mu^{k}_{\Theta}\right)^{-1} \Tau^{k}_{\Theta}
    \left(\big(\Mu^{k}_{\Theta}\big)^{-1}\right)^\top
     =: N_{\Theta}^k, \\
    & \Cov(\tB^{k+1} - \bar{B}) = \left( \Mu^{k+1}_{B}\right)^{-1}
    \Tau^{k+1}_{B} \left( \big(\Mu^{k+1}_{B}\big)^{-1} \right)^\top =: N_{B}^{k+1}.
\end{split}
\label{eq:eff_cov_def}
\end{align}
From \eqref{eq:SE_TkTh}, we observe that $\Mu^{k}_{\Theta}, \Tau^{k}_{\Theta}$ are both determined by $\Sigma^k$, which in turn is determined by the choice of $f_k$  (from \eqref{eq:Sig_comps}). Similarly, from \eqref{eq:hk_def}--\eqref{eq:SE_Mk1B},  
$\Mu^{k+1}_{B}$, $\Tau^{k+1}_{B}$ are determined by $g_k$. A natural objective is to choose $f_k$ and $g_k$ to minimize the trace of the effective noise covariance matrices $N_{\Theta}^k$ and $N_{B}^{k+1}$ in \eqref{eq:eff_cov_def}.
We can interpret the quantities   $\Tr(N_{\Theta}^k)$ and $\Tr(N_{B}^{k+1})$  as the effective noise variances for estimating $Z, \bar{B}$ from $\tZ^k, \tB^{k+1}$, respectively.
In the special case where there is only one signal, minimizing these effective noise variances is equivalent to maximizing  the scalar signal-to-noise ratios $(\Mu^{k}_{\Theta})^2/\Tau^{k}_{\Theta}$ and $(\Mu^{k+1}_B)^2/\Tau^{k+1}_B$, respectively, which is achieved by the Bayes-optimal AMP  for generalized linear models \citep{Ran11, Fen21}.    
    
Assuming that the signal prior $P_{\bar{B}}$ and the distribution of auxiliary variables $P_{\Psi}$ are known, the following proposition gives optimal choices for $f_k, g_k$.
\begin{proposition} Let $k \ge 1$.  Then:

1) Given $\Mu^{k}_{B}$, $\Tau^{k}_{B}$, the quantity $\Tr(N_{\Theta}^k)$ is minimized when $f_k = f_k^*$, where
\begin{equation}
    f_k^*(s)=\E[\bar{B}\mid \Mu^k_B\bar{B}+G_B^k = s], \label{eq:fk_opt_def}
\end{equation}
where $G_B^k \sim \normal(0, \Tau^{k}_{B})$ and $\bar{B} \sim P_{\bar{B}}$ are independent. 

2) Given $\Mu^{k}_{\Theta}$, $\Tau^{k}_{\Theta}$, the quantity $\Tr(N_B^{k+1})$ is minimized when $g_k =g_k^*$, where 
\begin{align}
    g_k^*(u, \, y)  = & \Cov[Z \mid Z^k=u]^{-1}\big(\E[Z \mid Z^k=u, \bar{Y}=y]  - \, \E[Z \mid Z^k=u] \big). \label{eq:gk_opt_def}
\end{align}
Here $\begin{bmatrix} Z \\  Z^k \end{bmatrix} \sim \normal(0,\Sigma^k)$ and $\bar{Y}=q(Z, \bar{\Psi})$, with $\bar{\Psi} \sim P_{\bar{\Psi}}$ independent of $Z$.
\label{prop:optimal_fk}
\end{proposition}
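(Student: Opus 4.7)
The approach is to prove both parts in the stronger Löwner ordering on the effective noise covariances $N^k_\Theta$ and $N^{k+1}_B$, from which the trace statements follow. Two tools do all the work: a matrix-valued Cauchy–Schwarz inequality for random vectors — namely, $\E[UU^\top] \succeq \E[UV^\top](\E[VV^\top])^{-1}\E[VU^\top]$ for any $\reals^L$-valued $U,V$ with $\E[VV^\top]\succ 0$, proved by expanding $\E[(U - \E[UV^\top](\E[VV^\top])^{-1}V)(\cdot)^\top]\succeq 0$ — and, in Part~2, a multivariate Gaussian Stein identity for the joint distribution of $(Z,Z^k)$.

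For Part~1, set $W := f_k(s)$ with $s := \Mu^k_B \bar{B} + G^k_B$, and abbreviate $A := \Sigma^k_{(11)}$, $M := \Sigma^k_{(21)} = \E[W\bar{B}^\top]/\delta$, $S := \Sigma^k_{(22)} = \E[WW^\top]/\delta$ (note $A$ is independent of $f_k$). Substituting \eqref{eq:SE_Mk1Th}--\eqref{eq:SE_TkTh} into \eqref{eq:eff_cov_def} gives the algebraic identity $N^k_\Theta = A M^{-1} S M^{-T} A - A$. Writing $V := \E[\bar{B}\mid s] = f_k^*(s)$, the tower property yields $\E[W\bar{B}^\top] = \E[WV^\top]$, so matrix Cauchy–Schwarz applied to $W$ and $V$ gives $\E[WW^\top] \succeq \E[WV^\top](\E[VV^\top])^{-1}\E[VW^\top]$, i.e., $S \succeq M(\E[VV^\top]/\delta)^{-1}M^\top$, which rearranges to $M^{-1}SM^{-T} \succeq (\E[VV^\top]/\delta)^{-1}$. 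Conjugating by $A$ yields $N^k_\Theta \succeq A(\E[VV^\top]/\delta)^{-1}A - A$, and this bound is attained at $f_k = f_k^*$, since then $W=V$ forces $M = S = \E[VV^\top]/\delta$.

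Part~2 needs the Stein step to convert $\Mu^{k+1}_B = \E[\partial_Z h_k]$ into a correlation. Set $W := g_k(Z^k,\bar{Y}) = h_k(Z,Z^k,\bar{\Psi})$. Applying the joint Gaussian Stein identity to $(Z,Z^k) \sim \normal(0,\Sigma^k)$ and $\phi(z,u) := h_k(z,u,\bar{\Psi})$ (integrating over $\bar{\Psi}$) produces the block system
\begin{align*}
\E[ZW^\top]  &= \Sigma^k_{(11)}(\Mu^{k+1}_B)^\top + \Sigma^k_{(12)}\E[\partial_u h_k]^\top,\\
\E[Z^kW^\top] &= \Sigma^k_{(21)}(\Mu^{k+1}_B)^\top + \Sigma^k_{(22)}\E[\partial_u h_k]^\top.
\end{align*}
Eliminating $\E[\partial_u h_k]^\top$ between these equations and recognising $\Sigma^k_{(12)}(\Sigma^k_{(22)})^{-1}Z^k = \E[Z\mid Z^k]$ and $C_Z := \Sigma^k_{(11)} - \Sigma^k_{(12)}(\Sigma^k_{(22)})^{-1}\Sigma^k_{(21)} = \Cov(Z\mid Z^k)$ yields $\E[(Z - \E[Z\mid Z^k])W^\top] = C_Z(\Mu^{k+1}_B)^\top$. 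Since $W$ is $\sigma(Z^k,\bar{Y})$-measurable, the tower property rewrites the left side as $\E[\Delta W^\top]$ with $\Delta := \E[Z\mid Z^k,\bar{Y}] - \E[Z\mid Z^k]$, giving $\Mu^{k+1}_B = \E[W\Delta^\top]C_Z^{-1}$ and hence $N^{k+1}_B = C_Z(\E[W\Delta^\top])^{-1}\E[WW^\top](\E[W\Delta^\top])^{-T}C_Z$. A second application of matrix Cauchy–Schwarz, now to $W$ and $\Delta$, gives $\E[WW^\top] \succeq \E[W\Delta^\top](\E[\Delta\Delta^\top])^{-1}\E[\Delta W^\top]$, so $N^{k+1}_B \succeq C_Z(\E[\Delta\Delta^\top])^{-1}C_Z$; equality holds whenever $W$ is an affine function of $\Delta$, which is exactly the case for $g_k = g_k^* = C_Z^{-1}\Delta$.

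The main obstacle is the Stein step of Part~2: justifying Gaussian integration-by-parts for a merely Lipschitz $h_k$ (via the weak-derivative form of Stein's identity, licensed by assumption \textbf{(A2)}) and ensuring that $\Sigma^k_{(22)}$ and $C_Z$ are invertible so the elimination makes sense. Non-degeneracy of these covariances is inherited from $\Tau^1_B \succ 0$ and the non-constancy conditions in \textbf{(A2)} propagated through the state-evolution recursion; any remaining rank-deficient edge cases are absorbed by the pseudoinverse convention noted after \eqref{eq:eff_noise_cov}.
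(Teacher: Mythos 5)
Your proof is correct and takes essentially the same route as the paper's: both parts hinge on the matrix Cauchy--Schwarz inequality of Lemma~\ref{lem:CS_extension} together with a tower-property projection onto $f_k^*(s)=\E[\bar B\mid s]$ in Part~1, and on a multivariate Stein identity (Lemma~\ref{lem:gen_steins}) giving $\Mu^{k+1}_B=\E[g_k (g_k^*)^\top]$ followed by a second Cauchy--Schwarz in Part~2. The only difference is bookkeeping --- you derive the explicit identities $N^k_\Theta = A M^{-1} S M^{-\top}A - A$ and $N^{k+1}_B = C_Z(\E[W\Delta^\top])^{-1}\E[WW^\top](\E[\Delta W^\top])^{-1}C_Z$ so that your lower bounds are manifestly independent of the chosen denoiser (a mild tightening of the paper's argument, whose intermediate bound via $\Gamma^k_\Theta$ formally still depends on $f_k$), and you apply Stein jointly to $(Z,Z^k)$ with block elimination rather than conditionally on $Z^k$; these are cosmetic variants of the same proof.
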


The proof is given in Section \ref{subsec:prop_proof}. 

\section{Numerical Simulations} \label{sec:sims}

In this section, we present numerical results for mixed linear regression (Eq. \eqref{eq:MLR_model}),  max-affine regression (Eq. \eqref{eq:max-affine_model}), and mixture-of experts (Eq. \eqref{eq:MOE_def}). For MLR and MAR, we compare the performance of AMP with other popular estimators.

\subsection{Mixed Linear Regression (Two Signals)}

Consider the MLR model \eqref{eq:MLR_model} with two signals, where 
\begin{align}
    Y_i
    &=\langle X_i,\beta^{(1)}\rangle c_{i} \, + \, \langle X_i,\beta^{(2)}\rangle (1-c_i) \, + \, \epsilon_i, \ \ i \in [n].
    \label{eq:MLR_two_components}
\end{align}
We take $c_i \sim_{\iid} \text{Bernoulli}(\alpha)$ for $\alpha \in (0,1)$, $\epsilon_i \sim_{\iid} \normal(0, \sigma^2)$, and $X_i \sim_{\iid} \normal(0, I_p/n)$, for $i \in [n]$. We set the signal dimension $p=500$ and vary the value of $n$ in our experiments.

The AMP algorithm in \eqref{eq:GAMP} is implemented with $g_k =g_k^*$, the optimal choice given by \eqref{eq:gk_opt_def}. For the function $f_k$, we use the Bayes-optimal $f_k^*$ in \eqref{eq:fk_opt_def} unless stated otherwise. In Appendix \ref{appen:MLR_implementation},  we provide the implementation details, and show how the functions $f_k, g_k$ and  their derivatives can approximated  from the data. 

The performance in all the plots is measured via the normalized squared correlation between the AMP estimate and the signal (see \eqref{eq:norm_sq_corr}). Each point on the plots is obtained from 10 independent runs, where in each run, AMP is executed for 10 iterations. We report the average and error bars at 1 standard deviation of the final iteration.

\paragraph{Gaussian prior.} In Figures \ref{fig:diff_noise}, \ref{fig:diff_corr}, and \ref{fig:diff_prop}, we set the Bernoulli parameter $\alpha=0.7$ and choose the two signals to be jointly Gaussian, with their entries generated as
\begin{align}
    (\beta^{(1)}_j, \beta^{(2)}_j)\sim_{\iid} \normal\left( 
    \begin{bmatrix}
    0 \\
    0
    \end{bmatrix},
    \begin{bmatrix}
    1 & \rho \\
    \rho & 1
    \end{bmatrix}
    \right), \quad j \in [p].
    \label{eq:equal_mean_prior_and_init}
\end{align}
The initializer $\hB^0 \in \reals^{p \times 2}$ is chosen randomly according to the same distribution, independently of the signal.

\begin{figure}[t]
\centering
\subfloat[$\beta^{(1)}$\label{fig:beta1_diff_noise}]{\includegraphics[width=.45\columnwidth]{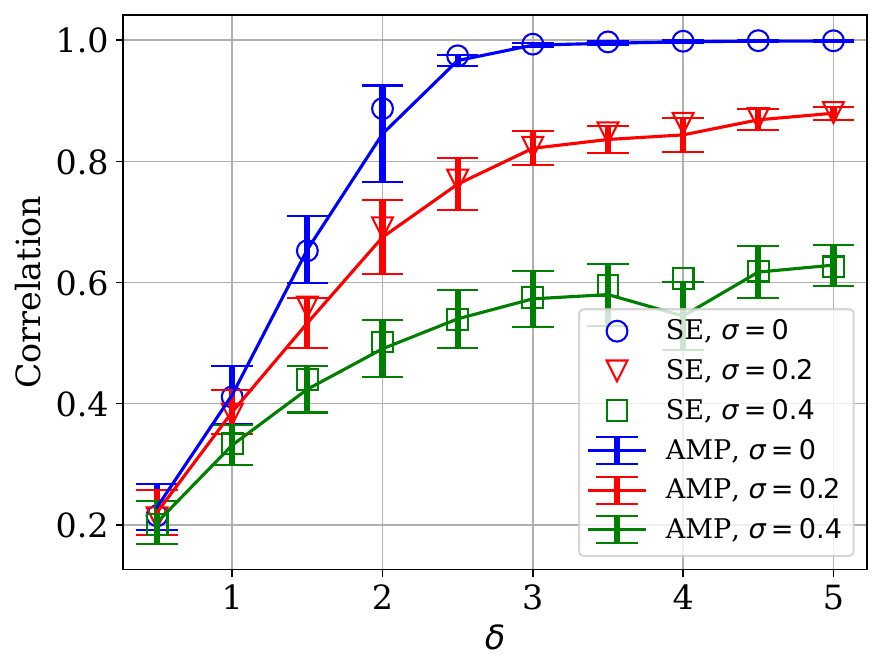}}
\subfloat[$\beta^{(2)}$ \label{fig:beta2_diff_noise}]{\includegraphics[width=.45\columnwidth]{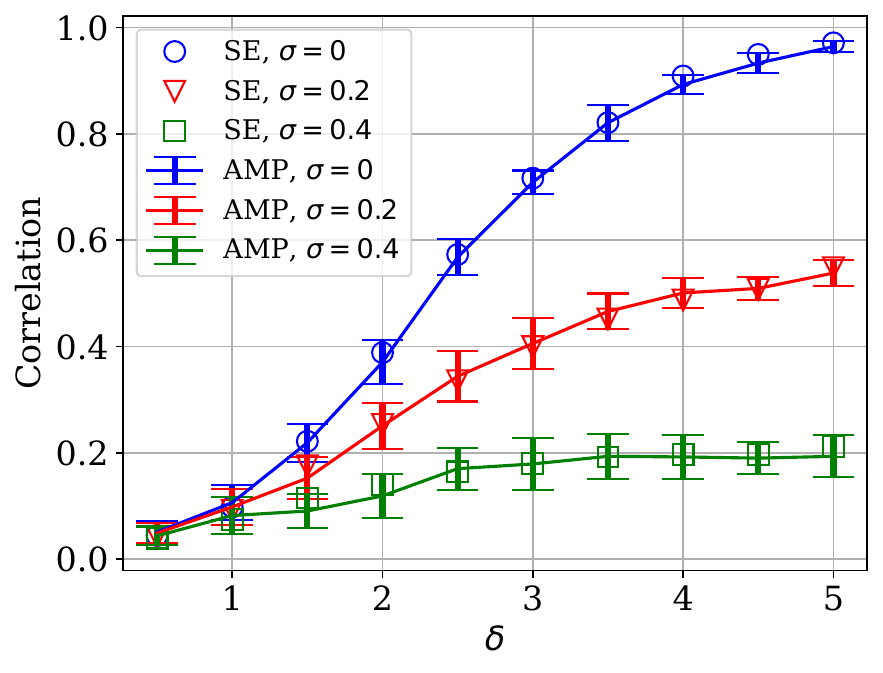}}
\caption{MLR, Gaussian prior with $\rho=0$: normalized  squared correlation vs. $\delta$ for various noise levels $\sigma$, with $\alpha=0.7$.}
\label{fig:diff_noise}
\end{figure}
\begin{figure}[t]
\centering
\subfloat[$\beta^{(1)}$\label{fig:beta1_diff_corr}]{\includegraphics[width=.45\columnwidth]{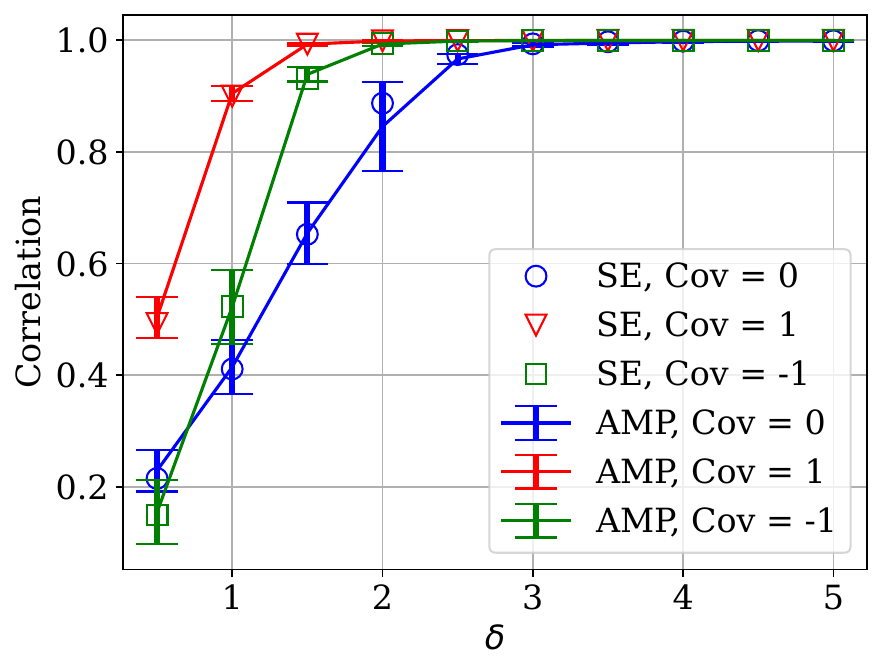}}
\subfloat[$\beta^{(2)}$\label{fig:beta2_diff_corr}]{\includegraphics[width=.45\columnwidth]{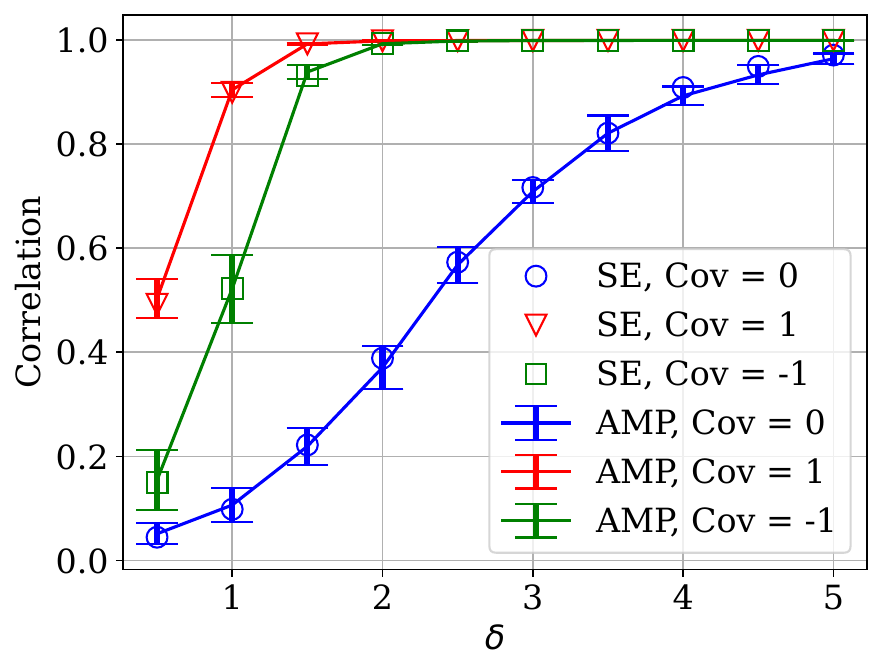}}
\caption{MLR, Gaussian prior with different values of signal covariance $\rho$: Normalized squared correlation vs. $\delta$, with $\alpha=0.7$, $\sigma=0$.}
\label{fig:diff_corr}
\end{figure}

Figure \ref{fig:diff_noise} shows the performance of AMP for independent signals ($\rho=0$). The normalized squared correlation is plotted as a function of the sampling ratio $\delta=n/p$, for different noise levels $\sigma$. The state evolution predictions closely match the performance of AMP for practical values of $n,p$, validating the result of Theorem \ref{thm:GAMP}. As expected, the correlation improves with increasing $\delta$ and degrades with increasing $\sigma$. The performance for $\beta^{(1)}$ is  better  than for  $\beta^{(2)}$ as $70\%$ of the observations come from $\beta^{(1)}$.  Figure \ref{fig:diff_corr} plots the performance as a function of  $\delta$ for signal correlation $\rho \in \{ 0,1,-1 \}$, with $\sigma=0$ (noiseless). When $\rho=1$, both signals are identical and the problem reduces to standard linear regression. When $\rho=-1$, we have $\beta^{(1)} = - \beta^{(2)} = \beta$, so there is still effectively only one signal vector. However, the  $\rho=-1$ case is harder than $\rho=1$  since each measurement is unlabelled and could come from either $\beta$ or $-\beta$ (with probabilities $0.7$ and $0.3$, respectively).  We note that the case of $\rho=-1$ and $\alpha=0.5$ is the phase retrieval problem, for which AMP algorithms have been studied in a number of works, e.g., \citep{Sch14,Ma19}. AMP needs to be initialized carefully in this setting since a random initialization independent of the signal leads to  state evolution predicting zero correlation between the signal and the AMP iterates \citep{Ma18,Mon21}. 

\begin{figure}[t]
\centering
\subfloat[$\beta^{(1)}$\label{fig:beta1_diff_prop}]{\includegraphics[width=.45\columnwidth]{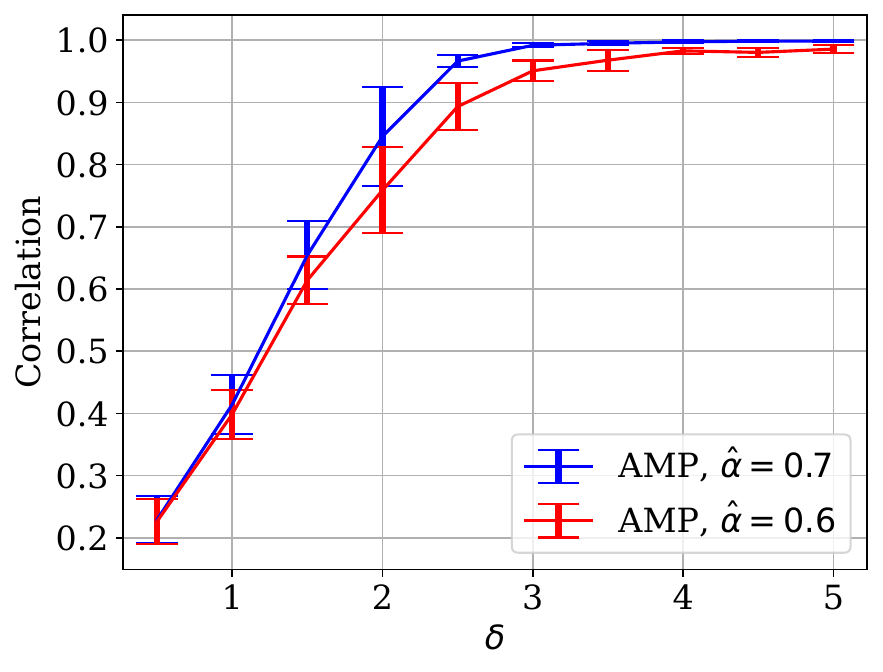}}
\subfloat[$\beta^{(2)}$\label{fig:beta2_diff_prop}]{\includegraphics[width=.45\columnwidth]{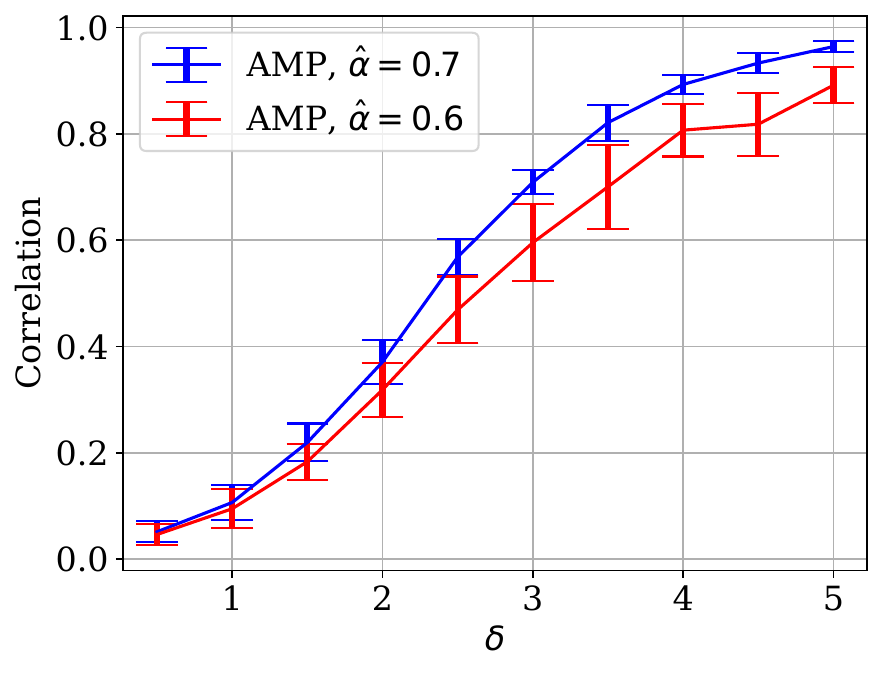}}
\caption{MLR, Gaussian prior with $\rho=0$ and different values of estimated proportion $\hat{\alpha}$: Normalized squared correlation vs. $\delta$, with true $\alpha=0.7$, $\sigma=0$.}
\label{fig:diff_prop}
\end{figure}

In practical applications, we may not know the exact proportion of observations that come  that come from the first signal.  Figure \ref{fig:diff_prop} shows the performance when AMP is run assuming a proportion parameter $\hat{\alpha}=0.6$ which is different from the true value $\alpha =0.7$.  The functions $f_k^*, g_k^*$ defining the AMP depend on $\alpha$, hence replacing $\alpha$ with $\hat{\alpha}$ in these functions is effectively running AMP with a different (sub-optimal) choice of denoising functions.

\paragraph{Sparse prior.}  We  next consider a sparse prior for each of the two signals, with their entries generated as
\begin{align}
     \beta^{(1)}_j, \beta^{(2)}_j
    \sim_\iid
    \frac{\varepsilon}{2}\delta_{+1}+(1-\varepsilon)\delta_{0}+\frac{\varepsilon}{2}\delta_{-1}, \ \ j \in [p].
    \label{eq:sparse_disc_prior}
\end{align}
Here $\delta_{(\cdot)}$ denotes the Dirac delta function, and we note that the two signals are independent. The initializer is generated randomly from the same prior, independently of the signals.  We investigate the performance of AMP with two choices of denoising function: the Bayes-optimal denoising function (defined in  \eqref{eq:fk_opt_def}) and the soft-thresholding denoising function (defined in \eqref{eq:ST-def}-\eqref{eq:ST_denoiser} below). For the case of standard linear regression, the soft-thresholding function is a popular choice of denoiser for AMP when  the signal is known to be sparse, 
but the exact sparsity level and the distribution of the non-zero coefficients are not known \citep{Mon12}. AMP with soft-thresholding denoising is also closely related to LASSO, which is widely used for sparse linear regression \citep{Bay11Lasso}. 

We evaluate the two denoisers 
in   Figures \ref{fig:sparse_prior_bayes_fk} and \ref{fig:sparse_prior_ST_fk}, respectively, by plotting heatmaps showing the performance for various values of the pair $(\delta, \varepsilon)$. For each point in the heatmap, we take the minimum of the mean normalized squared correlation of the two estimates with the respective signals. This is obtained by executing 10 runs of AMP using the desired $f_k$ function with 10 iterations per run (i.e., $k=1,\dots,10$), and taking the average of the 10 correlations (from the 10 runs) at the final iteration. 

For the Bayes-optimal denoiser $f_k$, the heatmaps are shown in Figure \ref{fig:sparse_prior_bayes_fk}. We have the following observations:
\begin{itemize}
    \item Performance is better for $\alpha=0.6$ compared to $\alpha=0.7$. This is because we are taking the minimum between the two correlations, and when $\alpha$ is larger ($\alpha=0.7$), there is less data available for the  group with fewer observations (for a given  $(\varepsilon, \delta))$.
    \item For a given $\delta$, performance is generally better  at $\varepsilon$ closer to $0$ or $1$. This is because at $\varepsilon=0.1$, most of the signal entries are 0, and at $\varepsilon=1$, all the values are either $1$ or $-1$. Around $\varepsilon=0.5$, we have a significant proportion  of all three values, causing estimation to be harder. 
\end{itemize}

\begin{figure}[t]
\centering
\begin{subfigure}[b]{0.45\textwidth}
  \centering
  \includegraphics[width=\textwidth]{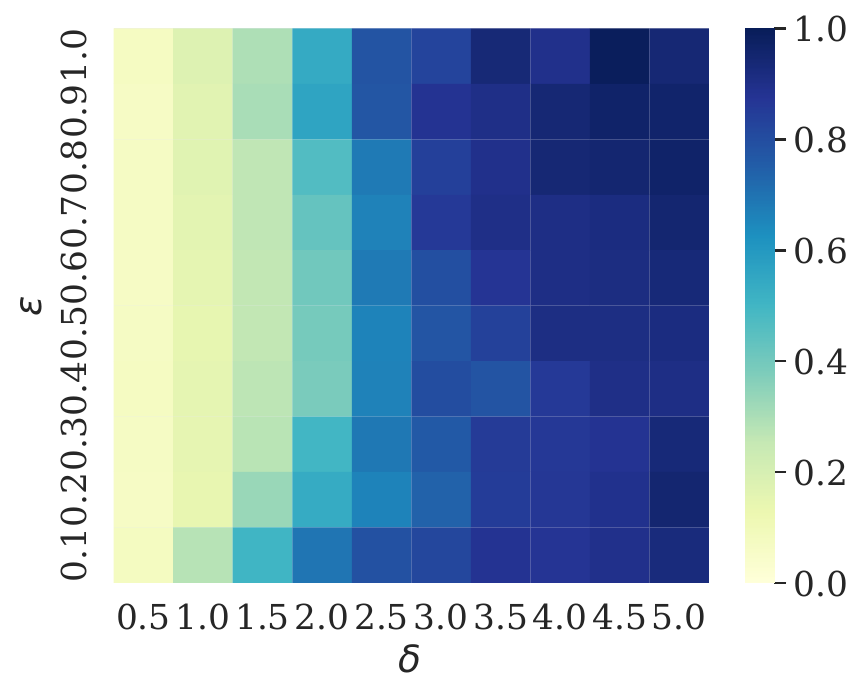}
  \caption{$\alpha=0.6$}
\end{subfigure}
\begin{subfigure}[b]{0.45\textwidth}
  \centering
  \includegraphics[width=\textwidth]{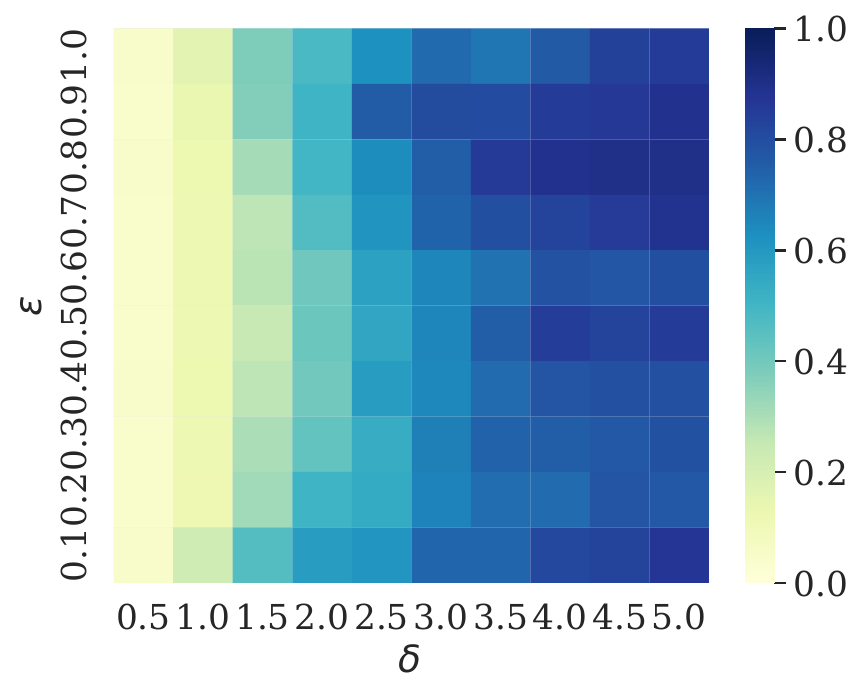}
  \caption{$\alpha=0.7$}
\end{subfigure}
\caption{MLR: Heatmap of minimum normalized correltion for Bayes-optimal $f_k$, with $p=500$, $\sigma =0$.}
\label{fig:sparse_prior_bayes_fk}
\end{figure}

The soft-thresholding function with threshold  $\theta$, denoted by $\text{ST}( \cdot \, ;\theta): \reals  \to \reals$, is defined as
\begin{align}
    \text{ST}(x;\theta)=
    \begin{cases}
    x-\theta &\text{if $x>\theta$} \\
    0 &\text{if $-\theta\leq x\leq \theta$} \\
    x+\theta &\text{if $x\leq-\theta$}.
    \end{cases}
\label{eq:ST-def}
\end{align}
To set the threshold  for the soft-thresholding denoiser $f_{k}$, we recall from 
Theorem \ref{thm:GAMP} that the empirical distribution of  $\big\{ B^k_j \big \}$ converges to the distribution  of the random vector $\Mu^{k}_{B}\bar{B} + G^{k}_B$. Therefore, the empirical distribution of $\big\{ (\Mu^{k}_{B})^{-1} B^{k}_j  \big\} _{j \in [p]}$ converges to the distribution of $\bar{B}+ (\Mu^{k}_{B})^{-1} G^{k}_B$, where $(\Mu^{k}_{B})^{-1}G^{k}_B \sim \normal (0, N^k_B)$, where
\begin{align}
 N^k_B =    \text{Cov}\Big((\Mu^{k}_{B})^{-1}G^{k}_B \Big) =  (\Mu^{k}_{B})^{-1}T^{k}_B (\Mu^{k}_{B})^{-1}.
\end{align}
Letting $\zeta >0$ be a tuning parameter, we set the soft-thresholding denoiser $f_k$ to be:
\begin{align}
    f_{k}(B_j^{k})&=
    \begin{bmatrix}
    \text{ST}\Big(\big\{(\Mu^{k}_B)^{-1}B_j^{k}\big\}_1; \, \zeta\sqrt{\big\{N_B^{k}\big\}_{11}}\Big) \\
    \text{ST}\Big(\big\{(\Mu^{k}_B)^{-1}B_j^{k}\big\}_2; \, \zeta\sqrt{\big\{N_B^{k}\big\}_{22}}\Big)
    \end{bmatrix}, \label{eq:ST_denoiser}
\end{align}
This  implies that
\begin{align}
    \nabla f_{k}(B_j^{k})=
    \begin{bmatrix}
    \frac{\partial\{f_{k}\}_1}{\partial\{B_j^{k}\}_1} & \frac{\partial\{f_{k}\}_1}{\partial\{B_j^{k}\}_2}\\
    \frac{\partial\{f_{k}\}_2}{\partial\{B_j^{k}\}_1} & \frac{\partial\{f_{k}\}_2}{\partial\{B_j^{k}\}_2}
    \end{bmatrix}, \label{eq:ST_gradient}
\end{align}
where for $i_1,i_2\in\{1,2\}$,
\begin{align}
    \frac{\partial\{f_{k}\}_{i_1}}{\partial\{B_j^{k}\}_{i_2}}=
    \begin{cases}
    \{(M^{k}_B)^{-1}\}_{i_1i_2} & \text{if $\big\{(\Mu^{k}_B)^{-1}B_j^{k}\big\}_{i_1}>\zeta\sqrt{\big\{N_B^{k}\big\}_{i_1i_1}}$} \\
    0 & \text{if $\big|\big\{(\Mu^{k}_B)^{-1}B_j^{k}\big\}_{i_1}\big|\leq\zeta\sqrt{\big\{N_B^{k}\big\}_{i_1i_1}}$} \\
    \{(M^{k}_B)^{-1}\}_{i_1i_2} & \text{if $\big\{(\Mu^{k}_B)^{-1}B_j^{k}\big\}_{i_1}<-\zeta\sqrt{\big\{N_B^{k}\big\}_{i_1i_1}}$}.
    \end{cases}
\end{align}
Here the notation $\{\cdot\}_{i_1}$ denotes the $i_1$-th entry of the vector and $\{\cdot\}_{i_1i_2}$   the $i_1,i_2$-th entry of the matrix inside the parentheses. 

\begin{figure}[t]
\centering
\begin{subfigure}[b]{0.45\textwidth}
  \centering
  \includegraphics[width=\textwidth]{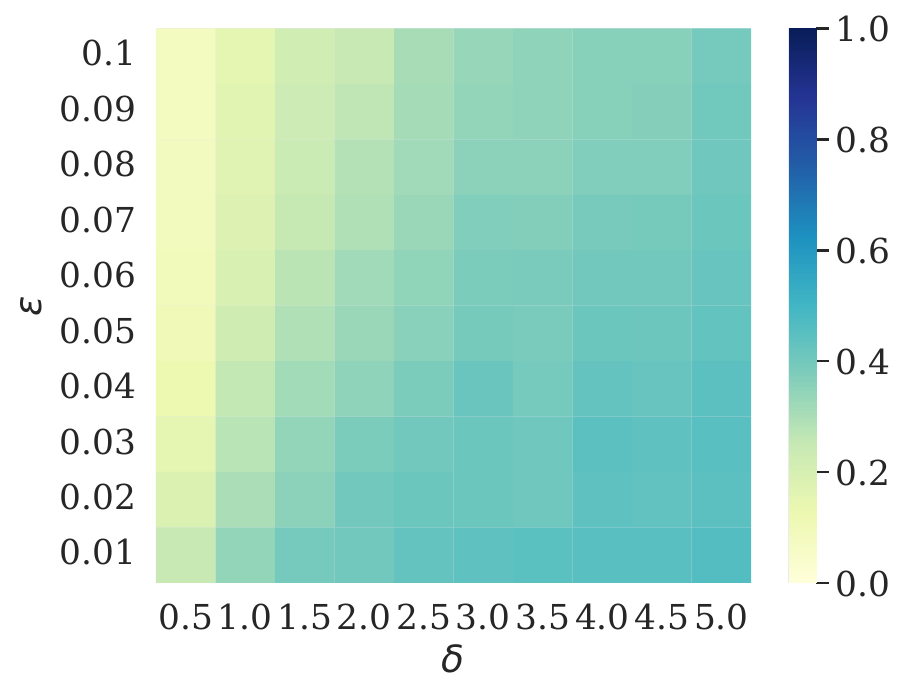}
  \caption{$\alpha=0.5$}
\end{subfigure}
\begin{subfigure}[b]{0.45\textwidth}
  \centering
  \includegraphics[width=\textwidth]{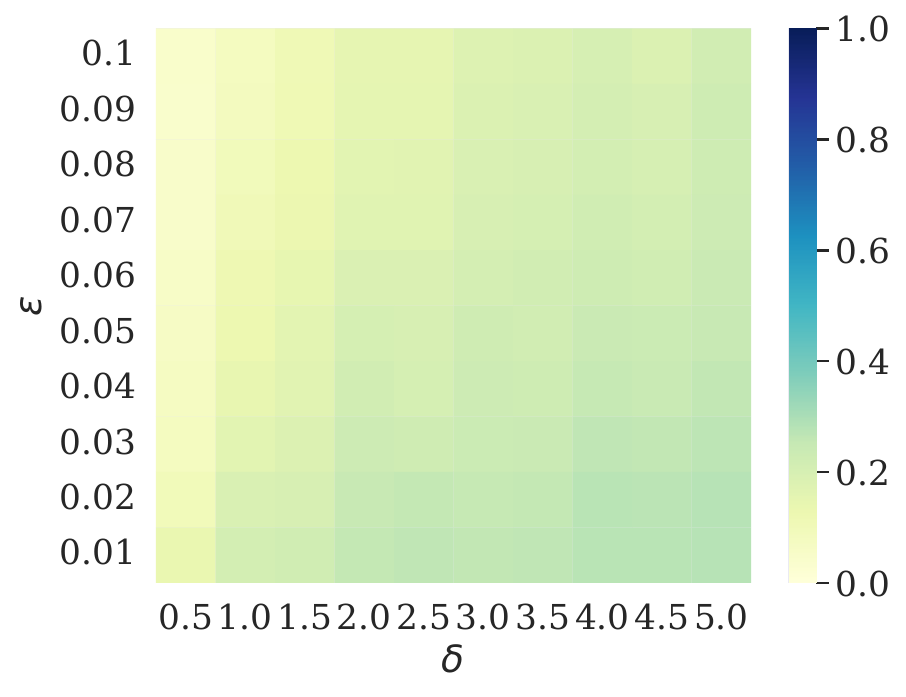}
  \caption{$\alpha=0.6$}
\end{subfigure}
\caption{MLR: Heatmap of minimum normalized  correlation for soft-thresholding $f_k$, with $p=1000$, $\sigma=0$. Soft-thresholding tuning parameter $\zeta =1.1402$}
\label{fig:sparse_prior_ST_fk}
\end{figure}

 Figure \ref{fig:sparse_prior_ST_fk} shows the heatmaps for the soft-thresholding, with the tuning parameter $\zeta$ set to 1.1402.  
 This value of $\zeta$ attains the minimax MSE of the soft-thresholding  denoiser over the class of sparse signal priors which assign a probability mass at least $0.9$ to the value $0$ \citep{Mon12}. We observe that the performance for $\alpha=0.5$ is stronger as the samples are more evenly spread out between the two signals. As expected the correlation improves as  the signal becomes sparser (i.e., $\varepsilon$ decreases), and as $\delta$ increases.
Figure \ref{fig:Bayes_v_ST_fk} compares the Bayes-optimal function with the soft-thresholding function for fixed values of sparsity level $\varepsilon = 0.1$ and mixture parameter $\alpha=0.6$. The significantly better  performance of the  the Bayes-optimal denoiser compared to soft-thresholding is because the former optimally utilizes knowledge of the signal prior, whereas soft-thresholding only uses an estimate of the proportion of zeros in the signals.

\begin{figure}[t]
\centering
\begin{subfigure}[b]{0.45\textwidth}
  \centering
  \includegraphics[width=\textwidth]{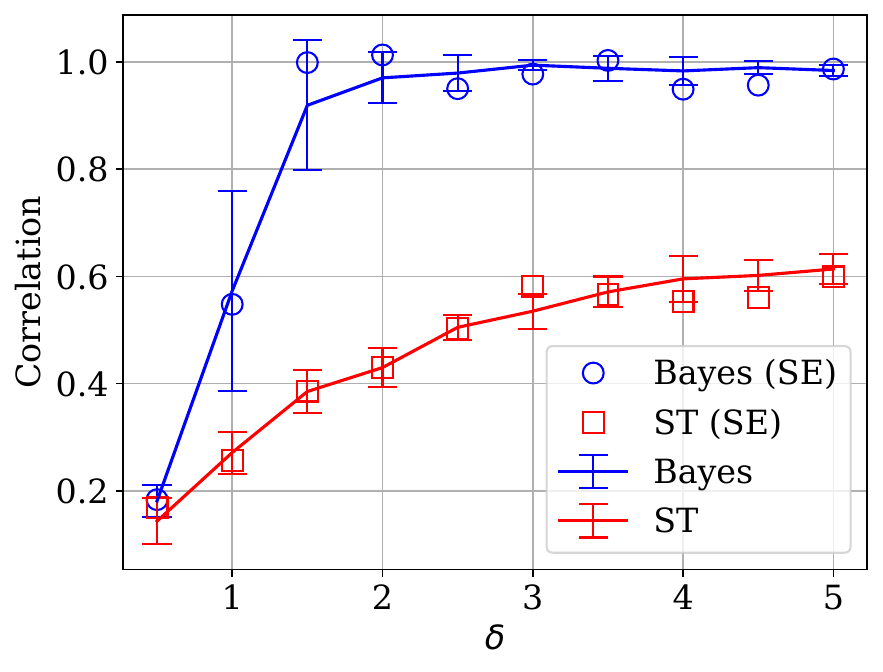}
  \caption{$\beta^{(1)}$}
\end{subfigure}
\begin{subfigure}[b]{0.45\textwidth}
  \centering
  \includegraphics[width=\textwidth]{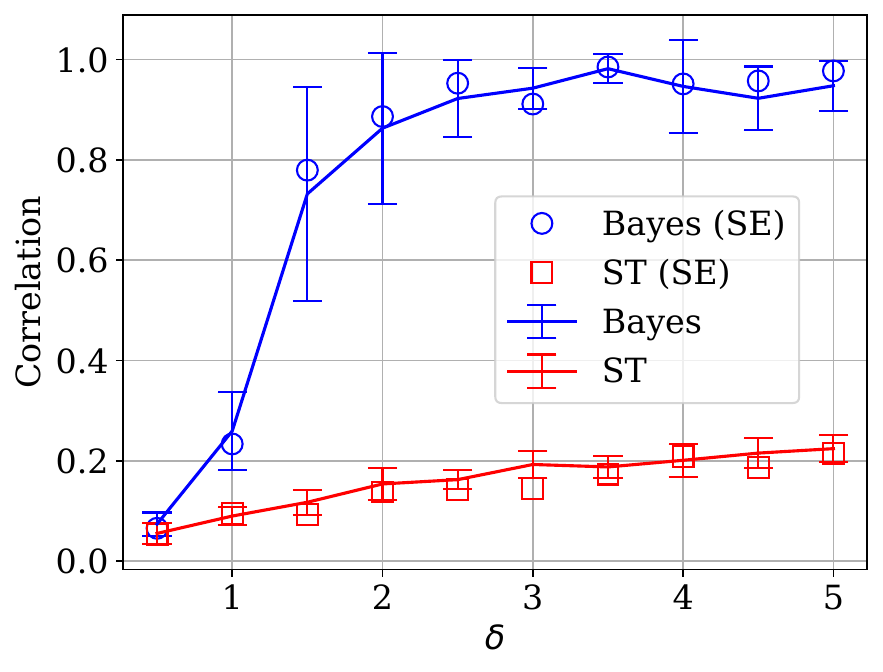}
  \caption{$\beta^{(2)}$}
\end{subfigure}
\caption{MLR: Comparison of minimum normalized  correlation for Bayes-optimal $f_k$ vs.~soft-thresholding $f_k$, with $p=1000$, $\alpha=0.6$, $\sigma=0$, $\zeta=1.1402$, and $\varepsilon=0.1$.}
\label{fig:Bayes_v_ST_fk}
\end{figure}

\paragraph{Comparison with other estimators.} Figure \ref{fig:diff_estG} compares the performance of AMP with other widely studied estimators for mixed linear regression, for the Gaussian signal prior in \eqref{eq:equal_mean_prior_and_init} with independent signals ($\rho=0$). The other estimators are:  the spectral estimator proposed in \cite[Algorithm 2]{Yi14}; alternating minimization (AM) \citep[Algorithm 1]{Yi14}; and  expectation maximization (EM) \citep[Section 2.1]{Far10}. Figure \ref{fig:diff_est_sparse} compares the performance of AMP with these estimators for the sparse signal prior in \eqref{eq:sparse_disc_prior} with $\varepsilon =0.1$.
For this prior, we  modified the least squares step of the AM algorithm in \cite[Algorithm 2]{Yi14} to use Lasso instead of  standard least squares -- this gives better performance as it takes advantage of the signal sparsity. We also tried using the lasso-type EM algorithm \cite{Sta10}, but it did not give a noticeable improvement in performance. In both setups, AMP significantly outperforms the other estimators as it is tailored to take advantage of the signal prior via  the choice of the denoising function $f_k$.

\begin{figure}[t]
\centering
\subfloat[$\beta^{(1)}$\label{fig:beta1_diff_estG}]{\includegraphics[width=.45\columnwidth]{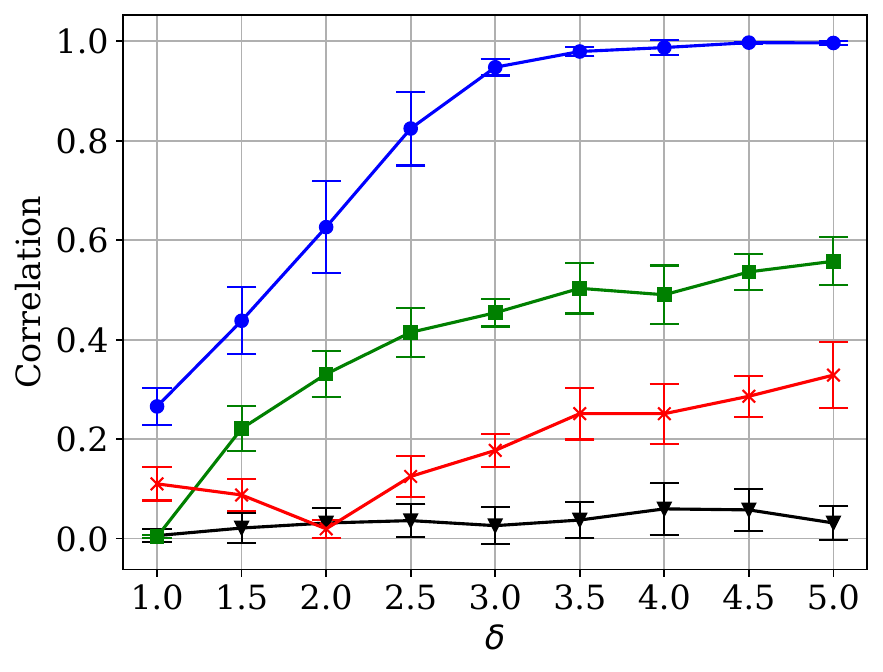}}
\subfloat[$\beta^{(2)}$\label{fig:beta2_diff_estG}]{\includegraphics[width=.45\columnwidth]{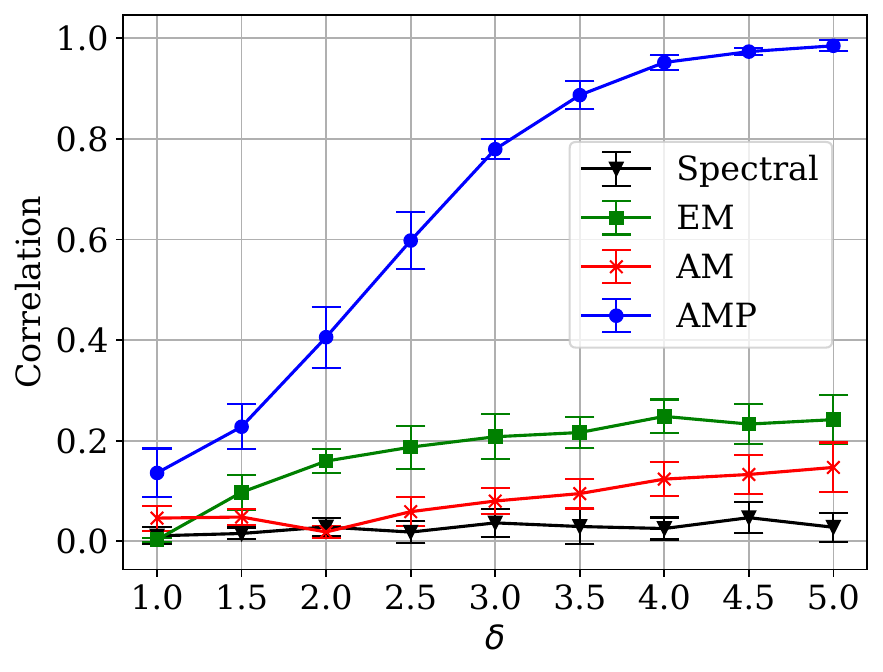}}
\caption{MLR, comparison of different estimators for Gaussian prior with $\rho=0$: Normalized squared correlation vs. $\delta$, with  $\alpha=0.6$, $\sigma=0$.}
\label{fig:diff_estG}
\end{figure}

\begin{figure}[t]
\centering
\subfloat[$\beta^{(1)}$\label{fig:beta1_diff_est_sparse}]{\includegraphics[width=.45\columnwidth]{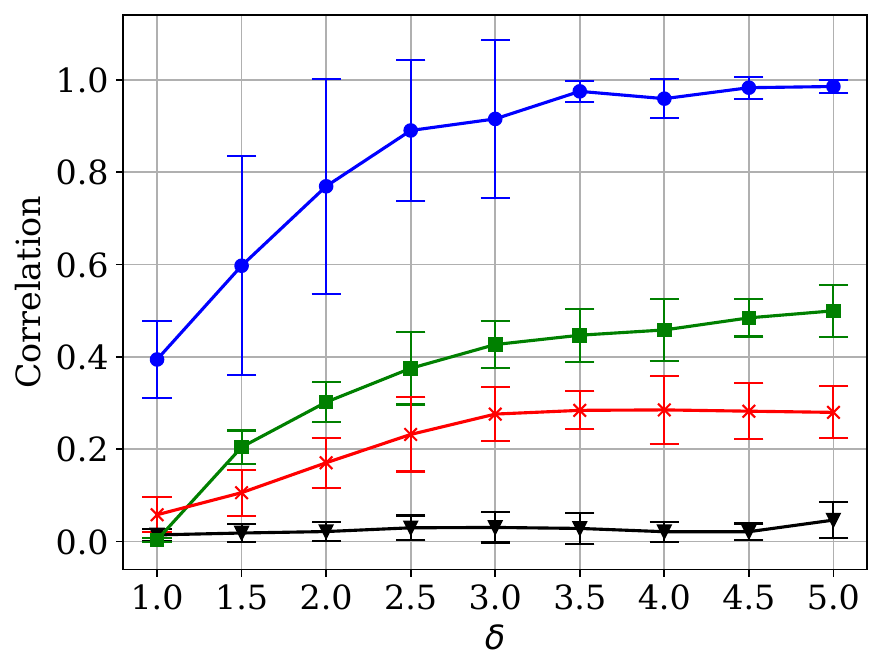}}
\subfloat[$\beta^{(2)}$\label{fig:beta2_diff_est_sparse}]{\includegraphics[width=.45\columnwidth]{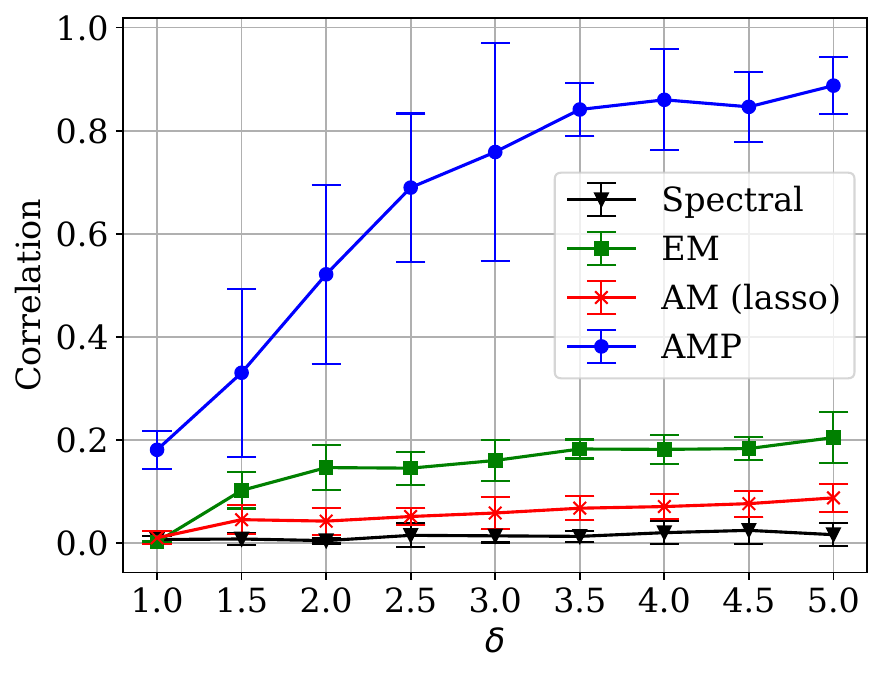}}
\caption{MLR, comparison of different estimators for sparse prior: Normalized squared correlation vs. $\delta$, with  $\alpha=0.6$, $\sigma=0.1$.}
\label{fig:diff_est_sparse}
\end{figure}

\subsection{Mixed Linear Regression (Three Signals)}

To illustrate AMP's ability to tackle MLR with more than two signals, we now consider the model \eqref{eq:MLR_model} with three signals:
\begin{align}
    Y_i
    =\langle X_i,\beta^{(1)}\rangle c_{i1}+\langle X_i,\beta^{(2)}\rangle c_{i2}&+\langle X_i,\beta^{(3)}\rangle c_{i3} +\epsilon_i, \qquad i\in[n].
    \label{eq:MLR_three_components}
\end{align}
We take $[c_{i1},c_{i2},c_{i3}]^\top$ to be a one-hot vector, and denote the position of the one in the one-hot vector by $c_i\sim_{\iid}\text{Categorical}(\{\alpha_1,\alpha_2,\alpha_3\})$. As before, $\epsilon_i\sim_\iid\normal(0,\sigma^2)$, and $X_i\sim_\iid\normal(0,I_p/n)$, for $i\in[n]$. We set the signal dimension $p=500$ and vary the value of $n$ in our experiments. The AMP algorithm in \eqref{eq:GAMP} is implemented with $g_k=g_k^*$ and $f_k=f_k^*$  (i.e., the optimal choices given in Proposition \ref{prop:optimal_fk}).

We use independent Gaussian priors for the three signals. Specifically,  we generate:
\begin{equation}
\begin{split}
    &(\beta_j^{(1)},\beta_j^{(2)},\beta_j^{(3)})
    \sim_\iid
    \normal(\E[\bar{B}],I_3), \ \ j\in[p]  \\
    &c_i\sim_\iid\text{Categorical}(\{\alpha_1,\alpha_2,\alpha_3\}),
    \ \ i\in[n].
    \label{eq:MLR_3sig_prior1} 
    \end{split}
\end{equation}
The initializer $\hB^0 \in \reals^{p \times 3}$ is chosen randomly according to the same distribution, independent of the signal.  We consider the following three scenarios, where $\sigma=0$ (noiseless):
\begin{itemize}
    \item \textbf{Signals with same mean and same proportions.} Figure \ref{fig:MLR_3sig_same_mean_same_prop} shows the performance with $\E[\bar{B}]=[0,0,0]^\top$ and $(\alpha_1,\alpha_2,\alpha_3)=(1/3,1/3,1/3)$. We observe that the  performance does not improve much with increasing $\delta$ as the algorithm finds it challenging to distinguish the signals when they all have the same prior and correspond to the same proportion of observations.
    \item \textbf{Signals with same mean and different proportions.} Figure \ref{fig:MLR_3sig_same_mean_diff_prop} shows the performance with $\E[\bar{B}]=[0,0,0]^\top$ and $(\alpha_1,\alpha_2,\alpha_3)=(0.5,0.3,0.2)$. The performance here is  significantly better than the previous case where signals have the same mean and proportions. As expected, the correlation for $\beta_1$ is significantly better than that for $\beta_2$ and $\beta_3$ since $\beta_1$ has the highest proportion of observations.
    \item \textbf{Signals with different means and same proportions.} Figure \ref{fig:MLR_3sig_diff_mean_same_prop} shows the performance with $\E[\bar{B}]=[0,0.5,1]^\top$ and $(\alpha_1,\alpha_2,\alpha_3)=(1/3,1/3,1/3)$.  This is the case with the best estimation performance.  This is because the distinct means help distinguish the signals from one another and the equal proportions ensure that all three have sufficient number of observations for large enough $\delta$.
\end{itemize}

\begin{figure}[t]
\centering
\begin{subfigure}[b]{0.32\textwidth}
  \centering
  \includegraphics[width=\textwidth]{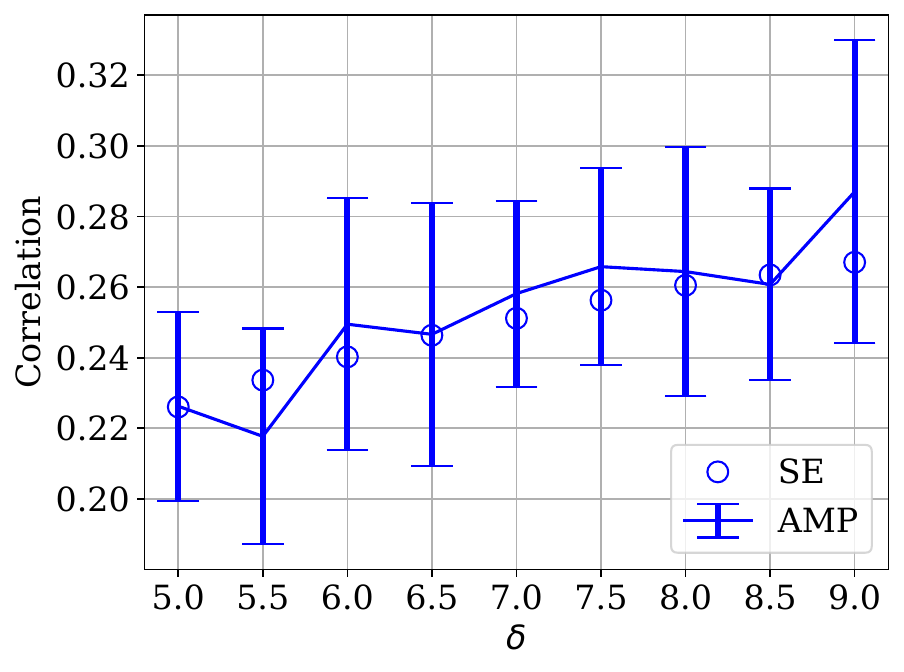}
  \caption{$\beta^{(1)}$}
\end{subfigure}
\begin{subfigure}[b]{0.32\textwidth}
  \centering
  \includegraphics[width=\textwidth]{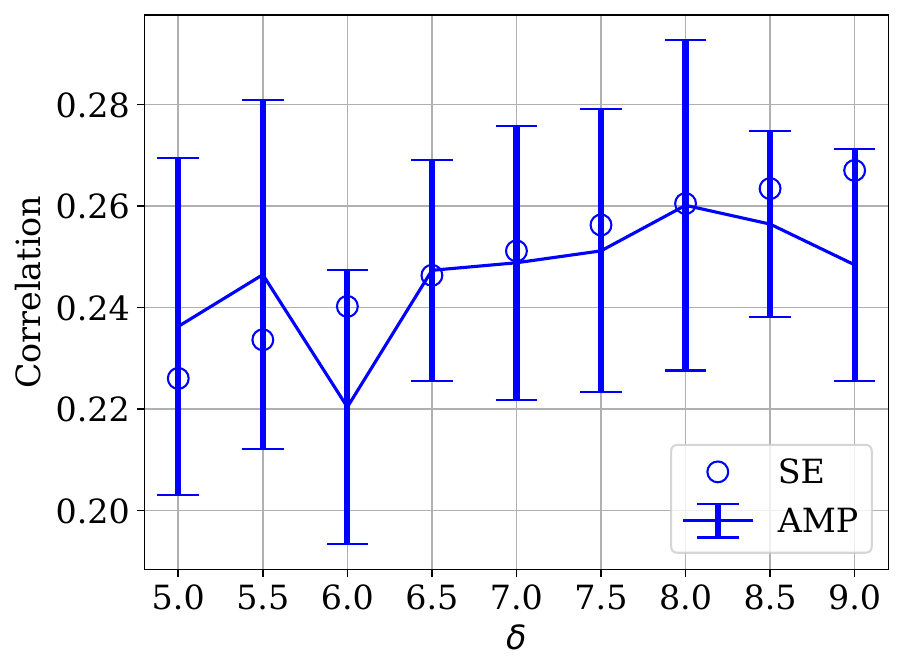}
  \caption{$\beta^{(2)}$}
\end{subfigure}
\begin{subfigure}[b]{0.32\textwidth}
  \centering
  \includegraphics[width=\textwidth]{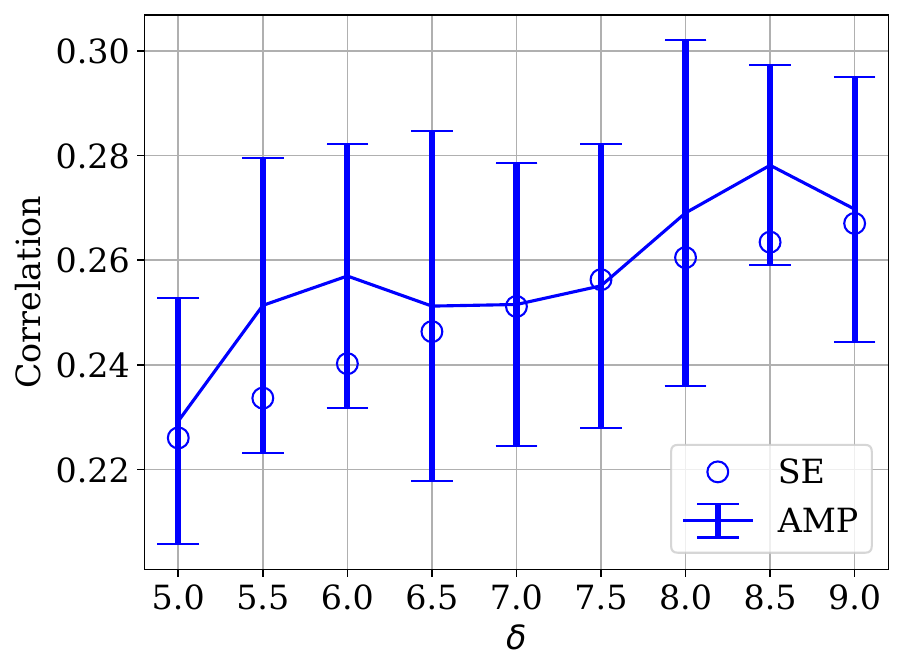}
  \caption{$\beta^{(3)}$}
\end{subfigure}
\caption{MLR with three signals:  signals with same mean and same proportions.}
\label{fig:MLR_3sig_same_mean_same_prop}
\end{figure}

\begin{figure}[t]
\centering
\begin{subfigure}[b]{0.33\textwidth}
  \centering
  \includegraphics[width=\textwidth]{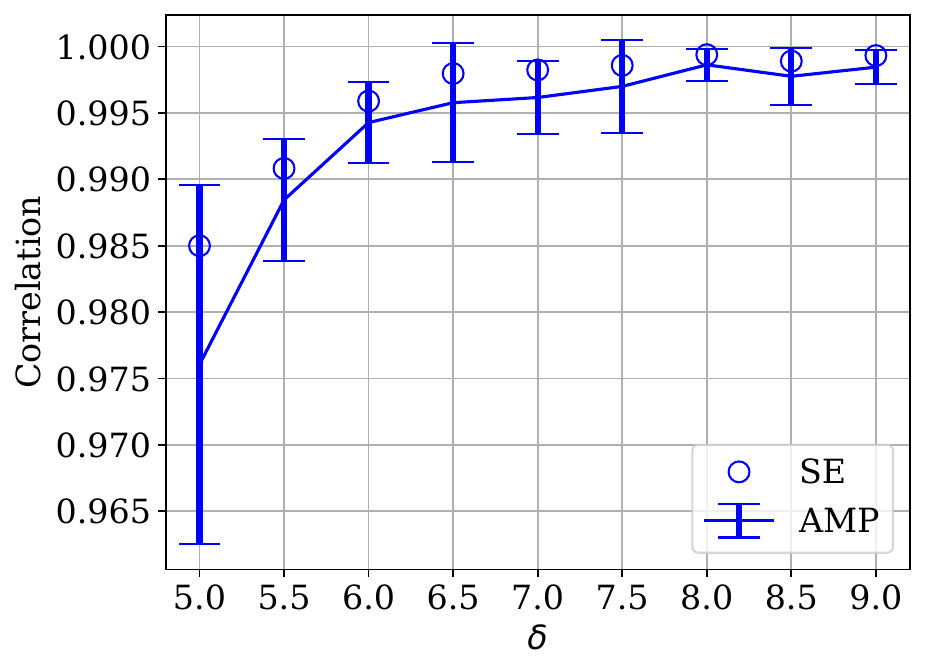}
  \caption{$\beta^{(1)}$}
\end{subfigure}
\begin{subfigure}[b]{0.32\textwidth}
  \centering
  \includegraphics[width=\textwidth]{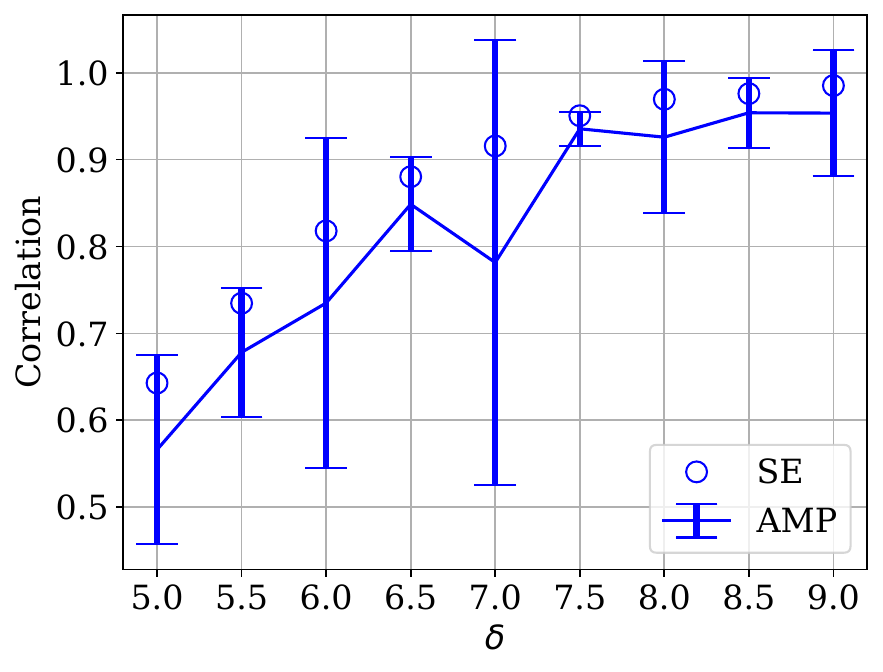}
  \caption{$\beta^{(2)}$}
\end{subfigure}
\begin{subfigure}[b]{0.32\textwidth}
  \centering
  \includegraphics[width=\textwidth]{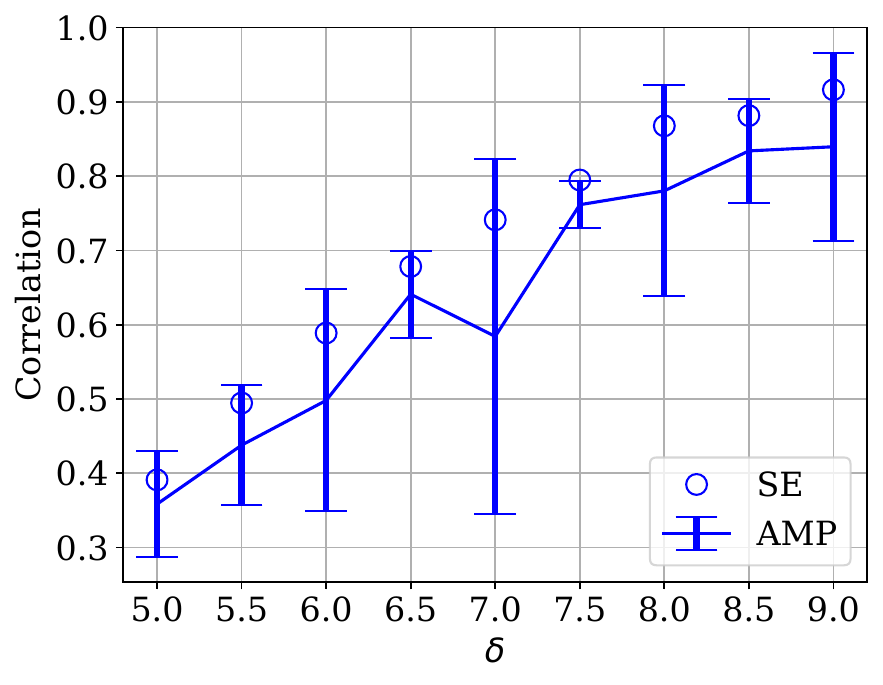}
  \caption{$\beta^{(3)}$}
\end{subfigure}
\caption{MLR with three signals: signals with same mean and different proportions.}
\label{fig:MLR_3sig_same_mean_diff_prop}
\end{figure}

\begin{figure}[t]
\centering
\begin{subfigure}[b]{0.32\textwidth}
  \centering
  \includegraphics[width=\textwidth]{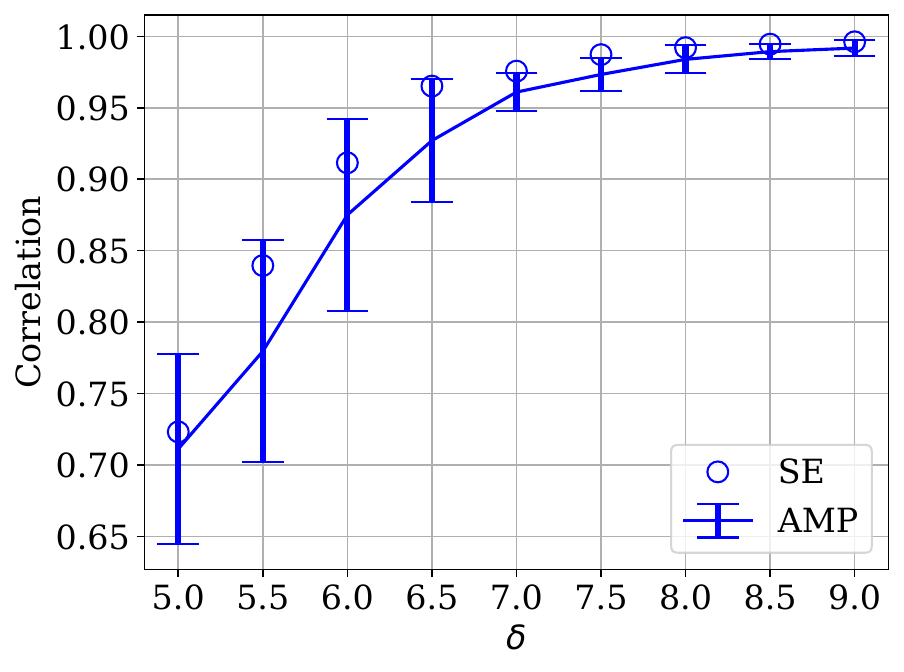}
  \caption{$\beta^{(1)}$}
\end{subfigure}
\begin{subfigure}[b]{0.32\textwidth}
  \centering
  \includegraphics[width=\textwidth]{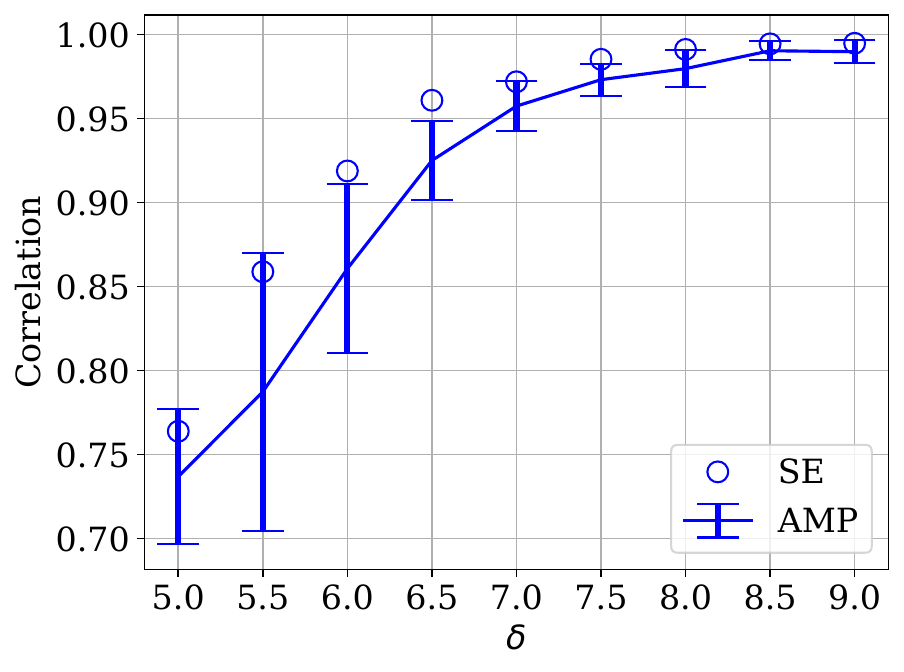}
  \caption{$\beta^{(2)}$}
\end{subfigure}
\begin{subfigure}[b]{0.32\textwidth}
  \centering
  \includegraphics[width=\textwidth]{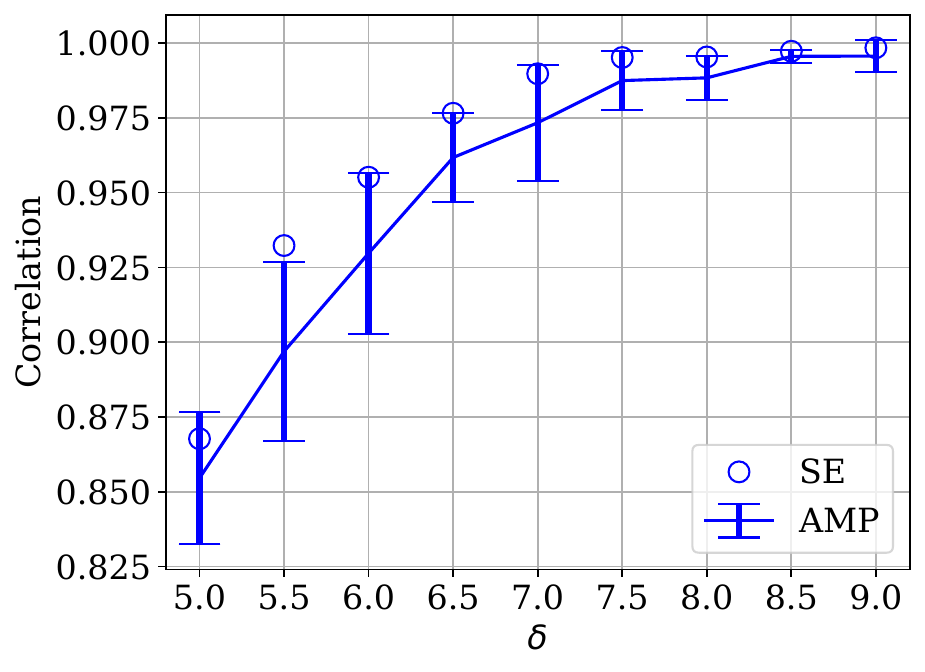}
  \caption{$\beta^{(3)}$}
\end{subfigure}
\caption{MLR with three signals: signals with different means and same proportions.}
\label{fig:MLR_3sig_diff_mean_same_prop}
\end{figure}

Finally, Figure \ref{fig:MLR_3sig_GAMP_v_others} compares the performances of AMP with other widely studied estimators for MLR, for the Gaussian signal prior in \eqref{eq:MLR_3sig_prior1} with $\E[\bar{B}]=[0,0.5,1]^\top$ and $(\alpha_1,\alpha_2,\alpha_3)=(1/3,1/3,1/3)$ (this is the case where signals have different prior distributions but appear in the same proportion of observations). The other estimators are: the spectral estimator proposed in \cite[Algorithm 2]{Yi14}; alternating minimization (AM) \citep[Algorithm 1]{Yi14}; and expectation maximization (EM) \citep[Section 2.1]{Far10}. We modified the grid search\footnote{In the two signal case, grid search was used to iterate over all possible combinations of the top two eigenvectors of $\frac{1}{n}\sum_{i=1}^nY_i X_iX_i^\top$ to get the best combination for each signal.} step of the spectral estimator in \cite[Algorithm 2]{Yi14} to sample evenly across a sphere instead of a circle (to account for the fact that we now have three signals instead of two). Since this step cannot be done exactly like in the 2D case, we used the Fibonacci sphere algorithm \citep{Gon10} to achieve this approximately and efficiently in our 3D case. As in the case of two-signal MLR, AMP significantly outperforms the other estimators as it is tailored to take advantage of the signal prior via the choice of the denoising function $f_k$.

\begin{figure}[t]
\centering
\begin{subfigure}[b]{0.32\textwidth}
  \centering
  \includegraphics[width=\textwidth]{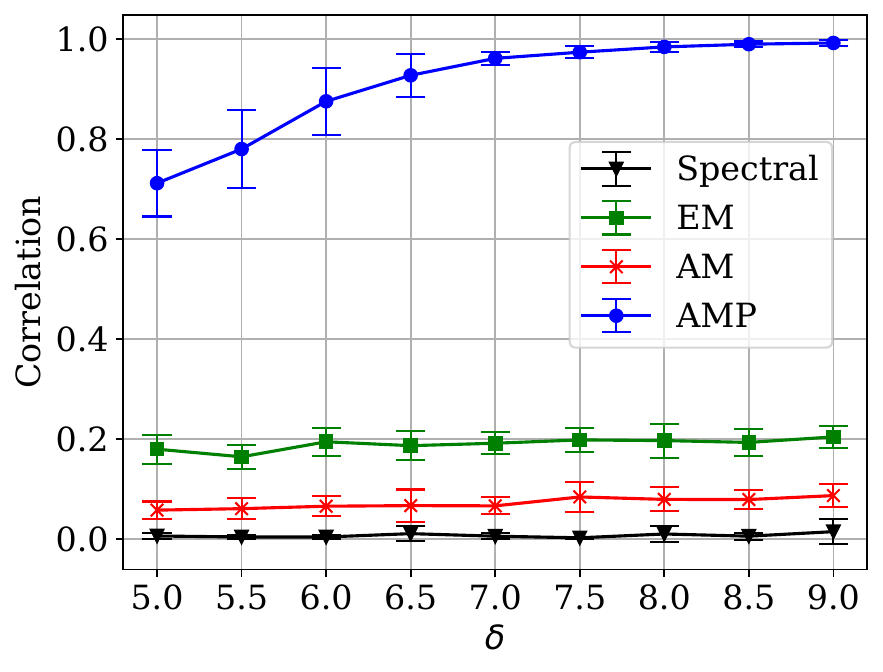}
  \caption{$\beta^{(1)}$}
\end{subfigure}
\begin{subfigure}[b]{0.32\textwidth}
  \centering
  \includegraphics[width=\textwidth]{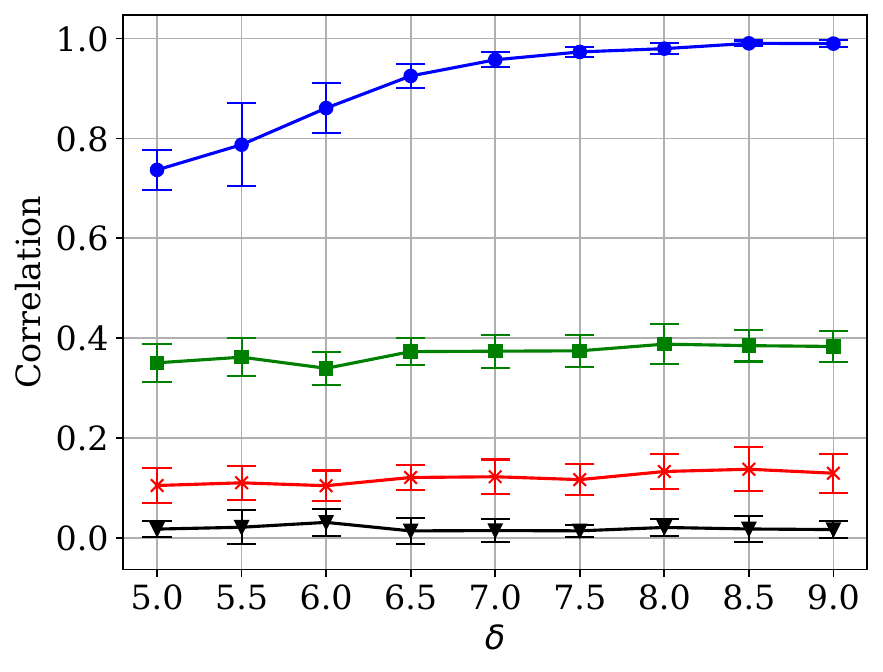}
  \caption{$\beta^{(2)}$}
\end{subfigure}
\begin{subfigure}[b]{0.32\textwidth}
  \centering
  \includegraphics[width=\textwidth]{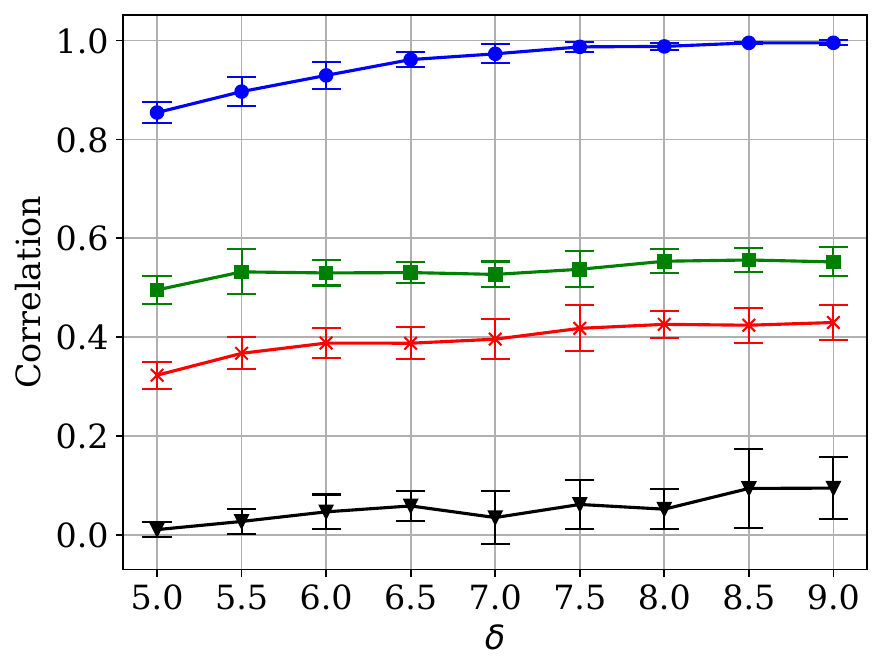}
  \caption{$\beta^{(3)}$}
\end{subfigure}
\caption{MLR with three signals:  Comparison with different estimators. The signals have different means and occur in the same proportions.}
\label{fig:MLR_3sig_GAMP_v_others}
\end{figure}

\subsection{Max-Affine Regression} \label{sec:max-affine_reg}

We consider on the MAR model \eqref{eq:max-affine_model} with two signals, which is given by:
\begin{align}
    Y_i=\max\Big\{\langle X_i,\beta^{(1)}\rangle+b_1, \, \langle X_i,\beta^{(2)}\rangle+b_2\Big\}+\epsilon_i, 
    \ \ i\in[n],
\end{align}
where $\epsilon_i\sim_{\iid}\mathcal{N}(0,\sigma^2)$, $X_i\sim_{\iid}\mathcal{N}(0,I_p/n)$ for $i\in[n]$. 
 Recall from \eqref{eq:MAR_model_concise} that  MAR can be written as an instance of the matrix GLM using the augmented features $X_i^{({\text{ma}})}= \begin{bmatrix} X_i \\ 1 \end{bmatrix}\in\mb{R}^{p+1}$,  $i \in [n]$. Since the augmented features are not i.i.d. Gaussian (due to the last component being 1),
we  use the original formulation of MAR in \eqref{eq:max-affine_model} and consider the intercepts $b_1$ and $b_2$ to be unknown parameters of the output function $q( \cdot, \cdot)$.  

Our solution is to estimate the unknown intercepts using an expectation-maximization (EM) algorithm. The EM algorithm iteratively produces intercept estimates,  denoted by $b^m \equiv (b^m_1, b^m_2)$, for $m \ge 1$. However, the expectation step of the EM algorithm requires an estimate of $\Theta=XB$, which we approximate  via AMP. This leads to a combined expectation-maximization approximate message passing (EM-AMP) algorithm, which is described in Algorithm \ref{alg:EM-AMP}. The idea of combining  AMP with the EM algorithm was introduced by \cite{Vil13},  for sparse linear regression with unknown parameters in the signal prior.

The AMP stage in step 3 of Algorithm \ref{alg:EM-AMP} is implemented with $g_k=g_k^*$ and $f_k=f_k^*$ (i.e., the optimal choices), computed using the current intercept estimates.  The details of computing the $g_k^*$  and the conditional expectation $\E\big[Z|\bar{Y};b^{m}\big]$ in this step are given in Appendix \ref{appen:MAR_implementation}. The derivation of the EM updates in steps 4 and 5 of Algorithm \ref{alg:EM-AMP} is given in Section \ref{sec:derivation_EM_step}.

\begin{algorithm}[!t]
    \begin{algorithmic}[1]
        \STATE Initialize the intercepts $b^0:=(b_1^0,b_2^0)$, and $\widehat{B}^{k_{\max},0}$.
        \FOR{iteration $m:=1,\dots,m_{\max}$}{
        \STATE Compute $\E\big[Z|\bar{Y};b^{m}\big]$, and run AMP with intercept estimates $b^{m}$ as part of the model for $k_{\max}$ iterations to produce $\widehat{\Theta}^{m}=X\widehat{B}^{k_{\max},m}$. Let $\widehat{\Theta}^{(1)}$, $\widehat{\Theta}^{(2)}$ be the two columns of $\widehat{\Theta}^m$.
        \STATE Compute $b_1^{m+1}:=\frac{1}{|\{i:\widehat{\Theta}_i^{(1)}+b_1^m>\widehat{\Theta}_i^{(2)}+b_2^m\}|}\sum_{i:\widehat{\Theta}_i^{(1)}+b_1^m>\widehat{\Theta}_i^{(2)}+b_2^m}Y_i-\E\big[Z_1|\bar{Y};b^m\big]$.
        \STATE Compute $b_2^{m+1}:=\frac{1}{|\{i:\widehat{\Theta}_i^{(1)}+b_1^m\leq\widehat{\Theta}_i^{(2)}+b_2^m\}|}\sum_{i:\widehat{\Theta}_i^{(1)}+b_1^m\leq\widehat{\Theta}_i^{(2)}+b_2^m}Y_i-\E\big[Z_2|\bar{Y};b^m\big]$.
        }\ENDFOR
        \STATE Output $b^{m_{\max}}$ and $\widehat{B}^{k_{\max},m_{\max}}$.
    \end{algorithmic}
    \caption{Expectation-maximization approximate message passing (EM-AMP) \label{alg:EM-AMP}}
\end{algorithm}

We set the signal dimension $p=500$ and vary the value of $n$ in our experiments. We consider different choices for the intercepts $b:=(b_1,b_2)$ and use a Gaussian prior for $B=\big(\beta^{(1)},\beta^{(2)}\big)$, where we generate
\begin{align}
    \big(\beta^{(1)}_j,\beta^{(2)}_j\big)
    \sim_\iid
    \normal\big(\E[\bar{B}],I_2\big),
    \ \ j\in[p].
\end{align}
The initializer $\widehat{B}^0\in\mb{R}^{p\times 2}$ is chosen  according to the same distribution, independently of the signal. The EM initialization  $b^0$ is taken to be $(0,0)$.

Figures \ref{fig:MAR_diff_mean_same_inter}-\ref{fig:MAR_diff_mean_same_inter_sig04} show the performance of  EM-AMP for max-affine regression with different choices of prior, intercepts, and noise level.   The performance in all the plots is measured via the normalized squared correlation between the full signals  $\beta_{\text{ma}}^{(1)}=((\beta^{(1)})^\top,b_1)^\top$ and $\beta_{\text{ma}}^{(2)}=((\beta^{(2)})^\top,b_2)^\top$ and their respective estimates $\hat{\beta}_{\text{ma}}^{(1)}=((\hat{\beta}^{(1)})^\top,\hat{b}_1)^\top$ and $\hat{\beta}_{\text{ma}}^{(2)}=((\hat{\beta}^{(2)})^\top,\hat{b}_2)^\top$, i.e.,
\begin{align}
\frac{\langle \beta_{\text{ma}}^{(l)}, \, \hat{\beta}_{\text{ma}}^{(l)} \rangle^2}{\|\hat{\beta}_{\text{ma}}^{(l)}\|_2^2\|\beta_{\text{ma}}^{(l)}\|_2^2}, \quad
    \text{where $l\in\{1,2\}$.}
\end{align}
Each point on the plots is obtained from 5 independent runs, where in each run, we execute EM-AMP with $m_{\max}=5$ and $k_{\max}=5$. We report the average and error bars at 1 standard deviation of the final iteration. EM-AMP is compared with:  (i) the alternating minimization algorithm \citep{Gho22} (the only known algorithm for MAR with theoretical guarantees), and (ii) the Oracle AMP (OR-AMP)  where we assume that the true intercepts $b$ are known and are part of the matrix GLM model. Though the intercepts are not known in practice,  OR-AMP provides an upper bound on the best correlation achievable by AMP since it uses the optimal denoising functions and the correct intercepts. Hence, it is reasonable to expect that OR-AMP would provide the best performance. 

We study the performance of our algorithms for the following three scenarios:
\begin{itemize}
    \item \textbf{Same intercept, signals with different means.} Figure \ref{fig:MAR_diff_mean_same_inter} shows the results for the setting $  b=(1,1),\,
        \E[\bar{B}]=[0,1]^\top, \,\sigma=0.1$.
When the intercepts are the same, the proportion of observations from each signal is roughly the same for large enough $\delta$. This is because $Y_i = \max_{l \in\{1, 2\}} \big( \langle X_i,\beta^{(l)}\rangle+b_l \big)$, where  $\langle X_i,\beta^{(l)}\rangle$ is a zero-mean Gaussian regardless of the mean of $\beta^{(l)}$. (The variance of $\langle X_i,\beta^{(l)}\rangle$ depends on the mean of $\beta^{(l)}$.)
In this setting, AM performs poorly for smaller $\delta$ values, but matches or slightly exceeds the performance of EM-AMP for large $\delta$.
    \item \textbf{Different intercepts, signals with  different means.} Figure \ref{fig:MAR_diff_mean_diff_inter} shows the results for the setting $ b=(1,0),\        \E[\bar{B}]=[0,1]^\top, \ \sigma=0.1$.
As mentioned above, the signal mean does not affect the proportion of observations from each sample. Hence, the signal with a larger intercept will have more observations. EM-AMP outperforms AM for the signal with fewer observations for all  values $\delta$, while for the other signal, AM is slightly better for larger values of $\delta$. 
    \item \textbf{Same intercept, signals with different means, higher noise level.} Figure \ref{fig:MAR_diff_mean_same_inter_sig04} shows the results for $ b=(1,1),\         \E[\bar{B}]=[0,1]^\top, \ \sigma=0.4 $.
  The plots show that EM-AMP significantly outperforms AM for all values of $\delta$. AM is quite sensitive to the presence of noise unlike EM-AMP, which is more robust and nearly matches the performance OR-AMP. 
\end{itemize}

\begin{figure}[t]
\centering
\begin{subfigure}[b]{0.45\textwidth}
  \centering
  \includegraphics[width=\textwidth]{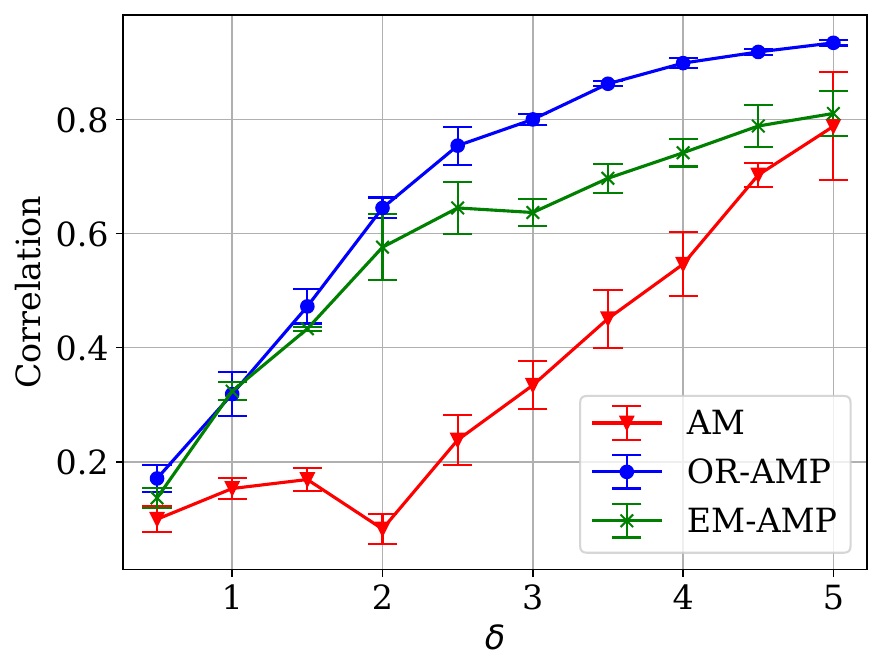}
  \caption{$\beta^{(1)}$}
\end{subfigure}
\begin{subfigure}[b]{0.45\textwidth}
  \centering
  \includegraphics[width=\textwidth]{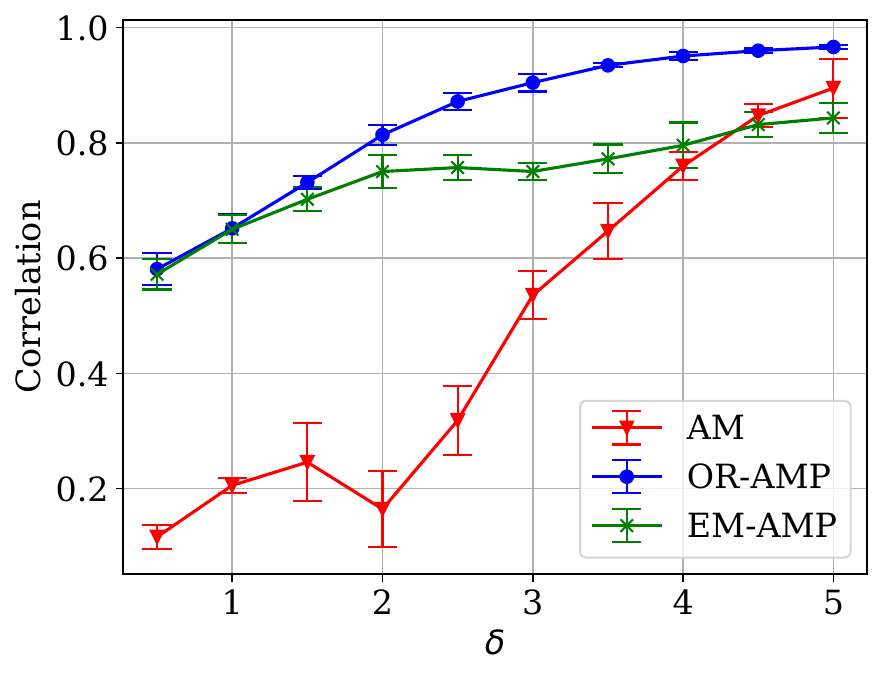}
  \caption{$\beta^{(2)}$}
\end{subfigure}
\caption{MAR: Same intercepts, signals with different means,  noise level $\sigma=0.1$.}
\label{fig:MAR_diff_mean_same_inter}
\end{figure}
\begin{figure}[t]
\centering
\begin{subfigure}[b]{0.45\textwidth}
  \centering
  \includegraphics[width=\textwidth]{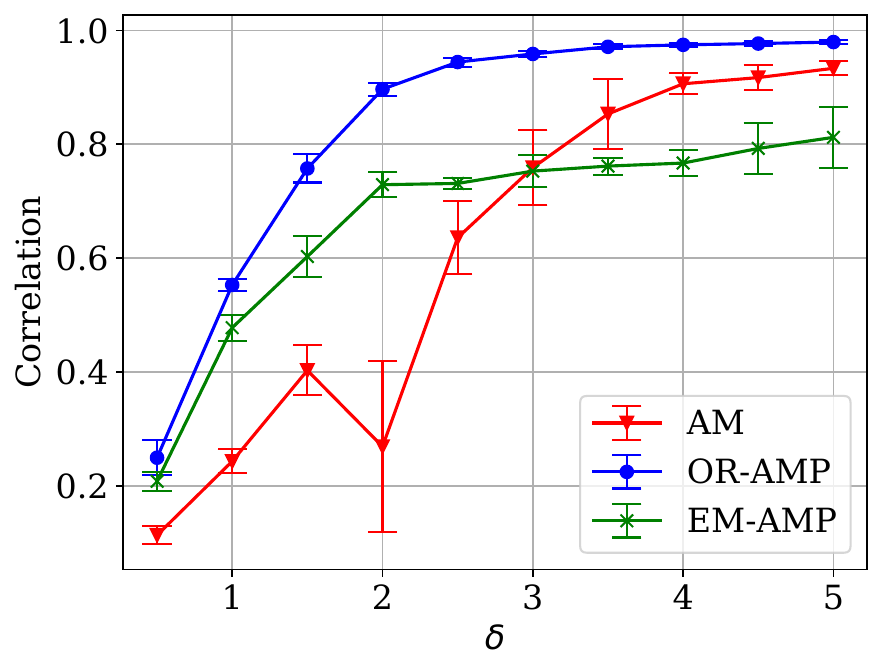}
  \caption{$\beta^{(1)}$}
\end{subfigure}
\begin{subfigure}[b]{0.45\textwidth}
  \centering
  \includegraphics[width=\textwidth]{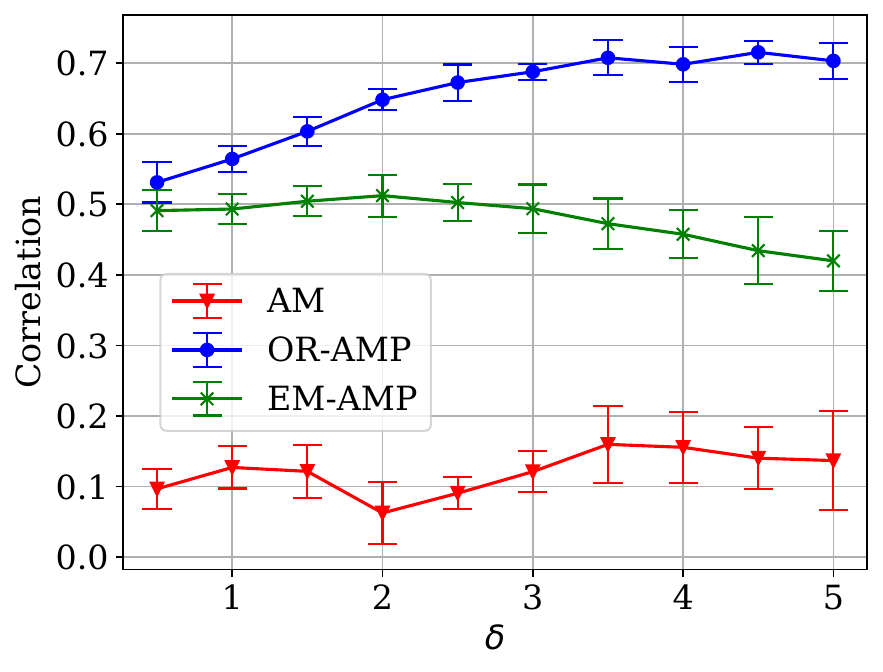}
  \caption{$\beta^{(2)}$}
\end{subfigure}
\caption{MAR: Different intercepts, signals with different means, noise level $\sigma=0.1$.}
\label{fig:MAR_diff_mean_diff_inter}
\end{figure}
\begin{figure}[t]
\centering
\begin{subfigure}[b]{0.45\textwidth}
  \centering
  \includegraphics[width=\textwidth]{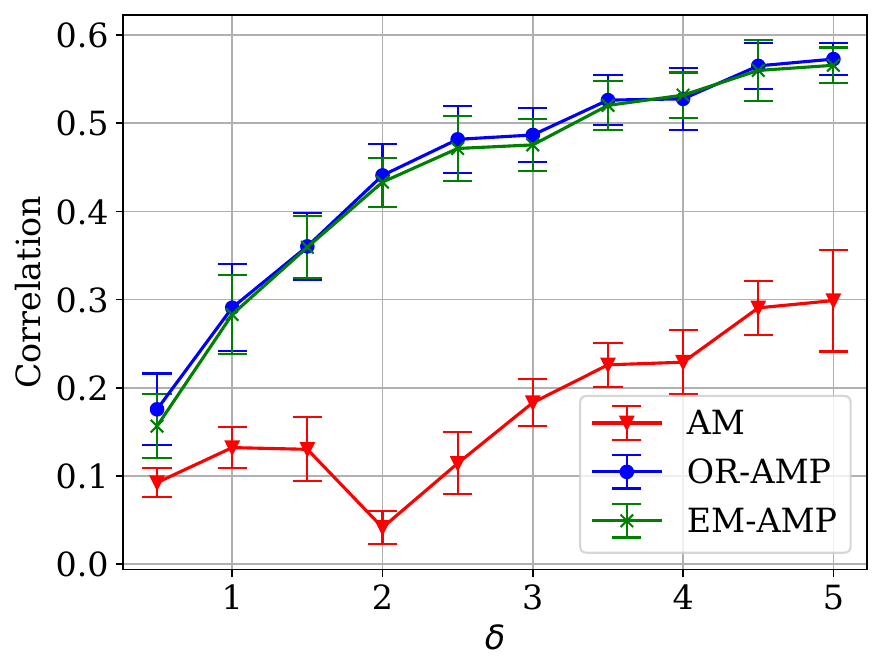}
  \caption{$\beta^{(1)}$}
\end{subfigure}
\begin{subfigure}[b]{0.45\textwidth}
  \centering
  \includegraphics[width=\textwidth]{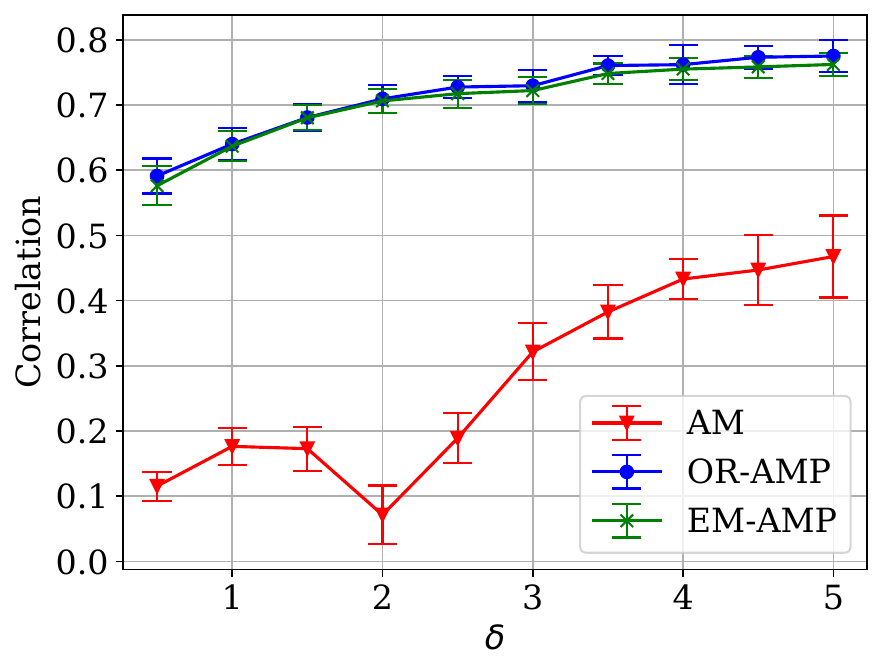}
  \caption{$\beta^{(2)}$}
\end{subfigure}
\caption{MAR: Same intercept, signals with different means, noise level $\sigma=0.4$.}
\label{fig:MAR_diff_mean_same_inter_sig04}
\end{figure}

\paragraph{Hard case.} When entries of both $\beta^{(1)}$ and $\beta^{(2)}$ are generated from the same distribution, and the intercepts $b_1$ and $b_2$ are the same, the estimation problem becomes very challenging. In this case, AM, EM-AMP, and AMP all struggle to give an accurate estimate. 

\begin{figure}[t]
\centering
\begin{subfigure}[b]{0.45\textwidth}
  \centering
  \includegraphics[width=\textwidth]{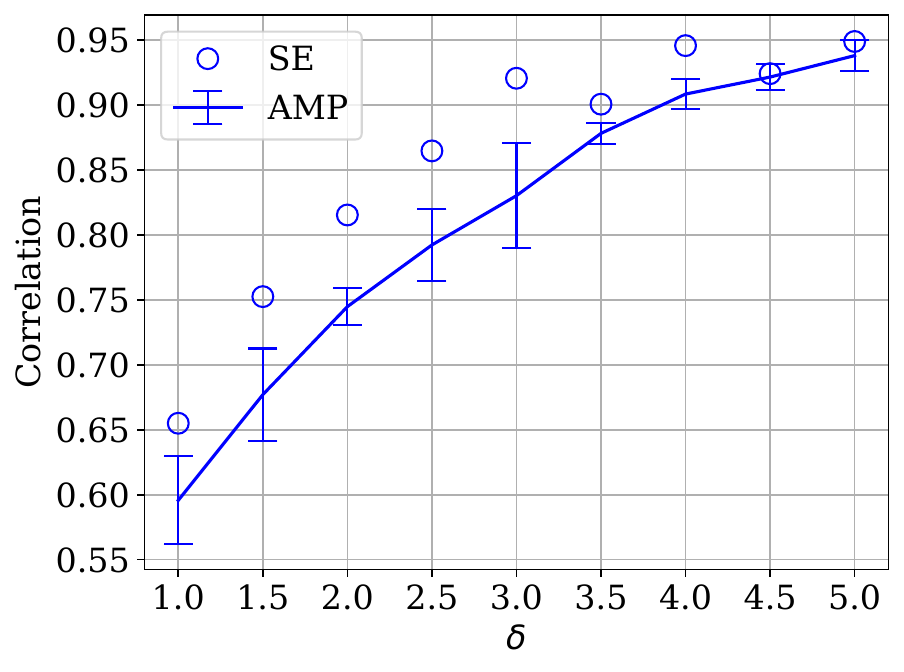}
  \caption{$\beta^{(1)}$}
\end{subfigure}
\begin{subfigure}[b]{0.45\textwidth}
  \centering
  \includegraphics[width=\textwidth]{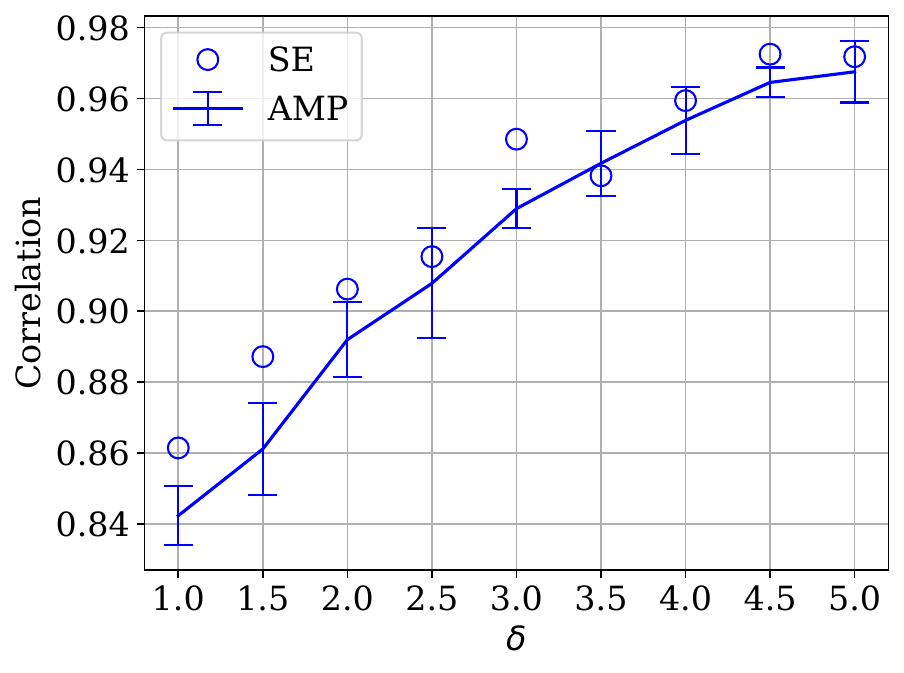}
  \caption{$\beta^{(2)}$}
\end{subfigure}
\begin{subfigure}[b]{0.45\textwidth}
  \centering
  \includegraphics[width=\textwidth]{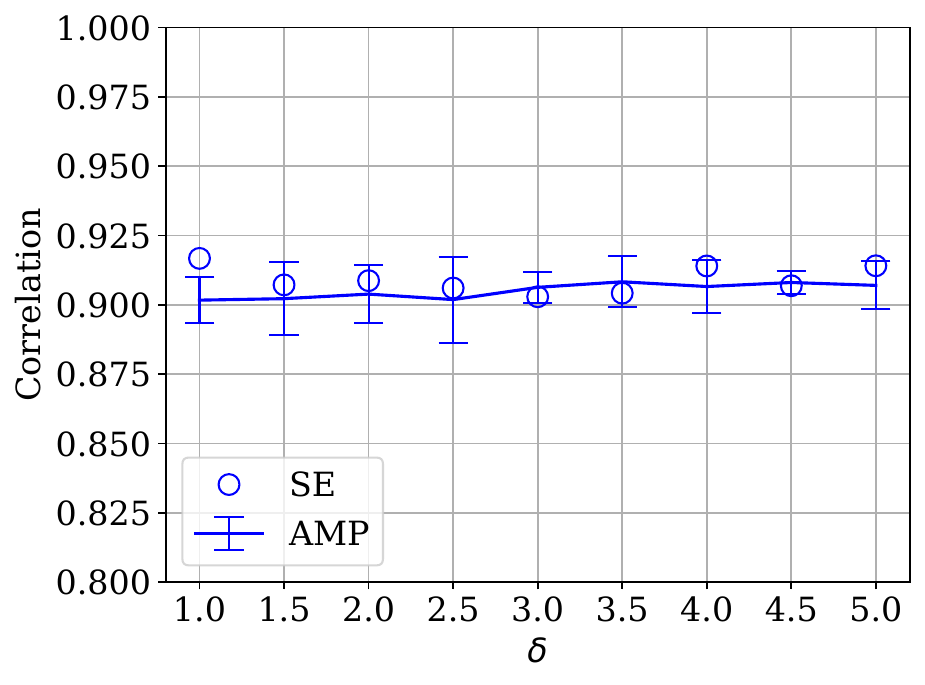}
  \caption{$w^{(1)}$}
\end{subfigure}
\begin{subfigure}[b]{0.45\textwidth}
  \centering
  \includegraphics[width=\textwidth]{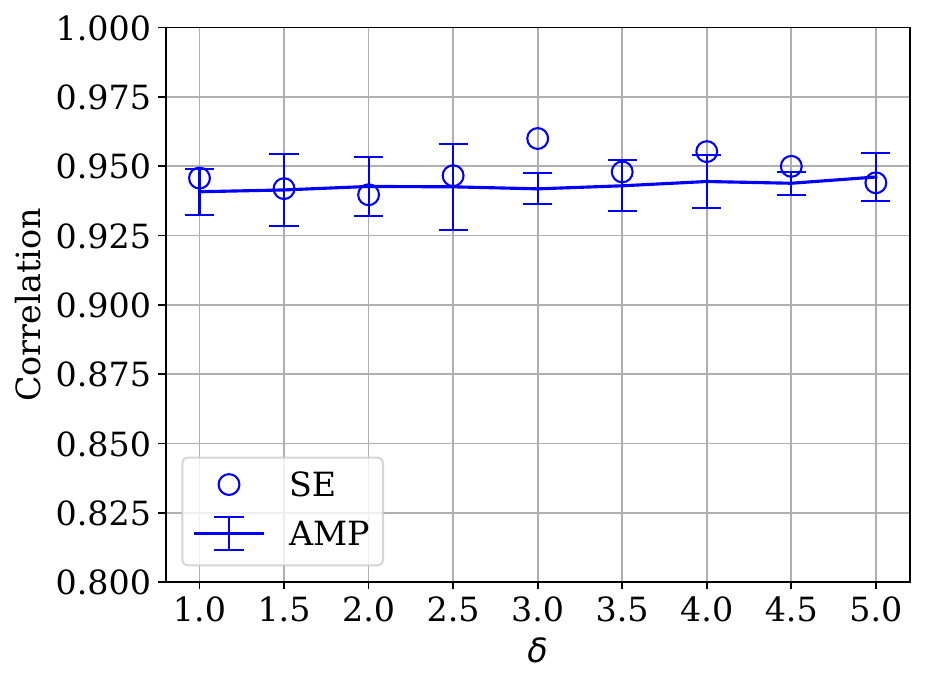}
  \caption{$w^{(2)}$}
\end{subfigure}
\caption{Mixture-of-experts: noise level $\sigma=0.1$.}
\label{fig:MOE_corr_v_delta}
\end{figure}

\subsection{Mixture-of-Experts} \label{sec:MOE}

We consider the MOE model \eqref{eq:MOE_def} with two regressors, two gating parameters, and the identity activation function $\tilde{q}(x)=x$. The model is given by
\begin{align}
    Y_i
    &=\langle X_i,\beta^{(1)}\rangle \, \mathds{1}\left\{\psi_i\leq\frac{\exp(\langle X_i,w^{(1)}\rangle)}{\exp(\langle X_i,w^{(1)}\rangle)+\exp(\langle X_i,w^{(2)}\rangle)}\right\} \nonumber \\
    &\qquad+\langle X_i,\beta^{(2)}\rangle \, \mathds{1}\left\{\psi_i>\frac{\exp(\langle X_i,w^{(1)}\rangle)}{\exp(\langle X_i,w^{(1)}\rangle)+\exp(\langle X_i,w^{(2)}\rangle)}\right\}+\epsilon_i,
\end{align}
where $\psi_i \sim_{\iid}\text{Uniform}[0,1]$, $\epsilon_i\sim_\iid\normal(0,\sigma^2)$, $X_i\sim_\iid\normal(0,I_p/n)$ for $i\in[n]$. 
The signal dimension  is set to $p=500$ and the value of $n$ is varied in our experiments.  
 We use a Gaussian prior for $B=(\beta^{(1)},\beta^{(2)},w^{(1)},w^{(2)})$, where we generate
\begin{align}
    (\beta_j^{(1)},\beta_j^{(2)},w_j^{(1)},w_j^{(2)})
    \sim_\iid
    \normal([1,2,3,4]^\top,I_4), \ \ j\in[p].
\end{align}

We run the AMP algorithm in \eqref{eq:GAMP} with  $g_k=g_k^*$ and $f_k=f_k^*$ (i.e., the optimal choices). 
The initializer $B^0\in\mb{R}^{p\times 4}$ is chosen according to the same distribution, independently of the signal.  The details of the implementation are given in Appendix \ref{appen:MOE_implementation}.
Figure \ref{fig:MOE_corr_v_delta} shows the performance of AMP for MOE. The performance in the plots is measured via the normalized squared correlation given in \eqref{eq:norm_sq_corr}. Each point on the plots is obtained from 5 independent runs, where in each run, we execute AMP with $k=5$. We report the average and error bars at 1 standard deviation of the final iteration. Figure \ref{fig:MOE_corr_v_delta} indicates a good match between the empirical performance of AMP and the theoretical state evolution predictions. The improvement across increasing $\delta$'s is more significant for the regressors than the gating parameters since the latter are easier to estimate because of their larger means.

\section{Proofs and Derivations} \label{sec:proofs}

\subsection{Proof of Theorem \ref{thm:GAMP}} \label{subsec:proof_GAMP}
 
To prove the theorem, we use a change of variables to rewrite \eqref{eq:GAMP} as a new matrix-valued AMP iteration. The new iteration is a special case of an abstract AMP iteration for which a state evolution result has been established by \cite{Jav13}. This state evolution result is then translated to obtain the results in \eqref{eq:matrix-GAMP_SE_result1}-\eqref{eq:matrix-GAMP_SE_result2}.

Given the iteration \eqref{eq:GAMP}, for $k \ge 0$ define
\begin{align}
    & \check{B}^{k+1}:=B^{k+1}-B(\Mu^{k+1}_{B})^\top,  \quad  \check{\Theta}^k
    :=(\Theta,\Theta^k),
  \label{checkB_checkT_def}
\end{align}
where we recall that $\Theta = X B$. For $k \ge 0$, we also define the function $\check{f}_k: \reals^{2L} \to \reals^{2L}$:
\begin{align}
\check{f}_k(\check{B}^{k},B)=(B, \ f_{k}(\check{B}^{k}+B(\Mu^{k}_{B})^\top)).
\label{eq:check_fk_def}
\end{align}
Then, we claim that the original AMP iteration \eqref{eq:GAMP} is equivalent to the following one:
\begin{equation}
\begin{split}
&        \check{\Theta}^k = X \check{f}_k(\check{B}^{k},B)  \,  -  \,  h_{k-1}(\check{\Theta}^{k-1}, \Psi) (\check{F}^k)^{\top} \\
& \check{B}^{k+1} = X^{\top} h_k(\check{\Theta}^k, \Psi) \,  -  \,
\check{f}_k(\check{B}^{k},B) (\check{C}^k)^{\top},
\end{split}
\label{eq:mod_AMP}
\end{equation}
where $h_k$ is defined in \eqref{eq:hk_def}, and the matrices  
$ \check{C}^k \in \reals^{L \times 2L} $, $\check{F}^{k+1}  \in \reals^{2L \times L}$ are defined as:
\begin{align}
\begin{split}
    & \check{C}^k = 
    \begin{bmatrix}
    \E[\partial_Z h_k(Z,Z^k,\bar{\Psi})],\ \ 
    \frac{1}{n}\sum_{i=1}^n\partial_{\Theta_i^k} h_k(\Theta_i,\Theta_i^k,\Psi_i)
    \end{bmatrix}  \\
    & \check{F}^{k+1} = \begin{bmatrix} 0_{L \times L} \\
    \frac{1}{n}\sum_{j=1}^p f_{k+1}'(\check{B}_j^k+B_j(\Mu^k_B)^\top)
    \end{bmatrix}.
\end{split}
\label{eq:check_CkFk1_def} 
\end{align}
 The iteration \eqref{eq:mod_AMP} is initialized with $\check{\Theta}^0 = (\Theta, \ X\hB^0)$, where $\hB^0$ is the initializer of the original AMP. The equivalence between the iteration in \eqref{eq:mod_AMP} and the original AMP in \eqref{eq:GAMP} can be seen by substituting the definitions \eqref{checkB_checkT_def} and \eqref{eq:check_fk_def} into \eqref{eq:mod_AMP}, and recalling from \eqref{eq:SE_Mk1B} that $\Mu^{k+1}_{B} = \E[\partial_Z h_k(Z,Z^k,\bar{\Psi})]$. 

A key feature of the new iteration in  \eqref{eq:mod_AMP} is that, in addition to the previous iterate, the inputs to the functions $\check{f}_k$ and $h_k$ are  auxiliary variables ($B, \Psi$, respectively) that are independent of $X$. This is in contrast to the AMP in \eqref{eq:GAMP} where the input $Y$ to the function $g_k$ is not independent of $X$. The recursion in \eqref{eq:mod_AMP} is a special case of an abstract AMP recursion with matrix-valued iterates for which a state evolution result has been established by \cite{Jav13}. (We will use a version of the result described in \cite[Sec. 6.7]{Fen21}).) The standard form of the abstract AMP recursion uses empirical estimates (instead of expected values) for the first $L$ columns of $\check{C}^k$ in \eqref{eq:check_CkFk1_def}. However,  the state evolution result still remains valid for the recursion \eqref{eq:mod_AMP} (see Remark 4.3 of \cite{Fen21}). This result, stated in Proposition \ref{prop:abst_AMP} below, guarantees that the empirical distributions of the rows of $\check{\Theta}^k$ and $\check{B}^{k+1}$ converge to the Gaussian distributions $\normal(0, \check{\Sigma}^{k})$  and  $\normal(0, \check{\Tau}^{k+1})$, respectively, where the deterministic covariance matrices $\check{\Sigma}^{k} \in \reals^{2L \times 2L}, \, \check{\Tau}^{k+1}  \in \reals^{L \times L}$ are defined by the following state evolution recursion. Let $\check{\Sigma}^{0} = \Sigma^0$ (defined in Assumption \textbf{(A1)}), and for $k \ge 0$:
\begin{align}
     \check{\Tau}^{k+1} &  = \E\big[h_k(G_\sigma^k,\bar{\Psi})h_k(G_\sigma^k,\bar{\Psi})^\top\big] \label{eq:check_Tau}\\
    \check{\Sigma}^{k+1} & =  \delta^{-1}\E\big[\check{f}_{k+1}(G^{k+1}_\tau,\bar{B})\check{f}_{k+1}(G^{k+1}_\tau,\bar{B})^\top\big]
    = 
    \begin{bmatrix}
    \delta^{-1} \E[\bar{B}\bar{B}^\top] &  \check{\Sigma}^{k+1}_{(12)} \\
   \big( \check{\Sigma}^{k+1}_{(12)} \big)^{\top} & \check{\Sigma}^{k+1}_{(22)}
    \end{bmatrix},
    \label{eq:check_SE_rec}
\end{align}
where  
\begin{align}
    & \check{\Sigma}^{k+1}_{(12)} = \left( \check{\Sigma}^{k+1}_{(21)} \right)^{\top} = \delta^{-1} \E\big[ \bar{B} f_{k+1}(G^{k+1}_\tau+\Mu_{k+1}^{B}\bar{B})^\top \big]  \\
    & \check{\Sigma}^{k+1}_{(22)} =  \delta^{-1} \E\big[ f_{k+1}(G^{k+1}_\tau+\Mu_{k+1}^{B}\bar{B})\cdot f_{k+1}(G^{k+1}_\tau+\Mu_{k+1}^{B}\bar{B})^\top \big].
\end{align}
Here we take $G_\sigma^k \sim N(0,\check{\Sigma}^k)$  independent of $\bar{\Psi}\sim P_{\bar{\Psi}}$, and $G^{k+1}_\tau \sim N(0,\check{\Tau}^{k+1})$ independent of $\bar{B}\sim P_{\bar{B}}$. 
Comparing the recursive definitions of $(\Tau_B^{k+1}, \Sigma^{k+1})$ in \eqref{eq:SE_Tk1B}-\eqref{eq:Sig_comps} and of $(\check{\Tau}^{k+1}, \check{\Sigma}^{k+1})$ in \eqref{eq:check_Tau}-\eqref{eq:check_SE_rec}, and noting that they are both initialized with $\Sigma^0$, we have that $\check{\Tau}^{k+1} = \Tau^{k+1}_B$ and $\check{\Sigma}^{k+1} = \Sigma^{k+1}$ for $k \ge 0$.

The following proposition follows from the state evolution result \citep[Sec.\ 6.7]{Fen21} for an abstract AMP recursion with  matrix-valued iterates. 
\begin{proposition}
\label{prop:abst_AMP} 
Assume the setting of Theorem \ref{thm:GAMP}. For the abstract AMP in \eqref{eq:mod_AMP}, for $k \ge 0$ we have:
\begin{align}
    &\sup_{\eta\in\textup{PL}_{2L}(r,1)}\Big|\frac{1}{p}\sum_{j=1}^{p}\eta( \check{B}_j^{k+1}, \, B_j)  -\E[\eta( G^{k+1}_\tau, \, \bar{B} )]\Big|\stackrel{c}{\rightarrow}0 ,\label{eq:abst_matrix-GAMP_SE_result1} \\
    &\sup_{\eta\in\textup{PL}_{2L + L_{\Psi}}(r,1)}\Big|\frac{1}{n}\sum_{i=1}^{n}\eta(\check{\Theta}^k_i,\Psi_i) -\E[\eta( G^{k}_\sigma,\bar{\Psi})]\Big|\stackrel{c}{\rightarrow}0, \label{eq:abst_matrix-GAMP_SE_result2}
\end{align}
as $n,p\rightarrow\infty$ with $n/p\rightarrow\delta$.
\end{proposition}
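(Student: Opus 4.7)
The plan is to recognize that \eqref{eq:mod_AMP} is an instance of the matrix-valued abstract AMP recursion whose state evolution was established by \cite{Jav13} and presented in \cite[Sec.~6.7]{Fen21}, and then to verify the hypotheses of that theorem and invoke it directly. The structural feature that enables this reduction is that the inputs to $\check{f}_k$ and $h_k$ in \eqref{eq:mod_AMP} involve only the side-information matrices $B$ and $\Psi$, both of which are independent of the Gaussian design $X$; this is precisely the property that the original recursion \eqref{eq:GAMP} lacked, since $Y = q(XB,\Psi)$ depends on $X$.

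First, I would verify the regularity and moment hypotheses required by the abstract state evolution. The nonlinearity $h_k$ is Lipschitz by \textbf{(A2)}, and $\check{f}_k(\check{B}, B) = (B, f_k(\check{B} + B(\Mu^k_B)^\top))$ inherits Lipschitzness from $f_k$ via composition with an affine map in its arguments. The finite $r$th-moment assumptions on $\bar{B}$ and $\bar{\Psi}$ from Section \ref{subsec:prelim_model} guarantee that the covariance expressions defining $\check{\Tau}^{k+1}$ and $\check{\Sigma}^{k+1}$ in \eqref{eq:check_Tau}--\eqref{eq:check_SE_rec} are well-defined and that pseudo-Lipschitz test functions of order $r$ are integrable against the claimed Gaussian limits.

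Next, I would check the initialization. Assumption \textbf{(A1)} gives the empirical-covariance convergence of $(B,\widehat{B}^0)$ to $\Sigma^0$ together with a uniform $r$th-moment bound on $\widehat{B}^0$. Since $X$ has i.i.d.\ $\normal(0,1/n)$ entries and is independent of $(B,\widehat{B}^0,\Psi)$, standard Gaussian conditioning arguments then show that the empirical joint distribution of the rows of $\check{\Theta}^0 = (XB,\,X\widehat{B}^0)$ together with $\Psi$ converges in $r$-Wasserstein distance to $\normal(0,\check{\Sigma}^0)$ independent of $P_{\bar{\Psi}}$, which is the initial condition required by the abstract theorem.

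The one non-standard element, and what I expect to be the main technical obstacle, is that the first $L$ columns of $\check{C}^k$ in \eqref{eq:check_CkFk1_def} use the population expectation $\E[\partial_Z h_k(Z,Z^k,\bar{\Psi})]$ in place of the empirical average $\frac{1}{n}\sum_i \partial_Z h_k(\Theta_i,\Theta_i^k,\Psi_i)$ prescribed by the standard form. As indicated in the text preceding the proposition and in Remark 4.3 of \cite{Fen21}, this substitution is justified by an induction on $k$: the state evolution conclusion at step $k-1$, applied to the test function $\partial_Z h_k$, identifies the limit of the empirical average as the population expectation, so the two forms of $\check{C}^k$ agree asymptotically. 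The condition $\mb{P}[(Z^k,\bar{Y})\in \mathcal{D}_k]=0$ from \textbf{(A2)} ensures that $\partial_Z h_k$ is almost-everywhere continuous on the support of the limit, which is enough to pass from pseudo-Lipschitz convergence at step $k-1$ to convergence of the empirical derivative averages at step $k$. Once this inductive step is in place, the matrix-valued state evolution theorem of \cite[Sec.~6.7]{Fen21} directly yields \eqref{eq:abst_matrix-GAMP_SE_result1}--\eqref{eq:abst_matrix-GAMP_SE_result2}.
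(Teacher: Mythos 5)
Your proposal is correct and follows essentially the same route as the paper, which simply invokes the matrix-valued abstract AMP state evolution of \cite{Jav13} as presented in \cite[Sec.~6.7]{Fen21} and appeals to Remark 4.3 of \cite{Fen21} for the substitution of the population expectation in the first $L$ columns of $\check{C}^k$. Your write-up is in fact more explicit than the paper's, which states the proposition as a direct consequence of the cited results without spelling out the verification of the Lipschitz, moment, and initialization hypotheses.
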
 
 To obtain the result \eqref{eq:matrix-GAMP_SE_result1}, we recall the definition of $\check{B}^{k+1}$ from \eqref{checkB_checkT_def}, and in \eqref{eq:abst_matrix-GAMP_SE_result1} we take 
$\eta(\check{B}^{k+1}, B ) = c_{k,r} \phi(\check{B}^{k+1} + B(\Mu^{k+1}_B)^\top, B )$ for a suitably small constant $c_{k,r} >0$, and recall that $G^{k+1}_\tau \sim \normal(0, \Tau^{k+1}_B)$.  To obtain $\eqref{eq:matrix-GAMP_SE_result2}$,  we recall the definition of $\check{\Theta}^k$ from \eqref{checkB_checkT_def}, and in \eqref{eq:abst_matrix-GAMP_SE_result2} take $\eta(\check{\Theta}^k, \Psi) = \phi(\Theta^k, \Theta, \Psi)$. Since $\check{\Sigma}^k = \Sigma^k$,  we have: 
\begin{align}
    (G^{k}_\sigma,\bar{\Psi}) \stackrel{d}{=} (Z, Z^k, \bar{\Psi} )
\stackrel{d}{=} (Z, \, \Mu^k_{\Theta}Z +  G^k_{\Theta}, \bar{\Psi}),
\end{align}
where the last equality follows from \eqref{eq:ZZk_joint}. This completes the proof of the theorem. $\square$

\subsection{Proof of Proposition \ref{prop:optimal_fk}} \label{subsec:prop_proof}
The proof relies on the following  generalized Cauchy-Schwarz inequality for covariance matrices.
\begin{lemma}
\textup{\cite[Lemma 1]{Lav08}} \label{lem:CS_extension}
Let $U,V \in \reals^L$  random vectors such that $\E[\| U \|^2_2]<\infty$, $\E[\| V \|^2_2]<\infty$, and $\E[VV^\top]$ is invertible. Then
\begin{align}
    \E[U U^\top] - \E[U V^\top] \big( \E[V V^\top] \big)^{-1}\E[V U^\top] \succeq 0. \label{eq:CS_ineq}
\end{align}
\end{lemma}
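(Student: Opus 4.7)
The plan is to recognize the left-hand side of \eqref{eq:CS_ineq} as the second-moment matrix of the residual of the best (in mean-square) linear predictor of $U$ from $V$, which is automatically positive semidefinite. Concretely, I would define
\[
A := \E[U V^\top] \big(\E[V V^\top]\big)^{-1} \in \reals^{L \times L}, \qquad W := U - A V,
\]
so that $A$ is well-defined by the invertibility hypothesis on $\E[V V^\top]$ and $W$ is a random vector in $\reals^L$ with finite second moments (since $U$ and $V$ do). Then for every $x \in \reals^L$,
\[
x^\top \E[W W^\top] x \;=\; \E[(x^\top W)^2] \;\ge\; 0,
\]
so $\E[W W^\top] \succeq 0$.

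The remaining step is a direct expansion to verify that $\E[W W^\top]$ equals the claimed left-hand side. One computes
\[
\E[W W^\top] \;=\; \E[U U^\top] - \E[U V^\top] A^\top - A \, \E[V U^\top] + A \, \E[V V^\top] \, A^\top,
\]
and then substitutes the definition of $A$. The identity $A \, \E[V V^\top] = \E[U V^\top]$ gives $A \, \E[V V^\top] A^\top = \E[U V^\top] A^\top$, so the last two terms combine into $-\E[U V^\top] A^\top = -\E[U V^\top] \big(\E[V V^\top]\big)^{-1} \E[V U^\top]$. This yields
\[
\E[W W^\top] \;=\; \E[U U^\top] - \E[U V^\top] \big(\E[V V^\top]\big)^{-1} \E[V U^\top],
\]
matching the left-hand side of \eqref{eq:CS_ineq}; combined with $\E[W W^\top] \succeq 0$, this proves the inequality.

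There is no substantive obstacle here: the argument is a short linear-algebraic calculation, and the only hypothesis used beyond finite second moments is the invertibility of $\E[V V^\top]$, which is needed solely to ensure $A$ is well-defined and that $A \, \E[V V^\top] A^\top$ simplifies as above. One could relax the invertibility assumption by replacing $(\E[V V^\top])^{-1}$ with the Moore--Penrose pseudoinverse and checking that the range of $\E[V U^\top]$ lies in the range of $\E[V V^\top]$, but this lies outside the statement as written.
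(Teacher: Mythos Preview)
Your proof is correct. The paper does not actually prove this lemma; it simply cites it as \cite[Lemma 1]{Lav08} and uses it as a black box in the proof of Proposition \ref{prop:optimal_fk}. Your residual/best-linear-predictor argument is the standard elementary proof of this generalized Cauchy--Schwarz inequality, so there is nothing to compare against in the paper itself.
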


\paragraph{Proof of part 1.}  Using the law of total expectation,  $\Sigma^k_{(12)}$ in \eqref{eq:Sig_comps} can be written as:
\begin{align}
    \delta \Sigma^k_{(12)}
    =\E[\bar{B}f_{k}(\Mu^{k}_{B}\bar{B}+G^{k}_B)^\top]  
    =\E\big[\E[\bar{B}f_{k}(\Mu^{k}_{B}\bar{B}+G^{k}_B)^\top\mid\Mu^{k}_{B}\bar{B}+G^{k}_B ]\big] 
    = \E[ f_{k}^* f_{k}^\top ], \label{eq:Sigk_12}
\end{align}
where we use the shorthand $f_k \equiv f_{k}(\Mu^{k}_{B}\bar{B}+G^{k}_B)$
and $f_k^* \equiv \E[\bar{B} \, | \, \Mu^{k}_{B}\bar{B}+G^{k}_B ]$. Using Lemma \ref{lem:CS_extension} we have that 
\begin{align}
    & \E[f_k^*(f_k^*)^\top]-\E[f_k^*f_k^\top]\E[f_kf_k^\top]^{-1}\E[f_k(f^*_k)^\top] \succeq 0  \nonumber \\
    & \implies \delta^{-1}\E[f_k^*(f_k^*)^\top]-\Sigma^k_{(12)}(\Sigma^k_{(22)})^{-1}\Sigma^k_{(21)} \succeq0, \label{eq:CS_ineq_gk}
\end{align}
 where we have used \eqref{eq:Sigk_12} and \eqref{eq:Sig_comps} for the second line. Adding and subtracting $\Tau^k_\Theta$ in \eqref{eq:CS_ineq_gk} we obtain
 \begin{align}
     & \Tau^k_{\Theta} - \big( \underbrace{\Tau^k_{\Theta} - \delta^{-1}\E[f_k^*(f_k^*)^\top] + \Sigma^k_{(12)}(\Sigma_{(22)}^k)^{-1}\Sigma_{(21)}^k}_{: = \Gamma^k_\Theta} \big)\succeq0.
     \label{eq:TkTh_ineq}
 \end{align}
Multiplying the matrix $(\Tau^k_{\Theta} - \Gamma^k_\Theta)$ in \eqref{eq:TkTh_ineq} by $\left( \Mu^{k}_{\Theta}\right)^{-1}$ on the left and $\left( \big(\Mu^{k}_{\Theta}\big)^{-1} \right)^\top$ on the right maintains positive definiteness. This yields
\begin{align}
    N^k_{\Theta}  - \left(\Mu^{k}_{\Theta}\right)^{-1} \Gamma^k_{\Theta} \left( \big(\Mu^{k}_{\Theta}\big)^{-1} \right)^\top \succeq 0, 
    \label{eq:Nk_Th_posdef}
\end{align}
where we have used the formula for $N^k_{\Theta}$ from \eqref{eq:eff_cov_def}. Eq. \eqref{eq:Nk_Th_posdef} implies 
\begin{align}
\Tr(N^k_{\Theta}) \ge \Tr\left( \left(\Mu^{k}_{\Theta}\right)^{-1} \Gamma^k_{\Theta} \left( \big(\Mu^{k}_{\Theta}\big)^{-1} \right)^\top \right). \label{eq:Tr_NkTh_LB}
\end{align}
Now, using the formula for $\Tau^k_\Theta$ in \eqref{eq:SE_TkTh} it can be verified that when $f_k =f_k^*$, we have
\begin{align}
    \Tau^k_\Theta = \Gamma^k_\Theta 
    = \frac{1}{\delta}\Big( \E\big[f_k^*  (f_k^*)^{\top}\big] - \E\big[f_k^*  (f_k^*)^{\top}\big] \big(\E\big[ \bar{B} \bar{B}^{\top} \big] \big)^{-1} \E\big[f_k^*  (f_k^*)^{\top}\big]  \Big). \label{eq:Tauk_fkstar}
\end{align}
Therefore \eqref{eq:TkTh_ineq}-\eqref{eq:Tr_NkTh_LB} are satisfied with equality when $f_k = f_k^*$, which proves the first part of the proposition.

\paragraph{Proof of part 2.} We begin by introducing the multivariate Stein's lemma:
\begin{lemma} \label{lem:gen_steins}
Let $x=(x_1,\dots,x_L)$ and $g:\mb{R}^L\rightarrow\mb{R}^L$ be such that for $j=1,\dots,L$, the function $x_j\rightarrow g_l(x_1,\dots,x_L)$ (where $g_l(x_1,\dots,x_L)$ is the $l$th entry of $g(x_1,\dots,x_L)$) is absolutely continuous for Lebesgue almost every $(x_i:i\neq j)\in\mb{R}^{L-1}$, with weak derivative $\partial_{x_j} g_l:\mb{R}^L\rightarrow\mb{R}$ satisfying $\E[|\partial_{x_j}g_l(x)|]<\infty$. Let $\nabla g(x)=(\nabla g_1(x),\dots,\nabla g_L(x))^\top\in\mb{R}^{L\times L}$ where $\nabla g_l(x)=\big(\partial_{x_1}g_l(x),\dots,\partial_{x_L}g_l(x)\big)^\top$ for $x\in\mb{R}^L$. If $X\sim \normal(\mu,\Sigma)$ with $\Sigma$ positive definite, then
\begin{align}
    \E[\nabla g(X)]=\Big(\Sigma^{-1}\E\big[(X-\mu)g(X)^\top\big]\Big)^\top.
\end{align}
\end{lemma}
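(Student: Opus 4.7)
The plan is to reduce the matrix-valued claim to a family of scalar identities and then prove each one by integration by parts against the Gaussian density. First, I would observe that the identity is equivalent to verifying, for each $l \in [L]$, the vector equation $\E[\nabla g_l(X)] = \Sigma^{-1}\E[(X-\mu)g_l(X)]$. Indeed, the $l$th row of $\E[\nabla g(X)]$ is $\E[\nabla g_l(X)]^\top$, and the $l$th row of $\bigl(\Sigma^{-1}\E[(X-\mu)g(X)^\top]\bigr)^\top$ is the $l$th column of $\Sigma^{-1}\E[(X-\mu)g(X)^\top]$ transposed, which equals $\bigl(\Sigma^{-1}\E[(X-\mu)g_l(X)]\bigr)^\top$ after matching $(i,l)$ entries of $(X-\mu)g(X)^\top$. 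So it suffices to fix $l$ and work with the scalar function $g_l$.

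Next, I would reduce to the standard case $\mu = 0$, $\Sigma = I_L$ via the whitening transformation $Y := \Sigma^{-1/2}(X - \mu) \sim \normal(0, I_L)$. Setting $\tilde{g}_l(y) := g_l(\Sigma^{1/2} y + \mu)$, the chain rule together with symmetry of $\Sigma^{1/2}$ yields $\nabla_y \tilde{g}_l(y) = \Sigma^{1/2}\nabla_x g_l(\Sigma^{1/2} y + \mu)$, and the weak-derivative regularity assumed for $g_l$ is preserved under this linear change of variables. If one can establish the standard-Gaussian identity $\E[\nabla_y \tilde{g}_l(Y)] = \E[Y \tilde{g}_l(Y)]$, then premultiplying by $\Sigma^{-1/2}$ and substituting back produces the desired formula after one more application of $\Sigma^{-1/2}$.

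For the standard Gaussian step, I would exploit the product structure $\phi(y) = \prod_{j=1}^L (2\pi)^{-1/2} e^{-y_j^2/2}$ together with Fubini's theorem to reduce, for each coordinate $j$, to the one-dimensional identity
\begin{equation*}
\int_{\reals} \partial_{y_j} \tilde{g}_l(y)\, e^{-y_j^2/2}\, dy_j = \int_{\reals} y_j\, \tilde{g}_l(y)\, e^{-y_j^2/2}\, dy_j,
\end{equation*}
which is a one-dimensional integration by parts combined with $\frac{d}{dy_j} e^{-y_j^2/2} = -y_j e^{-y_j^2/2}$. The absolute continuity of $y_j \mapsto \tilde{g}_l(y)$ and the integrability condition $\E[|\partial_{x_j} g_l(X)|] < \infty$ (which transfers to $\tilde{g}_l$ through the linear change of variables) justify applying the fundamental theorem of calculus slicewise.

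The main obstacle is making the boundary terms at $y_j = \pm \infty$ vanish given only absolute continuity of $\tilde{g}_l$ along coordinate lines and the $L^1$ integrability of its weak derivative, rather than the classical smoothness and decay hypotheses. I would handle this by a standard truncation argument: restrict to $y_j \in [-R, R]$ where integration by parts is unambiguous, then take $R \to \infty$, using the super-polynomial decay of the Gaussian factor to dominate the growth of $\tilde{g}_l$ permitted by the integrability assumption and invoking dominated convergence for the interior integrals. Assembling the resulting scalar identities across coordinates $j$ yields $\E[\nabla_y \tilde{g}_l(Y)] = \E[Y\tilde{g}_l(Y)]$, which via the reductions above proves the lemma.
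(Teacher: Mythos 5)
Your proposal is correct, but it takes a more self-contained route than the paper. Both arguments begin with the same reduction: the matrix identity is equivalent to the $L$ column-wise identities $\E[(X-\mu)g_l(X)]=\Sigma\,\E[\nabla g_l(X)]$, $l\in[L]$, after which one just stacks columns and transposes. The paper stops there -- it invokes the vector-input, scalar-output multivariate Stein's lemma (with general covariance $\Sigma$) as a cited black box \citep[Lemma 6.20]{Fen21}, so its entire proof is a four-line bookkeeping computation. You instead re-derive that cited lemma from first principles: whitening via $Y=\Sigma^{-1/2}(X-\mu)$ with the chain rule $\nabla_y\tilde g_l=\Sigma^{1/2}\nabla_x g_l$, then Fubini and one-dimensional integration by parts against $e^{-y_j^2/2}$, with a truncation argument to kill the boundary terms. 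This buys independence from the external reference at the cost of two technical points you gloss over slightly: (i) the hypothesis is coordinate-wise absolute continuity in the $x$-coordinates, and transferring it to the $y$-coordinates (which are nontrivial linear combinations) requires the standard identification of the ACL property with $W^{1,1}_{\mathrm{loc}}$ weak differentiability, which is invariant under linear maps; and (ii) the finiteness of $\E[(X-\mu)g(X)^\top]$ is not among the stated hypotheses and must be extracted along the way (it follows from the fundamental-theorem-of-calculus representation of $\tilde g_l$ along slices together with the integrability of the weak derivative and Gaussian decay, essentially the same estimate that kills your boundary terms). Both points are routine, so your argument is sound; it is simply doing the work the paper outsources.
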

\begin{proof}
We have
\begin{equation}
\begin{split}
     \E[(X-\mu)g(X)^\top]
    &=\big(\E[(X-\mu)g_1(X)],\dots,\E[(X-\mu)g_L(X)]\big) \\
    &\stackrel{(a)}{=}(\Sigma\E[\nabla g_1(X)],\dots,\Sigma\E[\nabla g_L(X)]) \\
    &=\Sigma \E[(\nabla g_1(X),\dots,\nabla g_L(X))] \\
    &\stackrel{(b)}{=}\Sigma \E[\nabla g(X)]^\top,
\end{split}
\end{equation}
where (a) uses the multivariate Stein's Lemma from \cite[Lemma 6.20]{Fen21} which states that under our conditions we have $\E[Xg_l(X)]=\Sigma\E[\nabla g_l(X)]$ for $l=1,\dots,L$, and (b) uses the definition of $\nabla g(x)$. Finally, rearranging the above equation and taking the transpose gives the result.
\end{proof}
Next, we use Lemma \ref{lem:gen_steins} to show that 
\begin{align}
    \Mu^{k+1}_B=\E\big[g_k(Z^k,\bar{Y})g_k^*(Z^k,\bar{Y})^\top\big],
    \label{eq:Mk1_B_alt}
\end{align}
where $g_k^*$ is defined in \eqref{eq:gk_opt_def}. Indeed, using the law 
of total expectation we have
\begin{equation}
\begin{split}
    \Mu^{k+1}_B 
    &= \E\Big[\E[\partial_{Z}h_k(Z,Z^k,\bar{\Psi})|Z^k]\Big] \\
    &\stackrel{(a)}{=}\E\Big[\Cov[Z|Z^k]^{-1}\E\big[(Z-\E[Z|Z^k])h_k(Z,Z^k,\bar{\Psi})^\top\big\lvert \,  Z^k \big]\Big]^\top \\
    & = \E[\Cov[Z|Z^k]^{-1}(Z-\E[Z|Z^k])h_k(Z,Z^k,\bar{\Psi})^\top]^\top \\
    & = \E\Big[\E\big[\Cov[Z|Z^k]^{-1}(Z-\E[Z|Z^k])h_k(Z,Z^k,\bar{\Psi})^\top \big\lvert \, Z^k,\bar{Y}\big]\Big]^\top \\
    & \stackrel{(b)}{=} \E\Big[g_k^*(Z^k,\bar{Y})h_k(Z,Z^k,\bar{\Psi})^\top\Big]^\top \\
    & \stackrel{(c)}{=} \E\big[g_k(Z^k,\bar{Y})g_k^*(Z^k,\bar{Y})^\top\big]. \label{eq:deriv_G_w.r.t._Z}
    \end{split}
\end{equation}
Here (a) applies Lemma \ref{lem:gen_steins}, (b) follows from the definition of $g_k^*$ in \eqref{eq:gk_opt_def}, and (c) from \eqref{eq:hk_def}.  Using the shorthand $g_k \equiv g_k(Z^k,\bar{Y})$ and 
$g_k^* \equiv g_k^*(Z^k,\bar{Y})$, 
from Lemma \ref{lem:CS_extension} we have:
\begin{align}
   \E[ g_k^* (g_k^*)^{\top} ]   -  \E[g_k^* g_k^{\top}] 
  \Big( \E[  g_k g_k^\top] \Big)^{-1}    \E[ g_k (g_k^*)^{\top}] 
    \succeq 0  
     \Leftrightarrow &  \ 
    \E[ g_k^* (g_k^*)^{\top} ]   -  \left( N^{k+1}_B \right)^{-1} \succeq 0
    \label{eq:N_k1_B_inv} \\
      \Leftrightarrow &  \ 
    \Big( \E[ g_k^* (g_k^*)^{\top} ] \Big)^{-1} - N^{k+1}_B \preceq 0,
    \label{eq:N_k1_B}
\end{align}
where \eqref{eq:N_k1_B_inv} is obtained by recalling from \eqref{eq:eff_cov_def} that $(N^{k+1}_B)^{-1} = \Mu^{k+1}_{B}\left( \Tau^{k+1}_{B} \right)^{-1} \big(\Mu^{k+1}_{B}\big)^\top$, and using the expressions for $\Mu^{k+1}_B$ and $\Tau^{k+1}_B$ in  \eqref{eq:Mk1_B_alt} and \eqref{eq:SE_Tk1B}. Eq.~\eqref{eq:N_k1_B} follows from the fact that if $P$ and $Q$ are positive definite matrices such that $P-Q \succeq0$, then $P^{-1}-Q^{-1}\preceq 0$. From \eqref{eq:N_k1_B}, we have that 
\begin{align} \Tr(N^{k+1}_B) \le  \Tr\left(\big( \E[ g_k^* (g_k^*)^{\top} ] \big)^{-1} \right),
\end{align}
with equality if $g_k = g_k^*$. This completes the proof of the second part of the proposition. $\square$

\subsection{Derivation of the EM Step  for Max-Affine Regression} \label{sec:derivation_EM_step}

In this section, we derive steps 4 and 5 of EM-AMP for max-affine regression (Algorithm \ref{alg:EM-AMP}). We follow an approach similar to the one in \cite{Vil13}, where EM was combined with AMP for compressed sensing with unknown parameters in the signal prior. We adapt their derivation to the MAR model. Recall that 
\begin{align}
    Y_i\
    &=\max\big\{\big\langle X_i,\beta^{(1)}\big\rangle+b_1,\big\langle X_i,\beta^{(2)}\big\rangle+b_2\big\}+\epsilon_i  \nonumber \\
    &=\max\big\{\Theta_i^{(1)}+b_1,\Theta_i^{(2)}+b_2\big\}+\epsilon_i,
    \qquad  i\in[n].
\end{align}
Here $\Theta^{(1)}$ and $\Theta^{(2)}$ are the first and second columns of $\Theta\in\mb{R}^{n\times 2}$ respectively,  and $\Theta_i:=\big(\Theta_i^{(1)},\Theta_i^{(2)}\big) \in \reals^2$. 

The parameter that we would like to estimate using EM is $b=(b_1,b_2)$. Before providing the derivation, we briefly review the main idea behind the EM algorithm. The EM algorithm iteratively produces estimates $b^m \equiv (b_1^m,b_2^m)$ for $m \ge 1$, with the goal of increasing the likelihood $p(Y;b)$ at each iteration, where $Y=[Y_1,\dots,Y_n]^\top$. It achieves this by iteratively increasing a lower bound on $p(Y;b)$, thus guaranteeing that the likelihood converges to a local maximum or at least a saddle point \citep{Wu83}. In our case, for an arbitrary distribution $\hat{p}$ on 
$(\Theta^{(1)},\Theta^{(2)})$  we have
\begin{align}
    \log p(Y;b) 
    &=\int_{\Theta{(1)}}\int_{\Theta^{(2)}} \hat{p}(\Theta^{(1)},\Theta^{(2)}) d\Theta^{(1)}d\Theta^{(2)}\log p(Y;b)\nonumber \\
    &=\int_{\Theta^{(1)}}\int_{\Theta^{(2)}} \hat{p}(\Theta^{(1)},\Theta^{(2)}) d\Theta^{(1)}d\Theta^{(2)}\log \bigg(\frac{p(\Theta^{(1)},\Theta^{(2)},Y;b)}{\hat{p}(\Theta^{(1)},\Theta^{(2)})}\cdot\frac{\hat{p}(\Theta^{(1)},\Theta^{(2)})}{p(\Theta^{(1)},\Theta^{(2)}|Y;b)}\bigg) \nonumber \\
    &\stackrel{(a)}{=}\E_{\Theta^{(1)},\Theta^{(2)} \sim \hat{p}}\big[\log p(\Theta^{(1)},\Theta^{(2)},Y;b)\big]+H(\hat{p})+D(\hat{p}\,\|\,p(\cdot|Y;b)) \label{eq:EM_mid_step}  \\
    &\stackrel{(b)}{\geq}\E_{\Theta^{(1)},\Theta^{(2)} \sim \hat{p} }\big[\log p(\Theta^{(1)},\Theta^{(2)},Y;b)\big]+H(\hat{p})
    :=\mathcal{L}(Y;b), \nonumber
\end{align}
where in (a), $H(\cdot)$ is the Shannon entropy and $D(\cdot\|\cdot)$ the Kullback-Leibler (KL) divergence, and the inequality (b) follows from  the non-negativity of the KL divergence. The EM-algorithm at step $m+1$ iterates over two steps:
\begin{itemize}
    \item In the E-step, we choose $\hat{p}$ to maximize the lower bound $\mathcal{L}(Y;b)$ for fixed $b=b^m$,
    \item In the M-step, we choose $b$ to maximize the lower bound $\mathcal{L}(Y;b)$ for fixed $\hat{p}=\hat{p}^m$.
\end{itemize}
For the E-step, since $\mathcal{L}(Y;b^m)=\log p(Y;b^m)-D(\hat{p}\,\|\,p(\cdot|Y;b^m))$ (via rearranging \eqref{eq:EM_mid_step}), the maximizing probability density function (pdf) would be $\hat{p}^{m}=p(\Theta^{(1)},\Theta^{(2)}|Y;b^m)$. Then, for the M-step, from the definition of $\mathcal{L}(Y;b)$, the maximizing $b$ is: 
\begin{align}
    b^{m+1}=\argmax_{b\in\mb{R}^2} \, \E_{\Theta^{(1)},\Theta^{(2)} \sim p(\Theta^{(1)},\Theta^{(2)}|Y; \, b^m) }\big[\log p\big(\Theta^{(1)},\Theta^{(2)},Y;b\big)\big].
    \label{eq:EM_argmax}
\end{align}
We can further expand the $p(\cdot)$ above as
\begin{align}
    p\big(\Theta^{(1)},\Theta^{(2)},Y;b\big)
    = p\big(\Theta^{(1)},\Theta^{(2)}\big) p\big(Y|\Theta^{(1)},\Theta^{(2)};b\big)
    = p\big(\Theta^{(1)},\Theta^{(2)}\big) \prod_{i=1}^n p(Y_i|\Theta_i;b),
    \label{eq:p_expand}
\end{align}
where $p\big(\Theta^{(1)},\Theta^{(2)}\big)= \prod_{i=1}^n p\big(\Theta^{(1)}_i,\Theta^{(2)}_i\big) $ does not depend on $b$ (recall that $\big(\Theta^{(1)},\Theta^{(2)}\big) = X B$). It is challenging to jointly optimize over $b=(b_1, b_2)$ in \eqref{eq:EM_argmax}, so we update one component of $b$  at a time while holding the other fixed. This is a well known ``incremental'' variant of EM  \citep{Nea98}. Using \eqref{eq:p_expand} and \eqref{eq:EM_argmax}, the updated $b_1$ estimate is
\begin{align}
    b_1^{m+1}
    &=\argmax_{b_1\in\mb{R}}\E\Big[\log\prod_{i=1}^np(Y_i|\Theta_i;b_1,b_2^m) \mid Y;b^m\Big]  \nonumber \\
    &=\argmax_{b_1\in\mb{R}}\sum_{i=1}^n\E\Big[\log p(Y_i|\Theta_i;b_1,b_2^m) \mid Y;b^m \Big] \nonumber  \\
    &=\argmax_{b_1\in\mb{R}}\sum_{i=1}^n\int_{\Theta_i}p(\Theta_i|Y;b^m)\log p(Y_i|\Theta_i;b_1,b_2^m)d\Theta_i \, ,
\end{align}
where we recall that $b^m=(b_1^m,b_2^m)$. At this point, note that
\begin{align}
    p(Y_i|\Theta_i;b_1,b_2^m)\sim \normal\Big(\max\big\{\Theta_i^{(1)}+b_1,\Theta_i^{(2)}+b_2^m\big\},\sigma^2\Big).
    \label{eq:PYi_Th_b1}
\end{align}
To get $b_1^{m+1}$, we need to solve
\begin{align}
    &\frac{\partial}{\partial b_1}\sum_{i=1}^n\int_{\Theta_i}p(\Theta_i|Y;b^m)\log p(Y_i|\Theta_i;b_1,b_2^m)d\Theta_i=0  \nonumber \\
    \iff&\sum_{i=1}^n\int_{\Theta_i}p(\Theta_i|Y;b^m)\frac{\partial}{\partial b_1}\log p(Y_i|\Theta_i;b_1,b_2^m)d\Theta_i=0, \label{eq:deriv_eq}
\end{align}
where we used Leibniz's integral rule to exchange differentiation and integration. Using \eqref{eq:PYi_Th_b1}, the derivative in \eqref{eq:deriv_eq} is
\begin{align}
    \frac{\partial}{\partial b_1}\log p(Y_i|\Theta_i;b_1,b_2^m)
   &=\frac{\partial}{\partial b_1}\log\bigg(\frac{1}{\sigma\sqrt{2\pi}}\exp\bigg(-\frac{1}{2}\bigg(\frac{Y_i-\max\{\Theta_i^{(1)}+b_1,\Theta_i^{(2)}+b_2^m\}}{\sigma}\bigg)^2\bigg)\bigg) \nonumber \\
    &=
    \begin{cases}
    \frac{1}{\sigma^2}(Y_i-\Theta_i^{(1)}-b_1)&\text{ if }\Theta_i^{(1)}+b_1>\Theta_i^{(2)}+b_2^m \\
    0&\text{ if }\Theta_i^{(1)}+b_1\leq\Theta_i^{(2)}+b_2^m
    \end{cases} \, .
\end{align}
Substituting the above back into \eqref{eq:deriv_eq}, we get
\begin{align}
    &\sum_{i:\Theta_i^{(1)}+b_1>\Theta_i^{(2)}+b_2^m}\int_{\Theta_i}p_{Z|Y}(\Theta_i|Y;b^m)\frac{1}{\sigma^2}(Y_i-\Theta_i^{(1)}-b_1)d\Theta_i=0  \nonumber \\
    \iff&\sum_{i:\Theta_i^{(1)}+b_1>\Theta_i^{(2)}+b_2^m}\bigg(Y_i\int_{\Theta_i}p_{Z|Y}(\Theta_i|Y;b^m)d\Theta_i-\int_{\Theta_i}p_{Z|Y}(\Theta_i|Y;b^m)\Theta_i^{(1)}d\Theta_i \nonumber \\
    &\qquad-b_1\int_{\Theta_i}p_{Z|Y}(\Theta_i|Y;b^m)d\Theta_i\bigg)=0 \nonumber \\
    \iff&\sum_{i:\Theta_i^{(1)}+b_1>\Theta_i^{(2)}+b_2^m}\Big(Y_i-\E[\Theta_i^{(1)}|Y;b^m]-b_1\Big)=0.
\end{align}
Rearranging gives
\begin{align}
    b_1
    &=\frac{1}{|\{i:\Theta_i^{(1)}+b_1>\Theta_i^{(2)}+b_2^m\}|}\sum_{i:\Theta_i^{(1)}+b_1>\Theta_i^{(2)}+b_2^m}\Big(Y_i-\E\big[\Theta_i^{(1)}|Y;b^m\big]\Big) \nonumber  \\
    &\approx\frac{1}{|\{i:\Theta_i^{(1)}+b_1>\Theta_i^{(2)}+b_2^m\}|}\sum_{i:\Theta_i^{(1)}+b_1>\Theta_i^{(2)}+b_2^m}Y_i-\E\big[Z_1|\bar{Y};b^m\big], \label{eq:b1_approx}
\end{align}
where we approximate $\E\big[\Theta_i^{(1)}|Y;b^m\big]$ by  $\E\big[Z_1|\bar{Y};b^m\big]$ because computing $\E\big[\Theta_i^{(1)}|Y;b^m\big]$ is intractable. The computation of 
$\E\big[Z_1|\bar{Y};b^m\big]$ is detailed in Appendix \ref{appen:MAR_implementation}.

Note that in \eqref{eq:b1_approx} it is not possible to compute $b_1$ on the LHS while using it in the RHS. An easy (but admittedly non-principled) fix is to just use $b_1^m$ on the RHS. This gives the update:
\begin{align}
    b_1^{m+1}:=\frac{1}{|\{i:\Theta_i^{(1)}+b_1^m>\Theta_i^{(2)}+b_2^m\}|}\sum_{i:\Theta_i^{(1)}+b_1^m>\Theta_i^{(2)}+b_2^m}Y_i-\E\big[Z_1|\bar{Y};b^m\big],
    \label{eq:b1m1_ideal}
\end{align}
where $\Theta_i$ and $\E\big[Z_1|Y;b^m\big]$ can be obtained from AMP in the previous iteration. Similarly, the update for the other intercept is:
\begin{align}
    b_2^{m+1}:=\frac{1}{|\{i:\Theta_i^{(1)}+b_1^m\leq \Theta_i^{(2)}+b_2^m\}|}\sum_{i:\Theta_i^{(1)}+b_1^m\leq\Theta_i^{(2)}+b_2^m}Y_i-\E\big[Z_2|\bar{Y};b^m\big].
    \label{eq:b2m1_ideal}
\end{align}
Since $\Theta^{(1)}, \Theta^{(2)}$ are unknown, we approximate them using AMP iterates. Specifically, the two columns of  $\widehat{\Theta}^{m}=X\widehat{B}^{k_{\max},m}$ provide estimates of $\Theta^{(1)}, \Theta^{(2)}$, respectively. Using these in \eqref{eq:b1m1_ideal} and \eqref{eq:b2m1_ideal} yields Steps 4 and 5 of the EM-AMP in Algorithm \ref{alg:EM-AMP}.


\acks{N.~Tan was supported by the Cambridge Trust and the Harding Distinguished Postgraduate Scholars Programme Leverage Scheme.}


\newpage

\appendix

\section{Implementation Details for MLR} \label{appen:MLR_implementation}

In this appendix, we  consider MLR with two signals and provide the implementation details of matrix-AMP with Bayes-optimal functions (see Proposition \ref{prop:optimal_fk}), for the Gaussian prior and the sparse discrete prior. While the implementation details stated here are for MLR with two signals, it is straightforward to generalize them to the case of three signals, which we have omitted.

\subsection{Gaussian prior} \label{sec:imp_normal_prior}

We start by writing the matrix-AMP algorithm  in  \eqref{eq:GAMP}-\eqref{eq:CkF_k1_def} with more details:

\begin{itemize}
    \item Initialize $\widehat{R}^{-1}=0\in\mathbb{R}^{n\times 2}$, $F_0=I_2$. Next, we initialize the rows of $B^0$ and $\hB^0$ independently using the jointly Gaussian prior.  
    Letting $\bar{B}=(\bar{\beta}^{(1)},\bar{\beta}^{(2)}) \in \reals^2$ be a random variable distributed according to the jointly Gaussian prior, we  initialize:
    \begin{align}
        B_j^0,\widehat{B}_j^0&
        \sim_{\iid} \normal\left(
        \begin{bmatrix}
        \E[\bar{\beta}^{(1)}] \\
        \E[\bar{\beta}^{(2)}]
        \end{bmatrix},
        \begin{bmatrix}
        \Var[\bar{\beta}^{(1)}] & \Cov[\bar{\beta}^{(1)},\bar{\beta}^{(2)}] \\
        \Cov[\bar{\beta}^{(2)},\bar{\beta}^{(1)}] & \Var[\bar{\beta}^{(2)}]
        \end{bmatrix}
        \right), \quad j\in[p],
    \end{align}
    and
    \begin{align}
        \Sigma^0&=\frac{p}{n}
        \begin{bmatrix}
        \E[(\bar{\beta}^{(1)})^2] & \E[\bar{\beta}^{(1)}\bar{\beta}^{(2)}] & (\E[\bar{\beta}^{(1)}])^2 & 
        \E[\bar{\beta}^{(1)}] \E[\bar{\beta}^{(2)}] \\
        \E[\bar{\beta}^{(1)}\bar{\beta}^{(2)}] & \E[(\bar{\beta}^{(2)})^2] & \E[\bar{\beta}^{(1)}] \E[\bar{\beta}^{(2)}] & (\E[\bar{\beta}^{(2)}])^2 \\
        (\E[\bar{\beta}^{(1)}])^2 & 
        \E[\bar{\beta}^{(1)}]\E[\bar{\beta}^{(2)}] & \E[(\bar{\beta}^{(1)})^2] & \E[\bar{\beta}^{(1)}\bar{\beta}^{(2)}] \\
        \E[\bar{\beta}^{(1)}]\E[\bar{\beta}^{(2)}] & (\E[\bar{\beta}^{(2)}])^2 & \E[\bar{\beta}^{(1)}\bar{\beta}^{(2)}] & \E[(\bar{\beta}^{(2)})^2]
        \end{bmatrix}.
    \end{align}
    \item For each iteration of matrix-AMP $k\in\mb{N}_0$, we have the following steps: \label{list:7_steps}
    \begin{enumerate}
        \item Compute $\Theta^k:=X\widehat{B}^k-\widehat{R}^{k-1}F_k^\top$
        \item Compute $\widehat{R}^k:=g_k(\Theta^k,Y)$
        \item Approximate $C^k:=\frac{1}{n}\sum_{i=1}^ng_k'(\Theta_i^k,Y_i)$
        \item Compute $B^{k+1}:=X^\top\widehat{R}^k-\widehat{B}^kC_k^\top$
        \item Approximate $\widehat{B}^{k+1}:=f_{k+1}(B^{k+1})$
        \item Approximate $F^{k+1}:=\frac{1}{n}\sum_{j=1}^pf_{k+1}'(B_j^{k+1})$
        \item Approximate $\Sigma^{k+1}$
    \end{enumerate}
\end{itemize}

The quantities in steps 1, 2, and 4  can be directly computed. The other steps  require some form of numerical approximation (based on  limiting properties of the iterates) to make the computation tractable. We now explain  how the quantities in steps 2, 3, 5, 6, and 7 can be computed or approximated.

\underline{\textbf{Step 2:}} We assume that $\Sigma^k$ has been approximated in the previous iteration. From Proposition \ref{prop:optimal_fk}, for MLR the function $g_k: \reals^2 \times \reals \to \reals$ is given by
 \begin{align}
     g_k(Z^k,\bar{Y})
    =\text{Cov}[Z|Z^k]^{-1}(\mathbb{E}[Z|Z^k,\bar{Y}]-\mathbb{E}[Z|Z^k]).
    \label{eq:gk_MLR}
 \end{align}
 Since $\begin{bmatrix} Z \\ Z^k \end{bmatrix} \sim \normal(0, \Sigma^k)$, using standard properties of Gaussian random vectors we have:
     \begin{align}
    \text{Cov}[Z|Z^k]=\Sigma^k_{(11)}-\Sigma^k_{(12)}(\Sigma^k_{(22)})^{-1}\Sigma^k_{(21)}, \qquad 
    \mathbb{E}[Z|Z^k]=\Sigma^k_{(12)}(\Sigma_{(22)}^k)^{-1}Z^k.
    \label{eq:cov_ZZk}
    \end{align}

To compute $\mathbb{E}[Z|Z^k,\bar{Y}]$, we recall that $Z=(Z_1, Z_2)^\top$  and let $\bar{\Psi} = (\bar{c}, \bar{\epsilon} )$, with $\bar{c} \sim \text{Bernoulli}(\alpha)$ and $\bar{\epsilon} \sim \normal(0, \sigma^2)$ independent.  Then,
    \begin{align}
        Y=q(Z,\bar{\Psi})=Z_1 \bar{c}+Z_2(1-\bar{c})+\bar{\epsilon},
   \label{eq:MLR_opeqn}
    \end{align}
    using which we have that
    \begin{align}
        \E[Z|Z^k,\bar{Y}]
        &= \E[Z|Z^k,\bar{Y},\bar{c}=1] \mb{P}[\bar{c}=1|Z^k,\bar{Y}] \,  + \, \E[Z|Z^k,\bar{Y},\bar{c}=0] \mb{P}[\bar{c}=0|Z^k,\bar{Y}].
        \label{eq:EZZ_kY_exp}
    \end{align}
    We now show how each of the four terms in \eqref{eq:EZZ_kY_exp} can be computed. 
    
 We first find the joint distribution of $(Z,Z^k,\bar{Y}|\bar{c}=1)$, which from \eqref{eq:MLR_opeqn} is jointly Gaussian.  We denote this distribution by $\normal(\boldsymbol{0},\Sigma^{k,1}_Y)$, and proceed to derive $\Sigma^{k,1}_Y \in \reals^{5 \times 5}$. Given a matrix $M \in \reals^{n_1 \times n_2}$, will use the notation
 $M_{[a],[b]}$ to denote the submatrix consisting of the first $a$ rows and first $b$ columns of $M$, and $M_{[a^+],[b^+]}$ to denote the submatrix with rows $\{ a, \ldots, n_1 \}$ and  columns $\{ b, \ldots, n_2 \}$ .
 
 We know from the joint distribution of $(Z,Z^k)$ that $(\Sigma^{k,1}_Y)_{[4],[4]}=\Sigma^k$. Hence, we only need to determine the remaining entries:
    \begin{equation}
    \begin{split}
        (\Sigma^{k,1}_Y)_{5,5}
        &=\Var[\bar{Y}\mid \bar{c}=1]
        =\Var[Z_1+\bar{\epsilon}]
        =\Sigma^k_{11}+\sigma^2, \\
        (\Sigma^{k,1}_Y)_{1,5}
        &=(\Sigma^{k,1}_Y)_{5,1}
        =\Cov[\bar{Y},Z_1|\bar{c}=1]
        =\Cov[Z_1+\bar{\epsilon},Z_1]
        =\Sigma^k_{11}, \\
        (\Sigma^{k,1}_Y)_{2,5}
        &=(\Sigma^{k,1}_Y)_{5,2}
        =\Cov[\bar{Y},Z_2|\bar{c}=1]
        =\Cov[Z_1+\bar{\epsilon},Z_2]
        =\Sigma^k_{12}, \\
        (\Sigma^{k,1}_Y)_{1,3}
        &=(\Sigma^{k,1}_Y)_{3,1}
        =\Cov[\bar{Y},Z^k_1|\bar{c}=1]
        =\Cov[Z_1+\bar{\epsilon},Z^k_1]
        =\Sigma^k_{13}, \\
        (\Sigma^{k,1}_Y)_{1,4}
        &=(\Sigma^{k,1}_Y)_{4,1}
        =\Cov[\bar{Y},Z^k_2|\bar{c}=1]
        =\Cov[Z_1+\bar{\epsilon},Z^k_2]
        =\Sigma^k_{14},
        \end{split}
        \label{eq:Sig_k1_last_row}
    \end{equation}
    where we have used the fact that $(Z,Z^k)$ and $\bar{\epsilon}$ are independent, and the notation $\Sigma^k_{ij}$ refers to the $(i,j)$-th entry of the matrix $\Sigma^k$. This gives
    \begin{align}
        \Sigma^{k,1}_Y=
        \begin{bmatrix}
        \Sigma^k_{11} & \Sigma^k_{12} & \Sigma^k_{13} & \Sigma^k_{14} & \Sigma^k_{11} \\
        \Sigma^k_{21} & \Sigma^k_{22} & \Sigma^k_{23} & \Sigma^k_{24} & \Sigma^k_{21} \\
        \Sigma^k_{31} & \Sigma^k_{32} & \Sigma^k_{33} & \Sigma^k_{34} & \Sigma^k_{31} \\
        \Sigma^k_{41} & \Sigma^k_{42} & \Sigma^k_{43} & \Sigma^k_{44} & \Sigma^k_{41} \\
        \Sigma^k_{11} & \Sigma^k_{12} & \Sigma^k_{13} & \Sigma^k_{14} & \Sigma^k_{11}+\sigma^2
        \end{bmatrix}.
    \end{align}
    From the joint distribution, we can compute
    \begin{align}
        \E[Z|Z^k,\bar{Y},\bar{c}=1]
        =(\Sigma^{k,1}_Y)_{[2],[3^+]}(\Sigma^{k,1}_Y)_{[3^+],[3^+]}^{-1}
        \begin{bmatrix}
        Z^k \\
        \bar{Y}
        \end{bmatrix},
        \label{eq:EZZKc1}
    \end{align}
    where $[3^+]:=\{3,4,5\}$. Using the same approach,  we can determine the joint distribution of $(Z,Z^k,\bar{Y}|\bar{c}=0)$ as  $\normal(\bzero, \Sigma^{k,0}_Y)$, where
    \begin{align}
        \Sigma^{k,0}_Y=
        \begin{bmatrix}
        \Sigma^k_{11} & \Sigma^k_{12} & \Sigma^k_{13} & \Sigma^k_{14} & \Sigma^k_{12} \\
        \Sigma^k_{21} & \Sigma^k_{22} & \Sigma^k_{23} & \Sigma^k_{24} & \Sigma^k_{22} \\
        \Sigma^k_{31} & \Sigma^k_{32} & \Sigma^k_{33} & \Sigma^k_{34} & \Sigma^k_{32} \\
        \Sigma^k_{41} & \Sigma^k_{42} & \Sigma^k_{43} & \Sigma^k_{44} & \Sigma^k_{42} \\
        \Sigma^k_{21} & \Sigma^k_{22} & \Sigma^k_{23} & \Sigma^k_{24} & \Sigma^k_{22}+\sigma^2
        \end{bmatrix}.
     \end{align}
    From this joint distribution, we can compute
    \begin{align}
        \E[Z|Z^k,\bar{Y},\bar{c}=0]
        =(\Sigma^{k,0}_Y)_{[2],[3^+]}(\Sigma^{k,0}_Y)_{[3^+],[3^+]}^{-1}
        \begin{bmatrix}
        Z^k \\
        \bar{Y}
        \end{bmatrix}.
        \label{eq:EZZKc0}
    \end{align}
    
The first conditional probability term in \eqref{eq:EZZ_kY_exp} can be computed as:
    \begin{align}
        \mb{P}[\bar{c}=1|Z^k,\bar{Y}]
        &=\frac{\mb{P}[\bar{c}=1]\mb{P}[Z^k,\bar{Y}|\bar{c}=1]}{\mb{P}[\bar{c}=1]\mb{P}[Z^k,\bar{Y}|\bar{c}=1]+\mb{P}[\bar{c}=0]\mb{P}[Z^k,\bar{Y}|\bar{c}=0]}  \nonumber \\
        &=\frac{\alpha\mb{P}[Z^k,\bar{Y}|\bar{c}=1]}{\alpha\mb{P}[Z^k,\bar{Y}|\bar{c}=1]+(1-\alpha)\mb{P}[Z^k,\bar{Y}|\bar{c}=0]},
        \label{eq:Pcondc1}
    \end{align}
    where given $\bar{c}=1$, we have $(Z^k,\bar{Y})^\top\sim \normal\big(\boldsymbol{0},(\Sigma^{k,1}_Y)_{[3^+],[3^+]}\big)$, and given $\bar{c}=0$, we have $(Z^k,\bar{Y})^\top\sim \normal\big(\boldsymbol{0},(\Sigma^{k,0}_Y)_{[3^+],[3^+]}\big)$.
 Similarly, we have:
    \begin{align}
        \mb{P}[\bar{c}=0|Z^k,\bar{Y}]
        &=\frac{(1-\alpha)\mb{P}[Z^k,\bar{Y}|\bar{c}=0]}{\alpha\mb{P}[Z^k,\bar{Y}|\bar{c}=1]+(1-\alpha)\mb{P}[Z^k,\bar{Y}|\bar{c}=0]},
        \label{eq:Pcondc0}
    \end{align}
    where given $\bar{c}=0
    $, we have $(Z^k,\bar{Y})^\top\sim \normal\big(\boldsymbol{0},(\Sigma^{k,0}_Y)_{[3^+],[3^+]}\big)$.

Using \eqref{eq:EZZKc1}-\eqref{eq:Pcondc0}, we can compute \eqref{eq:EZZ_kY_exp}, which together with the quantities in \eqref{eq:cov_ZZk} allows us to compute $g_k(Z^k,\bar{Y})$ in \eqref{eq:gk_MLR}. Finally,  compute $\widehat{R}^k$ by applying $g_k$ row wise to $\Theta^k$ and $Y$ (i.e., compute $g_k(\Theta_i^k,Y_i)$).

\underline{\textbf{Step 3:}} Recalling that $g_k(Z^k,\bar{Y}) = h_k(Z,Z^k,\bar{\Psi})$, we approximate $C_k=\frac{1}{n}\sum_{i=1}^bg_k'(\Theta_i^k,Y_i)$ by calculating $\E[g_k'(Z^k,\bar{Y})] = \E[\nabla_{Z^k}h_k(Z,Z^k,\bar{\Psi})]$. Here $\nabla_{Z^k} h_k$ denotes the Jacobian with respect to $Z^k$.   We compute the latter expectation by applying the generalized Stein's lemma (see Lemma \ref{lem:gen_steins}) to $(Z,Z^k)^\top\sim \normal(0,\Sigma^k)$ and $h_k(Z,Z^k,\bar{\Psi})$. This gives:
\begin{align}
    \E\left[
    \begin{bmatrix}
    Z \\
    Z^k
    \end{bmatrix}
    h(Z,Z^k,\bar{\Psi})^\top
    \right]
    =\Sigma^k\E\Big[\nabla_{(Z,Z^k)}h_k(Z,Z^k,\bar{\Psi})\Big]^\top 
    \, \in \reals^{4 \times 2}.
\end{align}
Writing the above  explicitly, we have
\begin{align}
    \begin{bmatrix}
    \E[Zh_k(Z,Z^k,\bar{\Psi})^\top] \\
    \E[Z^kh_k(Z,Z^k,\bar{\Psi})^\top]
    \end{bmatrix} 
    &=
    \begin{bmatrix}
    \Sigma^k_{(11)} & \Sigma^k_{(12)} \\
    \Sigma^k_{(21)} & \Sigma^k_{(22)}
    \end{bmatrix}
    \begin{bmatrix}
    \E[\nabla_Z h_k(Z,Z^k,\bar{\Psi})]^\top \\
    \E[\nabla_{Z^k}h_k(Z,Z^k,\bar{\Psi})]^\top
    \end{bmatrix}  \nonumber \\
    &=
    \begin{bmatrix}
    \Sigma^k_{(11)}\E[\nabla_Z h_k(Z,Z^k,\bar{\Psi})]^\top+\Sigma^k_{(12)}\E[\nabla_{Z^k} h_k(Z,Z^k,\bar{\Psi})]^\top \\
    \Sigma^k_{(21)}\E[\nabla_Z h_k(Z,Z^k,\bar{\Psi})]^\top+\Sigma^k_{(22)}\E[\nabla_{Z^k} h_k(Z,Z^k,\bar{\Psi})]^\top
    \end{bmatrix},
\end{align}
where the matrices $\Sigma^k_{(11)}, \Sigma^k_{(12}, \Sigma^k_{(21)}, \Sigma^k_{(22)} \in \reals^{2 \times 2}$ are as defined in \eqref{eq:Sig_comps}.
Looking at just the second row above and rearranging, we obtain:
\begin{align}
    \E[\nabla_{Z^k}h_k(Z,Z^k,\bar{\Psi})]=\Big\{(\Sigma^k_{(22)})^{-1}\Big(\E[Z^kh_k(Z,Z^k,\bar{\Psi})^\top]-\Sigma^k_{(21)}\E[\nabla_Zh_k(Z,Z^k,\bar{\Psi})]^\top\Big)\Big\}^\top.
\end{align}
Here $\E[Z^k h_k(Z,Z^k,\bar{\Psi})^\top]  = 
\E[Z^k  g_k(Z^k, \bar{Y}) ]$ can be approximated by $\frac{1}{n}\langle\Theta^k,g_k(\Theta^k,Y)\rangle$, and $\E[\nabla_Zh_k(Z,Z^k,\bar{\Psi})]$ can be approximated by $\frac{1}{n} g_k(\Theta^k,Y)^\top g_k(\Theta^k,Y)$ (this follows from the equivalent expressions for $\Mu^{k+1}_B$ in \eqref{eq:SE_Mk1B} and \eqref{eq:Mk1_B_alt}, noting that we have used $g_k = g_k^*$).
\label{text:gkpr_approx}

\underline{\textbf{Step 5:}} Since $\bar{B}$ is independent of  $G^{k+1}_B\sim \normal(0,\Tau^{k+1}_B)$, we have that
\begin{align}
    \begin{bmatrix}
    \bar{B} \\
    \Mu^{k+1}_B\bar{B}+G^{k+1}_B
    \end{bmatrix}
    \sim
    \normal\left(
    \begin{bmatrix}
    \E[\bar{B}] \\
    \Mu^{k+1}_B\E[\bar{B}]
    \end{bmatrix},
    \begin{bmatrix}
    \Cov[\bar{B}] & \Cov[\bar{B}](\Mu^{k+1}_B)^\top \\
    \Mu^{k+1}_B\Cov[\bar{B}] & \Mu^{k+1}_B\Cov[\bar{B}](\Mu^{k+1}_B)^\top+\Tau^{k+1}_B
    \end{bmatrix}
    \right)
\end{align}
This implies that
\begin{align}
    &f_{k+1}(\Mu^{k+1}_B\bar{B}+G^{k+1}_B=:s)
    =\E[\bar{B}|s] \nonumber  \\
    & =
    \E[\bar{B}]
    +\Cov[\bar{B}](\Mu^{k+1}_B)^\top\Big(\Mu^{k+1}_B\Cov[\bar{B}](\Mu^{k+1}_B)^\top+\Tau^{k+1}_B\Big)^{-1}
    \Big(s-\Mu^{k+1}_B\E[\bar{B}]
    \Big). \label{eq:f_k+1 formula}
\end{align}
We can use the above function to compute $f_{k+1}(B^{k+1}_j)$ if we can approximate $\Mu^{k+1}_B$ and $\Tau^{k+1}_B$ (which is the same as $\Mu^{k+1}_B$ under the Bayes-optimal choices, by \eqref{eq:SE_Tk1B} and \eqref{eq:Mk1_B_alt}). Using \eqref{eq:Mk1_B_alt}, this can be calculated using
\begin{align}
   \Tau^{k+1}_B =  \Mu^{k+1}_B\approx\frac{1}{n}g_k(\Theta^k,Y)^\top g_k(\Theta^k,Y). \label{eq:MU_k^B approx}
\end{align}

\underline{\textbf{Step 6:}} The expression for this can be obtained by taking the derivative of \eqref{eq:f_k+1 formula} w.r.t.~$s$, which gives
\begin{align}
    f_{k+1}'(s)=\Big(\Mu^{k+1}_B\Cov[\bar{B}](\Mu^{k+1}_B)^\top+\Tau^{k+1}_B\Big)^{-1}\Mu^{k+1}_B\Cov[\bar{B}],
\end{align}
where $\Mu^{k+1}_B$ and $\Tau^{k+1}_B$ can be approximated using \eqref{eq:MU_k^B approx}.

\underline{\textbf{Step 7:}} Using the formulas in \eqref{eq:SE_Sigk1}-\eqref{eq:Sig_comps} for $\Sigma^{k+1}$ and noting that $f_{k+1}$ is a conditional expectation, the covariance  $\Sigma^{k+1}$ can be approximated as 
\begin{align}
    \Sigma^{k+1}
    \approx\frac{p}{n}
    \begin{bmatrix}
    \Sigma^k_{(11)} & \frac{1}{p}f_{k+1}(B^{k+1})^\top f_{k+1}(B^{k+1}) \\
    \frac{1}{p}f_{k+1}(B^{k+1})^\top f_{k+1}(B^{k+1}) & \frac{1}{p}f_{k+1}(B^{k+1})^\top f_{k+1}(B^{k+1})
    \end{bmatrix}.
\end{align}

\subsection{Sparse Discrete Prior}

As described in Appendix \ref{sec:imp_normal_prior}, there are seven main steps in the AMP algorithm. A change in the prior requires us to make changes to our denoiser $f_k$ which affects steps 5, 6, and 7; the other steps remain unchanged. The changes are as follows, for the Bayes-optimal and soft-thresholding choices for the denoiser $f_k$.

\textbf{Bayes-optimal $f_k$}:

\underline{\textbf{Step 5:}} We have
\begin{align}
    f_{k+1}(\Mu^{k+1}_B\bar{B}+G^{k+1}_B=:s)
    =\E[\bar{B}|s] 
     =\frac{\sum_{\bar{b}}\bar{b} \, \mb{P}[\bar{B} =\bar{b}]\mb{P}[s|\bar{B}=\bar{b}]}{\sum_{\bar{b}}\mb{P}[\bar{B}=\bar{b}]\mb{P}[s|\bar{B}=\bar{b}]},
\end{align}
where $(s|\bar{B}=\bar{b}) \sim \normal(\Mu^{k+1}_B\bar{b},\Tau^{k+1}_B)$, i.e.,  $\mb{P}[s|\bar{B}=\bar{b}]$ is the bivariate Gaussian pdf with mean vector $\Mu^{k+1}_B\bar{b}$ and covariance matrix $\Tau^{k+1}_B$.

\underline{\textbf{Step 6:}} In the following, for brevity we write $f \equiv f_{k+1}$, with $f_1, f_2$ denoting its two components. By the definition of a Jacobian, we have
\begin{align}
    \nabla_s f(\Mu^{k+1}_B\bar{B}+G^{k+1}_B=s)=
    \begin{bmatrix}
    \frac{\partial f_1}{\partial s_1} & \frac{\partial f_1}{\partial s_2} \\
    \frac{\partial f_2}{\partial s_1} & \frac{\partial f_2}{\partial s_2}
    \end{bmatrix}
    =\begin{bmatrix}
    (\nabla_s f_1)^\top \\
    (\nabla_s f_2)^\top
    \end{bmatrix}
\end{align}
To compute $ \nabla_s f_1(s)$, letting $\bar{b} = [\bar{b}^{(1)}, \bar{b}^{(2)}]^{\top}$, we write 
\begin{align}
    f_1(s)
    =\frac{\sum_{\bar{b}} \bar{b}^{(1)} \,  \mb{P}[\bar{B}=\bar{b}]\mb{P}[s|\bar{B}=\bar{b}]}{\sum_{\bar{b}}\mb{P}[\bar{B}=\bar{b}]\mb{P}[s|\bar{B}=\bar{b}]}
    =:\frac{\text{num}_1}{\text{denom}_1}.
    \label{eq:quot_rule}
\end{align}
By the quotient rule for functions with a vector input and an output in $\mb{R}$, we have
\begin{align}
    \nabla_s f_1(s)
    =\frac{(\nabla_s\text{num}_1)(\text{denom}_1)-(\text{num}_1)(\nabla_s \text{denom}_1)}{\text{denom}_1^2}
\end{align}
Since $\mb{P}[s|\bar{B}=\bar{b}]$ is the bivariate Gaussian pdf with mean $\Mu^{k+1}_B\bar{b}$ and covariance matrix $\Tau^{k+1}_B$, we have that
\begin{align}
     \nabla_s \mb{P}[s|\bar{B}=\bar{b}]
    & =\nabla_s \bigg(\frac{\exp\{-\frac{1}{2}(s-\Mu^{k+1}_B\bar{b})^\top(\Tau^{k+1}_B)^{-1}(s-\Mu^{k+1}_B\bar{b})\}}{\sqrt{\text{det}(2\pi \Tau^{k+1}_B)}}\bigg) \nonumber \\
    &=(\Tau^{k+1}_B)^{-1}\big(\Mu^{k+1}_B\bar{b}-s\big)\mb{P}[s|\bar{B}=\bar{b}]
\end{align}
Using the above equation, we get
\begin{align}
    \nabla_s \text{num}_1
    &=\sum_{\bar{b}}\bar{b}^{(1)}(\Tau^{k+1}_B)^{-1}\big(\Mu^{k+1}_B\bar{b}-s\big)\mb{P}[\bar{B}=\bar{b}]\mb{P}[s|\bar{B}=\bar{b}], \nonumber \\
    \nabla_s \text{denom}_1
    &=\sum_{\bar{b}}(\Tau^{k+1}_B)^{-1}\big(\Mu^{k+1}_B\bar{b}-s\big)\mb{P}[\bar{B}=\bar{b}]\mb{P}[s|\bar{B}=\bar{b}],
\end{align}
using which $\nabla_s f_1(s)$ can be computed using \eqref{eq:quot_rule}.
The Jacobian $\nabla_s f_2(s)$ can be similarly computed.

\textbf{Soft-Thresholding $f_k$}:

\underline{\textbf{Step 5:}} We can directly compute the function in \eqref{eq:ST_denoiser}.

\underline{\textbf{Step 6:}} We can directly compute the Jacobian as shown in \eqref{eq:ST_gradient}.

\underline{\textbf{Step 7:}} We observe that the unlike the Bayes-optimal case, we no longer have the equality $\E[\bar{B}f_{k+1}(\Mu^{k+1}_B\bar{B}+G^{k+1}_B)^\top]=\E[f_{k+1}(\Mu^{k+1}_B\bar{B}+G^{k+1}_B)f_{k+1}(\Mu^{k+1}_B\bar{B}+G^{k+1}_B)^\top]$. Hence, we need to compute $\E[\bar{B}f_{k+1}(\Mu^{k+1}_B\bar{B}+G^{k+1}_B)^\top]$ separately. To do so, we evaluate each entry of $\E[\bar{B}f_{k+1}(\Mu^{k+1}_B\bar{B}+G^{k+1}_B)^\top]$ separately. We start by substituting the definitions of $\bar{B}$ and $f_{k+1}$, with $\bar{B}=(\bar{\beta}^{(1)},\bar{\beta}^{(2)})$. This gives:
\begin{align}
    &\{\bar{B}f_{k+1}^\top\}_{11} \nonumber =
    \bar{\beta}^{(1)}\text{ST}\Big(\{\bar{B}+(\Mu^{k+1}_B)^{-1}G^{k+1}_B\}_{1};\alpha\sqrt{\{N_B^{k+1}\}_{11}}\Big) \nonumber \\
    &=\bar{\beta}^{(1)}\text{ST}\Big(\bar{\beta}^{(1)}+\{(\Mu^{k+1}_B)^{-1}\}_{11}\{G^{k+1}_B\}_{1}+\{(\Mu^{k+1}_B)^{-1}\}_{12}\{G^{k+1}_B\}_2;\alpha\sqrt{\{N_B^{k+1}\}_{11}}\Big).
\end{align}
Expanding the function out and taking expectations over $\bar{\beta}^{(1)}$ and $G^{k+1}_B$, we get
\begin{align}
    \E[\{\bar{B}f_{k+1}^\top\}_{11}]
    &=
    \begin{cases}
    \varepsilon &\text{if $|\{\bar{B}+(\Mu^{k+1}_B)^{-1}G^{k+1}_B\}_1|>\alpha\sqrt{\{(\Mu^{k+1}_B)^{-1}\Tau^{k+1}_B(\Mu^{k+1}_B)^{-1}\}_{11}}$} \\
    0 &\text{otherwise}
    \end{cases}. \label{eq:entry_02}
\end{align}
Following a similar set of steps for the other entries, we have
\begin{align}
    \E[\{\bar{B}f_{k+1}^\top\}_{22}]
    &=
    \begin{cases}
    \varepsilon &\text{if $|\{\bar{B}+(\Mu^{k+1}_B)^{-1}G^{k+1}_B\}_2|>\alpha\sqrt{\{(\Mu^{k+1}_B)^{-1}\Tau^{k+1}_B(\Mu^{k+1}_B)^{-1}\}_{22}}$} \\
    0 &\text{otherwise}
    \end{cases},
    \label{eq:entry_22}
\end{align}
and
\begin{align}
    \E[\{\bar{B}f_{k+1}^\top\}_{12}]
    =\E[\{\bar{B}f_{k+1}^\top\}_{21}]
    =0.
\end{align}
In our algorithm, we do not have access to $\bar{B}+(\Mu^{k+1}_B)^{-1}G^{k+1}_B$, but have  $B^{k+1}$ whose empirical distribution (of rows) converges to $\bar{B}+(\Mu^{k+1}_B)^{-1}G^{k+1}_B$. We can therefore estimate the expectations in \eqref{eq:entry_02} and \eqref{eq:entry_22} by evaluating the right side for each row  $B^{k+1}_j$ and taking the average. For example, we compute $\E[\{\bar{B}f_{k+1}^\top\}_{11}]$ by evaluating \eqref{eq:entry_02} for each of the $p$ rows of $B^{k+1}$  and taking the average.

\section{Implementation Details for MAR} \label{appen:MAR_implementation}

Changing the matrix GLM model $q( \cdot , \cdot )$ only affects the denoising  function $g_k$ in AMP. Hence, in the seven-step implementation described in Appendix \ref{sec:imp_normal_prior}, the only  change is in the computation of $g_k$ in steps 2 and 3. (The Jacobian $g_k'$ in step 3 is approximated using $g_k$, as described on p. \pageref{text:gkpr_approx}.)  
Recall that
\begin{align}
    g_k(Z^k,\bar{Y})
    =\Cov[Z|Z^k]^{-1}\big(\E[Z|Z^k,\bar{Y}]-\E[Z|Z^k]\big).
\end{align}
Note that $\Cov[Z|Z^k]$ and $\E[Z|Z^k]$ are the same as that for mixed linear regression (with the formulas given in \eqref{eq:cov_ZZk}), so we only need to evaluate $\E[Z|Z^k,\bar{Y}] \in \reals^2$. This will be approximated using a Monte Carlo approach which we now  describe. We have
\begin{align}
    \E[Z|Z^k=u,\bar{Y}=y]
    &=\frac{\int z \, p_{Z^k}(u)p_{Z|Z^k}(z|u)p_{\bar{Y}|Z,Z^k}(y|z,u)dz}{\int p_{Z^k}(u)p_{Z|Z^k}(z|u)p_{\bar{Y}|Z,Z^k}(y|z,u)dz} \nonumber \\
    &=\frac{\E_{Z|Z^k=u}[Z \, p_{Z^k}(u)p_{\bar{Y}|Z,Z^k}(y|Z,u)]}{\E_{Z|Z^k=u}[ \, p_{Z^k}(u)p_{\bar{Y}|Z,Z^k}(y|Z,u)]}, \label{eq:MC_formula1}
\end{align}
where given $Z^k=u$ and $\bar{Y}=y$, the probability density functions inside the expectations can be easily evaluated since $Z^k\sim\normal(0,\Sigma^k_{(22)})$, and for $\sigma>0$ and $z=(z_1,z_2)$,
\begin{align}
    p_{\bar{Y}|Z,Z^k}(y|z,u)
    &=p_{\bar{Y}|Z}(y|z) \nonumber \\
    &=
    \begin{cases}
    \phi_{\normal}\Big(\frac{y-z_1-b_1}{\sigma}\Big), \text{ if $z_1+b_1>z_2+b_2$}  \\
    \phi_{\normal}\Big(\frac{y-z_2-b_2}{\sigma}\Big), \text{ otherwise}
    \end{cases}, \label{eq:p_Ybar_given_Z_Zk}
\end{align}
where $\phi_{\normal}$ is the standard Gaussian density. With the above density functions, we can approximate the numerator and denominator of \eqref{eq:MC_formula1} by sampling $z$'s from
\begin{align}
    (Z|Z^k=u)
    \sim
    \normal\Big(\Sigma^k_{(12)}\big(\Sigma^k_{(22)}\big)^{-1}u, \, \Sigma^k_{(11)}-\Sigma^k_{(12)}\big(\Sigma^k_{(22)}\big)^{-1}\Sigma^k_{(21)}\Big), 
\end{align}
and then taking the averages of the functions inside the expectations. 

Additionally, the EM part of the EM-AMP algorithm requires $\E[Z|\bar{Y}]$. This is computed using Monte Carlo in a similar fashion to $\E[Z|Z^k,\bar{Y}]$:
\begin{align}
    \E[Z|\bar{Y}=y ]
    &=\frac{\int z \, p_Z(z)p_{\Bar{Y}|Z}(y|z)dz}{\int p_Z(z)p_{\Bar{Y}|Z}(y|zdz}
    =\frac{\E_Z[Z \, p_{\Bar{Y}|Z}(y|Z)]}{\E_Z[p_{\Bar{Y}|Z}(y|Z)]}, \label{eq:MC_formula2}
\end{align}
where $p_{\Bar{Y}|Z}(y|z)$ is given in  \eqref{eq:p_Ybar_given_Z_Zk}. We can approximate the numerator and denominator of \eqref{eq:MC_formula2} by sampling $z$'s from $Z\sim\normal(0,\Sigma^k_{(11)})$ and then taking the averages of the functions inside the expectations.  

In the $m$th iteration  of EM-AMP, the expectations in \eqref{eq:MC_formula1} and \eqref{eq:MC_formula2} are computed by using the current intercept estimates $(b^m_1, b^m_2)$ in the formula for $p_{\Bar{Y}|Z}$ in \eqref{eq:p_Ybar_given_Z_Zk}.

\section{Implementation Details for MOE} \label{appen:MOE_implementation}

The changes required here are similar to those of MAR stated in Appendix \ref{appen:MAR_implementation}. In the seven-step implementation described in Appendix \ref{sec:imp_normal_prior}, the only  change is in  the computation of $g_k$ in steps 2 and 3.  Recall that
\begin{align}
    g_k(Z^k,\bar{Y})
    =\Cov[Z|Z^k]^{-1}\big(\E[Z|Z^k,\bar{Y}]-\E[Z|Z^k]\big).
\end{align}
Note that $\Cov[Z|Z^k]$ and $\E[Z|Z^k]$ are the same as that for mixed linear regression (with the formulas given in \eqref{eq:cov_ZZk}), so we only need to evaluate $\E[Z|Z^k,\bar{Y}] \in \reals^4$. This will be approximated using the same Monte Carlo approach as MAR, by writing 
\begin{align}
    \E[Z|Z^k=u,\bar{Y}=y]
    &=\frac{\E_{Z|Z^k=u}[Z \, p_{Z^k}(u)p_{\bar{Y}|Z,Z^k}(y|Z,u)]}{\E_{Z|Z^k=u}[ \, p_{Z^k}(u)p_{\bar{Y}|Z,Z^k}(y|Z,u)]}, \label{eq:MC_formula3}
\end{align}
where given $Z^k=u$ and $\bar{Y}=y$, the probability density functions inside the expectations can be easily evaluated since $Z^k\sim\normal(0,\Sigma^k_{(22)})$, and for $\sigma>0$ and $z=(z_1,z_2,z_3,z_4)$,
\begin{align*}
    p_{\bar{Y}|Z,Z^k}(y|z,u)
    &=p_{\bar{Y}|Z}(y|z)
    =\int_0^1p_{\bar{Y},\bar{\psi}|Z}(y,v|z)dv
    \stackrel{(a)}{=}\int_0^1p_{\bar{\psi}}(v)p_{\bar{Y}|\bar{\psi},Z}(y|v,z)dv \\
    &\stackrel{(b)}{=}\int_0^{\frac{\exp(z_3)}{\exp(z_3)+\exp(z_4)}}p_{\bar{\psi}}(v)\phi_\normal\Big(\frac{y-z_1}{\sigma}\Big) dv+\int_{\frac{\exp(z_3)}{\exp(z_3)+\exp(z_4)}}^1p_{\bar{\psi}}(v)\phi_\normal\Big(\frac{y-z_2}{\sigma}\Big)dv \\
    &\stackrel{(c)}{=}\frac{\exp(z_3)}{\exp(z_3)+\exp(z_4)}\phi_\normal\Big(\frac{y-z_1}{\sigma}\Big)+\Big(1-\frac{\exp(z_3)}{\exp(z_3)+\exp(z_4)}\Big)\phi_\normal\Big(\frac{y-z_2}{\sigma}\Big),
\end{align*}
where $\phi_{\normal}$ is the standard Gaussian density.
Here (a) uses the independence between $\bar{\psi}$ and $Z$ (see assumption in sentence below \eqref{eq:hk_def}), (b) uses the fact that $\bar{Y}=Z_1+\bar{\epsilon}$ when $\bar{\psi}\leq\frac{\exp(Z_3)}{\exp(Z_3)+\exp(Z_4)}$ and $\bar{Y}=Z_2+\bar{\epsilon}$ when $\bar{\psi}>\frac{\exp(Z_3)}{\exp(Z_3)+\exp(Z_4)}$, (c) uses $p_{\bar{\psi}}(v)=1$ for all $v\in[0,1]$ since $\bar{\psi}\sim\text{Uniform}[0,1]$. With the above density functions, we can approximate the numerator and denominator of \eqref{eq:MC_formula3} by sampling $z$'s from
\begin{align}
    (Z|Z^k=u)
    \sim
    \normal\Big(\Sigma^k_{(12)}\big(\Sigma^k_{(22)}\big)^{-1}u, \, \Sigma^k_{(11)}-\Sigma^k_{(12)}\big(\Sigma^k_{(22)}\big)^{-1}\Sigma^k_{(21)}\Big), 
\end{align}
and then taking the averages of the functions inside the expectations. 

\vskip 0.2in
{\small{\bibliography{main}}}

\end{document}